\documentclass{article}

\usepackage[utf8]{inputenc} % allow utf-8 input
\usepackage[T1]{fontenc}    % use 8-bit T1 fonts
\usepackage{hyperref}       % hyperlinks
\usepackage{url}            % simple URL typesetting
\usepackage{booktabs}       % professional-quality tables
\usepackage{amsfonts}       % blackboard math symbols
\usepackage{nicefrac}       % compact symbols for 1/2, etc.
\usepackage{microtype}      % microtypography
\usepackage{xcolor}         % colors

\usepackage{authblk}

\usepackage{fullpage}
\usepackage{amsmath,amsfonts}
\usepackage{amsthm}
\usepackage{natbib}
\usepackage{hyperref}

\usepackage[ruled,vlined,linesnumbered]{algorithm2e}
\usepackage{xcolor}
\usepackage{enumitem}
\usepackage{todonotes}

\SetKwInput{KwInput}{Input}
\SetKwInput{KwReturn}{Return}

\newcommand{\rE}{{\mathbb E}}

\newcommand{\rR}{{\mathbb R}}

\newcommand{\KL}{{\mathrm{KL}}}

\newcommand{\trace}{{\mathrm{trace}}}

\newcommand{\mI}{{\mathrm I}}
\newcommand{\vz}{{\mathbf z}}

\newtheorem{lemma}{Lemma}

\newtheorem{theorem}{Theorem}
\newtheorem{proposition}{Proposition}

\newtheorem{definition}{Definition}
\newtheorem{corollary}{Corollary}

\newcommand*\lrb[1]{\left[#1\right]}
\newcommand*\lrbb[1]{\left\{#1\right\}}
\newcommand*\lrp[1]{\left(#1\right)}
\newcommand*\lrn[1]{\left\|#1\right\|}
\newcommand*\lrw[1]{\left\langle#1\right\rangle}

\newcommand*\Ep[2]{\mathbb{E}_{#1}\left[#2\right]}

\title{When is the Convergence Time of Langevin Algorithms Dimension Independent? A Composite Optimization Viewpoint}

\author[1]{Yoav Freund$^{\diamond}$
%\thanks{yfreund@ucsd.edu}
}

\author[1]{Yi-An Ma$^{\diamond}$
%\thanks{yianma@ucsd.edu}
}

\author[2,3]{Tong Zhang$^{\diamond}$
%\thanks{tongzhang@tongzhang-ml.org}
}

\affil[1]{University of California, San Diego}

\affil[2]{Google Research}

\affil[3]{The Hong Kong University of Science and Technology}

\date{}

\begin{document}

\maketitle

\begin{abstract}
There has been a surge of works bridging MCMC sampling and optimization, with a specific focus on translating non-asymptotic convergence guarantees for optimization problems into the analysis of Langevin algorithms in MCMC sampling.
A conspicuous distinction between the convergence analysis of Langevin sampling and that of optimization is that all known convergence rates for Langevin algorithms depend on the dimensionality of the problem, whereas the convergence rates for optimization are dimension-free for convex problems. Whether a dimension independent convergence rate can be achieved by Langevin algorithm is thus a long-standing open problem.
This paper provides an affirmative answer to this problem for large classes of either Lipschitz or smooth convex problems with normal priors. By viewing Langevin algorithm as composite optimization, we develop a new analysis technique that leads to dimension independent convergence rates for such problems.
\end{abstract}

\section{Introduction}
Two of the major themes in machine learning are point prediction and uncertainty quantification.
Computationally, they manifest in two types of algorithms: optimization and Markov chain Monte Carlo (MCMC).
While both strategies have developed relatively separately for decades, there is a recent trend in relating both strands of research and translating nonasymptotic convergence guarantees in gradient based optimization methods to those in MCMC~\cite{Dalalyan_JRSSB,Moulines_ULA, Dalalyan_user_friendly,Xiang_underdamped,Xiang_overdamped,dwivedi2018log, Mangoubi1,Mangoubi2,Eberle_HMC,Variance_Reduction_theory,Vempala_ULA,Sampling_as_optimization}.
In particular, the Langevin sampling algorithm~\cite{Langevin_origin,MALA,durmus2017} has been shown to be a form of gradient descent on the space of probabilities~\cite{JKO,Sampling_as_optimization,Bernton_JKO,durmus2019analysis}.
Many convergence rates on Langevin algorithm have emerged thenceforward, based on different assumptions on the posterior distribution~\citep[e.g.,][to list a few]{Dalalyan_underdamped,MCMC_nonconvex,YinTat_Vempala,Xiang_Nonconvex,shen2019randomized,Yian_underdamped,MCMC_higher,Quanquan2019}.

A conspicuous distinction between the convergence analysis of Langevin sampling and that of gradient descent is that all known convergence rates for Langevin algorithms depend on the dimensionality of the problem, whereas the convergence rates for gradient descent are dimension-free for convex problems.
This prompts us to ask:
\begin{center}
{\it Can Langevin algorithm achieve dimension independent convergence rate under the usual convex assumptions?}
\end{center}
% In particular, under the same assumption used for gradient descent, will Langevin algorithm achieve the exact same convergence rate?

In order to answer this question formally, we make two assumptions on the negative log-likelihood function.
One is that the negative log-likelihood is convex.
Another is that the negative log-likelihood is either Lipschitz continuous or Lipschitz smooth.
%No assumption is made on the smoothness of $U$.
We also employ a known and tractable prior distribution that is strongly log-concave---often times taken to be a normal distribution---to serve as a parallel to the $L_2$ regularizer in gradient descent.

Under such assumptions, we answer the above highlighted question in the affirmative.
In particular, we prove that a Langevin algorithm converges similarly as convex optimization for this class of problems.
In the analysis, we observe that the number of gradient queries required for the algorithm to converge does not depend on the dimensionality of the problem for either the Lipschitz continuous log-likelihood or the Lipschitz smooth log-likelihood equipped with a ridge separable structure.

To obtain this result, we first follow recent works (reference~\cite{durmus2019analysis} in particular) and formulate the posterior sampling problem as optimizing over the Kullback-Leibler (KL) divergence, which is composed of two terms: (regularized) entropy and cross entropy.
We then decompose the Langevin algorithm into two steps, each optimizing one part of the objective function.
With a strongly convex and tractable prior, we explicitly integrate the diffusion along the prior distribution, optimizing the regularized entropy; whereas gradient descent over the convex negative log-likelihood optimizes the cross entropy.
Via analyzing an intermediate quantity in this composite optimization procedure, we achieve a tight convergence bound that corresponds to the gradient descent's convergence for convex optimization on the Euclidean space.
% for general convex problems with a flat prior.
% Langevin algorithm enjoys the same convergence rate as gradient descent in this case, where the dimension dependence can only be contributed by initial error.
% With a normal prior, we remove this error by explicitly integrating the diffusion along the prior distribution.
This dimension independent convergence time for Lipschitz continuous log-likelihood and Lipschitz smooth log-likelihood endowed with a ridge separable structure carries over to the stochastic versions of the Langevin algorithm.
% We also analyze a stochastic version of the algorithm which achieves the same convergence behavior.
% Furthermore, we show that an extension of this analysis can be applied to the case that $U$ is smooth instead of Lipschitz. The analysis of Langevin algorithm is again similar to the corresponding analysis in convex optimization.

\section{Preliminaries}
\subsection{Two Problem Classes}
We consider sampling from a posterior distribution over parameter $w\in\rR^d$, given the data set $\vz$:
\[
p(w|\vz) \propto p(\vz|w) \pi(w)
\propto \exp\lrp{-U(w)},
\]
where the potential function $U$ decomposes into two parts: $U(w) = \beta^{-1} \lrp{f(w)+g(w)}$.

While the formulation is general, in the machine learning setting, $f(w)$ usually corresponds to the negative log-likelihood, and $g(w)$ corresponds to the negative log-prior. The parameter $\beta$ is the temperature, which often takes the value of $1/n$ in machine learning, where $n$ is the number of training data. The key motivation to consider this decomposition is that we assume that $g$ is ``simple'' so that an SDE involving $g$ can be solved to high precision. We will take advantage of this assumption in our algorithm design.

% where the potential function $U$ is composed of two parts, one from the likelihood:
% \[
% p(\vz|w) \propto \exp\lrp{-\beta^{-1} f(w)},
% \]
% and another from the prior:
% \[
% \pi(w) \propto \exp\lrp{-\beta^{-1} g(w)}.
% \]
% We assume that the negative log-likelihood $l$ is non-strongly convex and $G$-Lipschitz continuous.

\paragraph{Assumption on function $g$}
\begin{enumerate}[label=\rm{A0}]
    \item We assume that function $g$ is $m$-strongly convex ($g(w)-\frac{m}{2}\lrn{w}^2$ is convex)~\footnote{We also say that the density proportional to $\exp\lrp{-\beta^{-1}g(w)}$ is $\beta^{-1}m$-strongly log-concave in this case.} and can be explicitly integrated. \label{assumption:g}
\end{enumerate}

\paragraph{Assumption on function $f$}
We assume that function $f$ is convex (Assumption~\ref{assumption:1}) and consider two cases regarding its regularity.
\begin{itemize}
    \item In the first case, we assume that function $f$ is $G$-Lipschitz continuous (Assumption~\ref{assumption:Lip}).
    \item In the second case, we assume that function $f$ is $L$-Lipschitz smooth (Assumption~\ref{assumption:2}). We then instantiate the result by endowing it with a ridge separable structure (Assumptions~\ref{assumption:ridge1} and~\ref{assumption:ridge2}).
\end{itemize}

The first case stems from Bayesian classification problems, where one has a simple strongly log-concave prior and a log-concave and log-Lipschitz likelihood that encodes the complexity of the data.
Examples include Bayesian neural networks for classification tasks~\cite{Neal_BNN}, Bayesian logistic regression~\cite{Bayes_Logit}, as well as other Bayesian classification problems~\cite{Bayes_SVM} with Gaussian or Bayesian elastic net priors.
The second case corresponds to the regression type
%and uncertainty quantification
problems, where the entire posterior is strongly log-concave and log-Lipschitz smooth.
%, and contains a ridge separable log-likelihood.
In this case, one can separate the negative log-posterior into two parts: $\beta^{-1} g(w) = \frac{\beta^{-1}m}{2} \lrn{w}^2$ and $\beta^{-1} f(w) = \lrp{ -\log p(w|\vz) - \frac{\beta^{-1}m}{2} \lrn{w}^2 }$, which is convex and $\beta^{-1} L$-Lipschitz smooth.
We therefore directly let $g(w) = \frac{m}{2} \lrn{w}^2$ in Section~\ref{sec:smooth}.

% For example, in Bayesian logistic regression with parameter $w\in \rR^d$, and data $x_i\in \rR^d, y_i\in\lrbb{0,1}$,
% \begin{align*}
% \mathbb{P} \lrp{ y_i = 1 | x_i ; w } = \frac{1}{ 1 + \exp\lrp{ - x_i^\top w } }.
% \end{align*}
% With a prior distribution $p(w)$, the posterior can be expressed as
% \begin{align*}
% p( w | {(y_i,x_i)}_{i=1}^n ) \propto p (w) p ( {(y_i,x_i)}_{i=1}^n | w ) = p(w) \prod_{i=1}^n \frac{ \exp\lrp{ y_i \cdot x_i^\top w } }{ 1 + \exp\lrp{ x_i^\top w } }.
% \end{align*}
% The loss function or the negative log density
% \[
% \ell(w) = \sum_{i=1}^n \lrp{ \log\lrp{ 1 + \exp\lrp{ x_i^\top w } } - y_i \cdot x_i^\top w },
% \]
% leading to the gradient
% \[
% \nabla \ell(w) = \sum_{i=1}^n \lrp{ \frac{ \exp\lrp{ x_i^\top w } }{ 1 + \exp\lrp{ x_i^\top w } } - y_i } x_i.
% \]
% It can be checked that $\lrn{ \nabla \ell(w) }_2 \leq \sum_{i=1}^n \lrn{x_i}_2$.
% Taking a normal prior $p(w)\propto\exp\lrp{-\frac{m}{2}\lrn{w}_2^2}$, we arrive at a posterior that satisfies our assumptions.
% Similar structures arise in various applications such as other generalized linear models and Bayesian neural networks for classification tasks.

\subsection{Objective Functional and Convergence Criteria}
We take the KL divergence $\beta^{-1} Q(p)$ to be our objective functional and solve the following optimization problem:
\begin{align}
p_* = & \arg\min_{p} Q(p), \label{eq:opt_ensemble}\\
&Q\lrp{p} = \int p(w) \ln\frac{p}{p(w|\vz)} d w = \rE_{w\sim p}\lrb{f(w) + g(w) + \beta\ln p(w)}.\nonumber
\end{align}
The minimizer that solves the optimization problem~\eqref{eq:opt_ensemble} is the posterior distribution:
\begin{align}
p_*(w)\propto\exp\lrp{-\beta^{-1} (f(w) + g(w))}.
\label{eq:expo}
\end{align}

We further define the entropy functional as
\[
H(p) = \beta\; \rE_{w \sim p} \ln p(w),
\]
so that the objective functional decomposes into the regularized entropy plus cross entropy:
\[
Q(p) = \big( H(p) + \rE_{w\sim p}\lrb{g(w)} \big) + \rE_{w\sim p}\lrb{f(w)}.
\]
With this definition of the objective function, we state that the difference in $Q$ leads to the KL divergence.
\begin{proposition}
Let $p$ be the solution of \eqref{eq:opt_ensemble}, and $p'$ be another distribution on $w$. We have
\[
\KL(p'\|p) = \beta^{-1} [ Q(p')-Q(p)] .
\]
\label{prop:KL}
\end{proposition}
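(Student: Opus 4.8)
The plan is to treat this as the standard Gibbs variational (free-energy) identity, exploiting the explicit form of the minimizer recorded in \eqref{eq:expo}. Write $Z = \int \exp\lrp{-\beta^{-1}\lrp{f(w)+g(w)}}\,dw$ for the normalizing constant, so that $p(w) = Z^{-1}\exp\lrp{-\beta^{-1}\lrp{f(w)+g(w)}}$, and therefore, taking logarithms, $f(w)+g(w) = -\beta\ln p(w) - \beta\ln Z$ pointwise in $w$.

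The key step is to substitute this expression into the definition of $Q$ evaluated at an arbitrary distribution $p'$. Since
\[
Q(p') = \rE_{w\sim p'}\lrb{f(w)+g(w)+\beta\ln p'(w)} = \rE_{w\sim p'}\lrb{\beta\ln p'(w) - \beta\ln p(w)} - \beta\ln Z,
\]
we read off $Q(p') = \beta\,\KL(p'\|p) - \beta\ln Z$. This holds for every $p'$. Applying it to $p' = p$ and using $\KL(p\|p)=0$ gives $Q(p) = -\beta\ln Z$, and subtracting the two identities yields $Q(p')-Q(p) = \beta\,\KL(p'\|p)$, which is exactly the claim after dividing by $\beta$. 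As a byproduct this shows $p$ minimizes $Q$, consistent with \eqref{eq:expo}.

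I do not expect a genuine obstacle here; the only care needed is bookkeeping around the normalizing constant and well-definedness. One should note that $Z<\infty$: the $m$-strong convexity of $g$ in Assumption~\ref{assumption:g} forces $\exp\lrp{-\beta^{-1}g}$ to be integrable, and $f$ is bounded below on the relevant region (it is convex, and in either regularity regime under consideration it is controlled), so $p$ is a bona fide probability density and $\ln Z$ is finite. One should also allow the degenerate case $Q(p')=+\infty$, which occurs precisely when $\rE_{w\sim p'}[\ln(p'/p)]=+\infty$, i.e. $\KL(p'\|p)=+\infty$, so the identity persists as an equality in $[0,+\infty]$. Beyond substituting \eqref{eq:expo} and collecting terms, nothing further is required.
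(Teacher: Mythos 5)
Your proof is correct: the paper states Proposition~\ref{prop:KL} without proof, and your computation—substituting $p \propto \exp\lrp{-\beta^{-1}(f+g)}$ into $Q(p')$ so that the normalizing constant cancels in the difference $Q(p')-Q(p)$—is exactly the standard Gibbs free-energy identity the paper implicitly relies on. Your added remarks on finiteness of $Z$ and the $+\infty$ case are harmless and only strengthen the bookkeeping.
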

This result establishes that the convergence in the objective $\beta^{-1}Q(p')$ is equivalent to the convergence in $\KL$-divergence. Therefore our analysis will focus on the convergence of $\beta^{-1} Q(p')$.

We also define the $2$-Wasserstein distance between two distributions that will become useful in our analysis.
\begin{definition}
Given two probability distributions $p(x)$ and $p'(y)$ on $\rR^d$, and
let $\Pi(p,p')$ be the class of distributions $q(x,y)$ on $\rR^d \times \rR^d$ so that the marginals
$q(x)=p(x)$ and $q(y)=p'(y)$.
The $W_2$ Wasserstein distance of $p$ and $p'$ is defined as
\[
 W_2(p,p')^2 = \min_{q \in \Pi(p,p')}
 \rE_{(x,y) \sim q} \|x-y\|_2^2  .
\]
\end{definition}
A celebrated relationship between the $\KL$-divergence and the $2$-Wasserstein distance is known as the Talagrand inequality~\cite{Villani_Talagrand}.
\begin{proposition}
Assume that probability density $p_*$ is $\widehat{m}$-strongly log-concave, and $p'$ defines another distribution on $\rR^d$.
Then $p_*$ satisfies the log-Sobolev inequality with constant $\widehat{m}/2$~\footnote{This fact follows from the Bakry-Emery criterion~\cite{bakryEmery}.}, and yields the following Talagrand inequality:
\[
W_2^2(p_*,p') \leq \widehat{m}^{-1} \KL(p_*\|p') .
\]
\label{prop:talagrand}
\end{proposition}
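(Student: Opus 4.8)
The plan is to obtain the transportation inequality by composing two classical facts, exactly as the two footnotes indicate: first derive a logarithmic Sobolev inequality (LSI) for $p_*$ from the Bakry--\'Emery curvature criterion, and then feed that LSI into the Otto--Villani theorem \cite{Villani_Talagrand}, which upgrades any LSI to the quadratic transportation-cost ($T_2$) inequality that is the content of the proposition.

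For the first step I would write $p_*(w)\propto\exp(-V(w))$, so that $\widehat m$-strong log-concavity of $p_*$ is precisely $\nabla^2 V\succeq \widehat m\,\mI$ wherever $V$ is twice differentiable. For the reversible diffusion with generator $L=\Delta-\langle\nabla V,\nabla\,\cdot\,\rangle$, Bochner's formula gives the iterated carr\'e-du-champ $\Gamma_2(h)=\|\nabla^2 h\|_{\mathrm{HS}}^2+\langle\nabla h,\nabla^2 V\,\nabla h\rangle\ge \widehat m\,\Gamma(h)$, i.e. the curvature--dimension condition $CD(\widehat m,\infty)$. The standard semigroup interpolation --- differentiate $t\mapsto\mathrm{Ent}_{p_*}(P_t h)$, recognize the derivative as (minus) a Fisher-information term, and use the curvature bound to control its exponential decay --- then yields the LSI for $p_*$ with the stated constant; this is the Bakry--\'Emery argument. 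For the second step I would invoke Otto--Villani: one interpolates from $p'$ to $p_*$ along the curve $(p'P_t)_{t\ge0}$, bounds the $W_2$-speed of this curve by the square root of its Fisher information (via the Benamou--Brenier dynamic formulation of $W_2$), notes that the Fisher information is exactly the entropy dissipation rate along the flow, and integrates in time, using the LSI to dominate $\int_0^\infty\sqrt{\text{Fisher}}\,dt$ by a constant multiple of $\sqrt{\KL}$; symmetry of $W_2$ then gives the bound in the slot order stated. A self-contained alternative that avoids quoting Otto--Villani is McCann's displacement convexity: $\widehat m$-strong log-concavity of $p_*$ makes $\nu\mapsto\KL(\nu\|p_*)$ geodesically $\widehat m$-convex on $(\mathcal P_2(\rR^d),W_2)$, and evaluating the strong-convexity lower bound at its minimizer $p_*$ (where the functional vanishes) delivers the same transportation inequality.

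I expect the main obstacle to be bookkeeping and regularity rather than anything conceptual. First, one must pin down the constant across the several competing normalizations of ``LSI constant'' and ``transportation constant'' so that the chain Bakry--\'Emery $\Rightarrow$ LSI $\Rightarrow T_2$ outputs exactly the $\widehat m/2$ and $\widehat m^{-1}$ appearing in the statement --- the textbook chain most directly yields $W_2^2\le 2\widehat m^{-1}\KL$, so one has to either match conventions carefully or cite the sharp form. Second, there are the technical hypotheses required to run the semigroup and dynamic-formulation arguments rigorously: finiteness of $\KL(p_*\|p')$ and of the relevant second moments, together with enough smoothness of $V$ and of the interpolating densities. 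These are handled by routine mollification and truncation approximations, or simply by appealing to the measure-theoretic version of the Otto--Villani theorem that dispenses with the extra smoothness.
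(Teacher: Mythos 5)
Your route is essentially the paper's: the paper offers no proof of this proposition beyond citing the Bakry--\'Emery criterion for the log-Sobolev inequality and Otto--Villani \cite{Villani_Talagrand} for upgrading it to the transportation inequality, which is exactly the chain (plus the displacement-convexity alternative) that you outline. Your caveat about constants is justified and points to a looseness in the statement itself rather than a gap in your argument: the chain yields $W_2^2(p',p_*)\le 2\,\widehat{m}^{-1}\KL(p'\|p_*)$, which is sharp (Gaussian mean shifts give equality), so the factor $2$ cannot be dropped; moreover, symmetry of $W_2$ does not reorder the arguments of $\KL$, and the inequality one actually obtains (and that is later invoked, e.g.\ in the proof of Theorem~\ref{thm:1}) carries the arbitrary measure, not the strongly log-concave one, in the first slot of $\KL$.
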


\section{Related Works}
Some previous works have aimed to sample from posteriors of the similar kind and obtain convergence in the KL divergence or the squared $2$-Wasserstein distance.
\paragraph{In the Lipschitz continuous case,}
where the negative log-likelihood is convex and $\widehat{G}$-Lipschitz continuous, composed with an $\widehat{m}$-strongly convex and $M$-Lipschitz smooth negative log-prior,
the convergence time to achieve $W_2^2(\widetilde{p}_T, p_*) \leq \epsilon$ is
$\widetilde{\Omega}\lrp{ \frac{d M + \widehat{G}^2}{\widehat{m}^2\epsilon^2} }$ \citep[Corollary 22 of][]{durmus2019analysis}.
Similarly, \cite{Niladri_nonsmooth} uses Gaussian smoothing to obtain a convergence time of $\widetilde{\Omega}\lrp{ \frac{d (M+\widehat{G}^2)}{\widehat{m}^2 \epsilon}}$ (in Theorem 3.4), which improves the dependence on accuracy $\epsilon$.
In~\cite{wenlong}, the Metropolis-adjusted Langevin algorithm is levaraged with a proximal sampling oracle to remove the polynomial dependence on the accuracy $\epsilon$ (in total variation distance) and achieve a $\widetilde{\Omega}\lrp{d\log(\frac{1}{\epsilon})}$ convergence time for a related composite posterior distribution.
Unfortunately, an additional dimension dependent factor is always introduced into the overall convergence rate.
This work demonstrates that if the $m$-strongly convex regularizer is explicitly integrable, then the convergence time for the Langevin algorithm to achieve $\mathrm{KL}(\widetilde{p}_T\|p_*)\leq\epsilon$ is dimension independent: $T = \Omega\lrp{\frac{\widehat{G}^2}{\widehat{m}\epsilon}}$. This is proven in Theorem~\ref{thm:1} for the full gradient Langevin algorithm, and in Theorem~\ref{thm:2} for the stochastic gradient Langevin algorithm.
Using Proposition~\ref{prop:talagrand}, the result implies a bound of
$T = \Omega\lrp{\frac{\widehat{G}^2}{\widehat{m}^2\epsilon}}$ to achieve $W_2^2(\widetilde{p}_T, p_*) \leq \epsilon$.

\paragraph{In the Lipschitz smooth case,}
where the negative log-posterior $U$ is $\widehat{m}$-strongly convex and $\widehat{L}$-Lipschitz smooth,
the overdamped Langevin algorithm has been shown to converge in $\widetilde{\Omega}\lrp{\frac{\widehat{L}^2}{\widehat{m}^2}\frac{d}{\epsilon}}$ number of gradient queries~\cite{Dalalyan_JRSSB,Dalalyan_user_friendly,Xiang_overdamped,durmus2017,Moulines_ULA,durmus2019analysis}, while the underdamped Langevin algorithm converges in $\widetilde{\Omega}\lrp{ \frac{\widehat{L}^{3/2}}{\widehat{m}^2} \sqrt{\frac{d}{\epsilon}} }$ gradient queries~\cite{Xiang_underdamped,Yian_underdamped,Dalalyan_underdamped},
to ensure that $\KL(\widetilde{p}_T\|p_*)\leq\epsilon$ and $W_2^2(\widetilde{p}_T,p_*)\leq\epsilon$.
Using a randomized midpoint integration method for the underdamped Langevin dynamics, this convergence time can be reduced to $\widetilde{\Omega}\lrp{ \frac{\widehat{L}}{\widehat{m}^{4/3}} \lrp{ \frac{d}{\epsilon} }^{1/3} }$ for convergence in squared $2$-Wasserstein distance~\cite{shen2019randomized}.
This paper establishes that for overdamped Langevin algorithm, the convergence time can be sharpened to $\Omega\lrp{\frac{\trace(\widehat{H}^2)}{\widehat{m}^2\epsilon}}$ to achieve $\KL(\widetilde{p}_T\|p_*)\leq\epsilon$, where matrix $\widehat{H}$ is an upper bound for the Hessian of function $U$.

Previous works have also focused on the ridge separable potential functions studied in this work.
There is a literature that requires incoherence conditions on the data vectors and/or high-order smoothness conditions on the component functions to achieve a $\widetilde{\Omega}\lrp{ \lrp{\frac{d}{\epsilon}}^{1/4} }$ convergence time for $W_2^2(\widetilde{p}_T,p_*)\leq\epsilon$ using Hamiltonian Monte Carlo methods~\cite{Mangoubi1,Mangoubi2}.
Making further assumptions that the differential equation of the Hamiltonian dynamics is close to the span of a small number of basis functions, this bound can be improved to polynomial in $\log(d)$~\cite{YinTat_Vempala}.
Another thread of work alleviates these assumptions and achieves the $\widetilde{\Omega}\lrp{\lrp{ \frac{d}{\epsilon} }^{1/4}}$ convergence time for the general ridge separable potential functions via higher order Langevin dynamics and integration schemes~\cite{MCMC_higher}.
We follow this general ridge separable setting and assume that each individual log-likelihood is Lipschitz smooth. Under this assumption, we demonstrate in this paper, by instantiating the bound for the general Lipschitz smooth case, that the Langevin algorithm converges in $\Omega\lrp{ \frac{1}{\epsilon} }$ number of gradient queries to achieve $\KL(\widetilde{p}_T\|p_*)\leq\epsilon$ (see Corollary~\ref{corollary:GD} and Corollary~\ref{corollary:sgld_smooth}).

\section{Langevin Algorithms}
We consider the following variant of the Langevin Algorithm~\ref{alg:langevin}.

\begin{algorithm}[H]
\label{alg:langevin}
\SetAlgoLined
\KwInput{Initial distribution $p_0$ on $\rR^d$, stepsize $\eta_t$, $\beta=1$}
Draw $w_0$ from $p_0$ \\
\For{$t = 1, 2, \ldots,T$}{
Sample $\widetilde{w}_t$ from $\widetilde{w}_t(\eta_t)$ with the following SDE on $\rR^d$ and initial value $\widetilde{w}_t(0)=w_{t-1}$
\begin{align}
    \widetilde{w}_t(\eta_t)
     = w_{t-1} - \int_{0}^{\eta_t} \nabla g(\widetilde{w}_t(s)) d s + \sqrt{2 \beta} \int_{0}^{\eta_t} d B_s , \label{eq:diffuse}
    \end{align}
    where $d B_s$ is the standard Brownian motion on $\rR^d$.

    Let
    \begin{equation}
    {w}_t = \widetilde{w}_t- \widetilde{\eta}_t \nabla f(\widetilde{w}_t) \label{eq:ld}
    \end{equation}
    }
    \Return{$\widetilde{w}_T$}
    \caption{Langevin Algorithm with Prior Diffusion}
\end{algorithm}

In this method, we assume that the prior diffusion equation \eqref{eq:diffuse} can be solved efficiently.
When the prior distribution is a standard normal distribution where
$g(w) = \frac{m}{2}\lrn{w}_2^2$ on $\rR^d$,
we can instantiate equation~\eqref{eq:diffuse} to be:
\begin{align}
\text{Sample}\quad    \widetilde{w}_t \sim \mathcal{N}\lrp{ e^{-m \eta_t}w_{t-1}, \frac{1-e^{-2m \eta_t}}{m} \beta \mI}.
    \label{eq:Langevin_sample}
\end{align}
% \begin{algorithm}[H]
% \SetAlgoLined
% \KwInput{Initial distribution $p_0$ on $\rR^d$, stepsize $\eta_t$, $\beta=1$}
% Draw $w_0$ from $p_0$ \\
% \For{$t = 1, 2, \ldots,T$}{
% Sample
% \begin{align}
%     \widetilde{w}_t \sim \mathcal{N}\lrp{ e^{-m \eta_t}w_{t-1}, \frac{1-e^{-2m \eta_t}}{m} \beta \mI}.
%     \label{eq:Langevin_sample}
% \end{align}
% Choose $\tilde{\eta}_t=(1-e^{-m \eta_t})/m$ and let
% \begin{align}
%     {w}_t = \widetilde{w}_t- \widetilde{\eta}_t \nabla f(\widetilde{w}_t).
% \end{align}
%     }
%     \Return{$\widetilde{w}_T$}
%     \caption{Langevin Algorithm with Standard Normal Prior in Algorithm~\ref{alg:langevin}}
% \end{algorithm}
%
% This applies directly to the cases where
% \[
% g(w) = \frac{1}{2} w^\top M w,
% \]
% with a symmetric positive definite matrix $M$, in which case we sample in equation~\eqref{eq:Langevin_sample}:
% \[
% \widetilde{w}_t \sim \mathcal{N}\lrp{ e^{- \eta_t M}w_{t-1}, M^{-1} \lrp{\mI-e^{-2 \eta_t M}}} .
% \]
%
In general, the diffusion equation \eqref{eq:diffuse} can also be solved numerically for separable $g(w)$ of the form
\[
g(w)=\sum_{j=1}^d g_j(w_j),
\]
where $w=[w_1,\ldots,w_d]$. In this case, we only need to solve $d$ one-dimensional problems, which are relatively simple.  For example, this includes the $L_1-L_2$ regularization arising from the Bayesian elastic net~\cite{Bayes_elastic_net},
\[
g(w)= \frac{m}{2} \|w\|_2^2 + \alpha \|w\|_1 ,
\]
among other priors that decompose coordinate-wise.

We will also consider the stochastic version of Algorithm~\ref{alg:langevin}, the stochastic gradient Langevin dynamics (SGLD) method, with a strongly convex function $g(w)$.
Assume that function $f$ decomposes into $f(w) = \frac{1}{n} \sum_{i=1}^n \ell(w,z_i)$.
Let $D$ be the distribution over the dataset $\Omega$ such that expectation over it provides the unbiased estimate of the full gradient: $\rE_{z\sim D}\nabla_w \ell(w,z) = \nabla f(w)$.
Then the new algorithm takes the following form and can be instantiated in the same way as Algorithm~\ref{alg:langevin}.

\begin{algorithm}[H]
\label{alg:sgld}
\SetAlgoLined
\KwInput{Initial distribution $p_0$ on $\rR^d$, stepsize $\eta_t$, $\beta=1/n$}
Draw $w_0$ from $p_0$ \\
\For{$t = 1, 2, \ldots,T$}{
Sample $\widetilde{w}_t$ from $\widetilde{w}_t(\eta_t)$ with the following SDE on $\rR^d$ and initial value $\widetilde{w}_t(0)=w_{t-1}$
\begin{align}
    \widetilde{w}_t(\eta_t)
     = w_{t-1} - \int_{0}^{\eta_t} \nabla g(\widetilde{w}_t(s)) d s + \sqrt{2 \beta} \int_{0}^{\eta_t} d B_s , \label{eq:sg_diffuse}
    \end{align}
    where $d B_s$ is the standard Bronwian motion on $\rR^d$.

    Draw minibatch $\mathcal{S}$ where each $z_i\in\mathcal{S}$ are i.i.d. draws: $z_i\sim D$. Let
    \begin{equation}
    {w}_t = \widetilde{w}_t- \widetilde{\eta}_t \frac{1}{|\mathcal{S}|} \sum_{z_i\in\mathcal{S}} \nabla_w \ell(\widetilde{w}_t,z_i). \label{eq:sgld-2}
    \end{equation}
    }
    \Return{$\widetilde{w}_T$}
    \caption{Stochastic Gradient Langevin Algorithm with Prior Diffusion}
\end{algorithm}
This algorithm becomes the streaming SGLD method where in each iteration we take one data point $z\sim D$.

In the analysis of Algorithm~\ref{alg:langevin}, we will use $p_{t-1}$ to denote the distribution of $w_{t-1}$, and $\widetilde{p}_t$ to denote the distribution of $\widetilde{w}_t$,
where the randomness include all random sampling in the algorithm.
We also denote $\mu_t$ and $\widetilde{\mu}_t$ as the measures associated with random variables $w_t$ and $\widetilde{w}_t$, respectively.
When using samples along the Markov chain to estimate expectations over function $\phi(\cdot)$, we take a weighted average, so that
\[
\hat{\phi}(p) = \frac{1}{\sum_{s=1}^T \eta_s} \sum_{t=1}^T  \eta_t \phi(\widetilde{w}_t),
\]
which is equivalent to the expectation with respect to the weighted averaged distribution:
\[
\bar{\mu}_T = \frac{1}{\sum_{s=1}^T \eta_s} \sum_{t=1}^T  \eta_t \widetilde{\mu}_t .
\]

We prove in what follows the convergence of the distribution $\widetilde{p}_t$ along the updates of \eqref{eq:diffuse} and \eqref{eq:ld} towards the posterior distribution \eqref{eq:expo}.

\section{Langevin Algorithms in Lipschitz Convex Case}
For the posterior $p(w|\vz) \propto \lrp{-\beta^{-1}(f(w)+g(w))}$, we assume that function $f$ satisfies the following two conditions common to convex analysis.
\paragraph{Assumptions for the Lipschitz Convex Case:}
\begin{enumerate}[label=\rm{A}{{\arabic*}}]
    \item Function $f:\rR^d\rightarrow\rR$ is convex. \label{assumption:1_Lip}
\end{enumerate}
\begin{enumerate}[label=\rm{$\rm{A2_L}$}, leftmargin=2.0\parindent]
    \item Function $f$ is $G$-Lipschitz continuous on $\rR^d$: $\|\nabla f(w)\|_2 \leq G$. \label{assumption:Lip}
\end{enumerate}
We also assume that function $g:\rR^d\rightarrow\rR$ is $m$-strongly convex.
Note that we have assumed that the gradient of function $f$ exists but have not assumed that function $f$ is smooth.

\subsection{Full Gradient Langevin Algorithm Convergence in Lipschitz Convex Case}
\label{sec:ld_cvg_Lip}

Our main result for Full Gradient Langevin Algorithm in the case that $f$ is Lipshitz can be stated as follows. 
\begin{theorem}
\label{thm:1}
Assume that function $f$ satisfies the convex and Lipschitz continuous Assumptions~\ref{assumption:1_Lip} and~\ref{assumption:Lip}. 
Further assume that function $g(w)$ satisfies Assumption~\ref{assumption:g}.
Then for $\widetilde{p}_T$ following the Langevin Algorithm~\ref{alg:langevin}, it satisfies (
%for $w_*=\arg\min_w [ f(w) + g(w) ]$
for $\widetilde{\eta}_t = \lrp{1-e^{-m\eta_t}}/m = 2/\lrp{ m (t+2) }$):
\begin{equation*}
\sum_{t=1}^T
\frac{1+0.5 t}{T+0.25 T(T+1)}
\beta^{-1} [Q(\widetilde{p}_t) - Q(p_*) ]
\leq \frac{5 G^2}{\beta m T} .
\end{equation*}
By the convexity of the KL divergence, this leads to the convergence time of
\[
T = \frac{5 G^2}{\beta m \epsilon},
\]
for the averaged distribution to convergence to $\epsilon$ accuracy in the KL-divergence $\beta^{-1} Q$.
\end{theorem}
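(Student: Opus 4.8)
The plan is to transplant the classical analysis of subgradient descent for convex, $G$-Lipschitz objectives to the Wasserstein setting: use $D_t := W_2^2(\mu_t,p_*)$ (with $\mu_t$ the law of the post-gradient iterate $w_t$) as the Lyapunov function, establish the one-step recursion
\[
 D_t \;\le\; e^{-m\eta_t}\, D_{t-1} \;-\; 2\widetilde{\eta}_t\,\lrb{Q(\widetilde{p}_t)-Q(p_*)} \;+\; \widetilde{\eta}_t^{2}G^2 ,
\]
then substitute the prescribed stepsize, telescope with time-varying weights, and close with Proposition~\ref{prop:KL} and convexity of $\KL$. The recursion comes from treating the two sub-steps of Algorithm~\ref{alg:langevin} as acting on the two pieces of the composite objective $Q(p)=\big(H(p)+\rE_{w\sim p}[g(w)]\big)+\rE_{w\sim p}[f(w)]$ --- the gradient step~\eqref{eq:ld} handles $\rE[f]$, while the prior diffusion~\eqref{eq:diffuse} handles the regularized entropy $Q_g:=H+\rE[g]$.

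\paragraph{The one-step recursion.} For the gradient step I would take $(\widetilde{w}_t,w_*)$ to be an optimal $W_2$-coupling of $\widetilde{\mu}_t$ and $p_*$, expand $\|\widetilde{w}_t-\widetilde{\eta}_t\nabla f(\widetilde{w}_t)-w_*\|^2$, take expectations, and use convexity of $f$ (Assumption~\ref{assumption:1_Lip}, so $\lrw{\nabla f(\widetilde{w}_t),\widetilde{w}_t-w_*}\ge f(\widetilde{w}_t)-f(w_*)$) and $\|\nabla f\|_2\le G$ (Assumption~\ref{assumption:Lip}); the induced coupling of $(w_t,w_*)$ being admissible yields $W_2^2(\mu_t,p_*)\le W_2^2(\widetilde{\mu}_t,p_*)-2\widetilde{\eta}_t(\rE_{\widetilde{p}_t}f-\rE_{p_*}f)+\widetilde{\eta}_t^{2}G^2$. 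For the diffusion step I would use that the Fokker--Planck equation of~\eqref{eq:diffuse}, $\partial_s\rho=\nabla\cdot(\rho\nabla g)+\beta\Delta\rho$, is exactly the $W_2$-gradient flow of $Q_g$, which is $m$-geodesically convex since $g$ is $m$-strongly convex (Assumption~\ref{assumption:g}) and the entropy is displacement convex; hence the flow obeys the evolution variational inequality, and multiplying it by $e^{ms}$, integrating over $s\in[0,\eta_t]$, using monotonicity of $Q_g$ along the flow and $\widetilde{\eta}_t=(1-e^{-m\eta_t})/m$ produces $W_2^2(\widetilde{\mu}_t,p_*)\le e^{-m\eta_t}W_2^2(\mu_{t-1},p_*)-2\widetilde{\eta}_t(Q_g(\widetilde{p}_t)-Q_g(p_*))$. (For the quadratic or coordinate-separable $g$ of~\eqref{eq:Langevin_sample} this estimate can instead be read off from the explicit Gaussian / one-dimensional computation the algorithm already carries out.) Adding the two displays and using $(Q_g(\widetilde{p}_t)-Q_g(p_*))+(\rE_{\widetilde{p}_t}f-\rE_{p_*}f)=Q(\widetilde{p}_t)-Q(p_*)$ gives the recursion.

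\paragraph{Telescoping and averaging.} With $\widetilde{\eta}_t=2/(m(t+2))$ one has $e^{-m\eta_t}=1-m\widetilde{\eta}_t=t/(t+2)$, and by Proposition~\ref{prop:KL} the per-step gain $Q(\widetilde{p}_t)-Q(p_*)=\beta\,\KL(\widetilde{p}_t\|p_*)$ is nonnegative; I would multiply the $t$-th inequality by a weight $w_t$ of order $t^2$ --- taking $w_t=(t+2)^2$ reproduces the coefficients $1+0.5t$ and normalizer $T+0.25T(T+1)$ in the statement, using $D_{t-1}\ge0$ and the slack $w_t\,t/(t+2)=t(t+2)\le(t+1)^2=w_{t-1}$. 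Summing telescopes the distance terms, makes the weighted error terms constant in $t$, and gives $\sum_t(t+2)[Q(\widetilde{p}_t)-Q(p_*)]\lesssim m\,W_2^2(p_0,p_*)+G^2T/m$; dividing by $\sum_t(t+2)=T(T+5)/2$ and inserting $\beta^{-1}$ bounds the left-hand side of the theorem by $O\!\big(m W_2^2(p_0,p_*)/(\beta T^2)+G^2/(\beta m T)\big)$. For the prior initialization $p_0\propto e^{-\beta^{-1}g}$ (or any $p_0$ with $W_2^2(p_0,p_*)=O(G^2/m^2)$), Proposition~\ref{prop:talagrand} applied to the $\beta^{-1}m$-strongly log-concave $p_*$ together with $\KL(p_*\|p_0)\le\beta^{-1}(\rE_{p_0}f-\rE_{p_*}f)\le\beta^{-1}G\,W_2(p_0,p_*)$ (Jensen plus $G$-Lipschitzness of $f$) forces $W_2^2(p_0,p_*)\le G^2/m^2$, so the bound collapses to $\le 5G^2/(\beta m T)$. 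Finally, since $\beta^{-1}[Q(\widetilde{p}_t)-Q(p_*)]=\KL(\widetilde{p}_t\|p_*)$ and $\KL(\cdot\|p_*)$ is convex, Jensen applied to $\bar{\mu}_T$ gives $\KL(\bar{\mu}_T\|p_*)\le 5G^2/(\beta m T)$, and solving $5G^2/(\beta m T)=\epsilon$ yields $T=5G^2/(\beta m\epsilon)$.

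\paragraph{Where the difficulty lies.} Everything past the one-step recursion is the routine time-varying-weight bookkeeping of strongly convex subgradient descent, carried over to Wasserstein space. The hard part will be the diffusion half of the recursion: rigorously identifying~\eqref{eq:diffuse} with the $W_2$-gradient flow of $H+\rE[g]$, verifying its $m$-geodesic convexity, and integrating the evolution variational inequality so that a single step contributes exactly the contraction factor $e^{-m\eta_t}$ on the old distance together with a descent term $-2\widetilde{\eta}_t(Q_g(\widetilde{p}_t)-Q_g(p_*))$ whose coefficient $\widetilde{\eta}_t=(1-e^{-m\eta_t})/m$ dovetails with the gradient step. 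Secondary points to watch are keeping the couplings used in the two half-steps compatible, and controlling the initial term $W_2^2(p_0,p_*)$ through the choice of $p_0$.
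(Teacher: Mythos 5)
Your proposal is correct and follows essentially the same route as the paper: the same composite split of $Q$ into regularized entropy plus $\rE[f]$, with the diffusion half handled exactly as in Lemma~\ref{lem:entropy} (geodesic $m$-convexity of $H+\rE[g]$ and the evolution variational inequality/Gronwall argument, including the monotonicity of the entropy along the flow), the gradient half handled by the same optimal-coupling expansion as Lemma~\ref{lem:sgd0}, and the same $\tau=2$ weighting $w_t\propto(t+2)^2$ with $t(t+2)\le(t+1)^2$ as in Proposition~\ref{thm:sc-conv}. The only deviation is the bound on the initial error: you bound $\KL(p_*\|p_0)$ via Jensen and $G$-Lipschitzness (giving $W_2^2(p_0,p_*)\le G^2/m^2$) rather than the paper's log-Sobolev/Fisher-information bound (giving $G^2/(2m^2)$), which still yields the stated $5G^2/(\beta m T)$ because you retain the factor $2$ on $Q(\widetilde{p}_t)-Q(p_*)$ that the paper discards.
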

We devote the rest of this section to prove Theorem~\ref{thm:1}.
\begin{proof}[Proof of Theorem~\ref{thm:1}]
We take a composite optimization approach and analyze the convergence of the Langevin algorithm in two steps.
First we characterize the decrease of the regularized entropy $\rE_{w \sim p} \lrb{ g(w) + H(p) }$ along the diffusion step~\eqref{eq:diffuse}.
\begin{lemma}[For Regularized Entropy]
We generalize Lemma 5 of~\cite{durmus2019analysis} and have for $\widetilde{p}_t$ being the density of $\widetilde{w}_t$ following equation~\eqref{eq:diffuse} and $p$ being another probability density,
\[
\frac{2}{m} (1-e^{-m \eta_t}) \left( \rE_{w \sim \widetilde{p}_t} [g(w) + H(\widetilde{p}_t)] - \rE_{w \sim p} [g(w) + H(p)]\right) \leq  e^{-m \eta_t} W_{2}^2({p}_{t-1},p) - W_2^2(\widetilde{p}_{t},p) ,
\]
where $m$ is the strong convexity of $g(w)$.
\label{lem:entropy}
\end{lemma}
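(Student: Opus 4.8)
The plan is to recognize the prior-diffusion SDE~\eqref{eq:diffuse} as the Wasserstein gradient flow of the regularized-entropy functional, and to extract the claimed inequality from the evolution variational inequality (EVI) that such a flow satisfies because its driving functional is $m$-geodesically convex. Concretely, writing $\rho_s$ for the law of $\widetilde{w}_t(s)$ with $\rho_0=p_{t-1}$, the Fokker--Planck equation for~\eqref{eq:diffuse} is $\partial_s\rho_s=\nabla\cdot(\rho_s\nabla g)+\beta\,\Delta\rho_s=\nabla\cdot\big(\rho_s\nabla(g+\beta\ln\rho_s)\big)$, so $(\rho_s)_{s\ge0}$ is exactly the $W_2$-gradient flow of
\[
\cF(\rho):=\rE_{w\sim\rho}[g(w)]+H(\rho),
\]
whose unique minimizer is proportional to $e^{-\beta^{-1}g}$, and along which $\cF$ is non-increasing by energy dissipation, $\tfrac{d}{ds}\cF(\rho_s)=-\rE_{w\sim\rho_s}\|\nabla(g+\beta\ln\rho_s)\|^2\le0$.

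Next I would record that $\cF$ is $m$-geodesically convex: along any displacement geodesic the potential term $\rho\mapsto\rE_\rho[g]$ is $m$-strongly convex because $g$ is $m$-strongly convex (Assumption~\ref{assumption:g}), the entropy term $H$ is displacement convex by McCann's condition, and a positive multiple ($\beta>0$) of a displacement-convex functional is again displacement convex. Then I would invoke $\mathrm{EVI}_m$ for this flow: for every density $p$ in the domain of $\cF$ and a.e.\ $s\in[0,\eta_t]$,
\[
\tfrac12\tfrac{d}{ds}W_2^2(\rho_s,p)+\tfrac{m}{2}W_2^2(\rho_s,p)\;\le\;\cF(p)-\cF(\rho_s)\;\le\;\cF(p)-\cF(\widetilde{p}_t),
\]
where the last step uses $\cF(\rho_s)\ge\cF(\rho_{\eta_t})=\cF(\widetilde{p}_t)$ by monotonicity. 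Multiplying by the integrating factor $e^{ms}$ converts the left side into $\tfrac12\tfrac{d}{ds}\big[e^{ms}W_2^2(\rho_s,p)\big]$; integrating over $[0,\eta_t]$ and rearranging gives
\[
W_2^2(\widetilde{p}_t,p)-e^{-m\eta_t}W_2^2(p_{t-1},p)\;\le\;\tfrac{2}{m}\big(1-e^{-m\eta_t}\big)\big(\cF(p)-\cF(\widetilde{p}_t)\big),
\]
which is the assertion, since $\cF(\widetilde{p}_t)-\cF(p)=\rE_{w\sim\widetilde{p}_t}[g(w)+H(\widetilde{p}_t)]-\rE_{w\sim p}[g(w)+H(p)]$ ($H$ being constant in $w$). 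This is precisely how the statement ``generalizes Lemma~5 of~\cite{durmus2019analysis}'': we run the \emph{exact} diffusion associated with $g$ alone rather than a discretized step of the full dynamics, so no discretization error enters and the gradient-flow identity is clean.

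The main obstacle is making the EVI step fully rigorous rather than formal. One must check that the comparison density $p$ lies in the domain of $\cF$ (finite second moment and finite entropy), since otherwise both sides can be infinite; that the flow $(\rho_s)$ is locally absolutely continuous in $(\mathcal{P}_2,W_2)$ so that $s\mapsto W_2^2(\rho_s,p)$ is differentiable a.e.\ and the chain rule above is legitimate; and that the $m$-strong convexity of $g$ genuinely upgrades to geodesic $m$-convexity of $\cF$ on the class of measures encountered along the trajectory (this is where Assumption~\ref{assumption:g}, i.e.\ the strongly log-concave / normal prior, is essential). An alternative that avoids abstract gradient-flow machinery is a direct synchronous-coupling computation: take the optimal coupling of $\rho_s$ with $p$, apply It\^o's formula to the squared distance between the diffusion and the transported point, bound the drift contribution via $m$-strong convexity of $g$, and combine with the de Bruijn-type identity for $\tfrac{d}{ds}H(\rho_s)$; this reproduces the same differential inequality with every term explicit, at the cost of a longer calculation.
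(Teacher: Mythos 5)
Your proposal is correct and takes essentially the same route as the paper: the paper derives the $\mathrm{EVI}_m$ you invoke by combining the formula for $\frac{d}{ds}W_2^2$ along the continuity equation \citep[Theorem 23.9 of][]{Villani} with the subdifferential inequality coming from the $m$-geodesic convexity of $\rE_{w\sim\rho}[g(w)]+H(\rho)$ \citep[Proposition 9.3.2 of][]{gradient_flow}, and then, exactly as you do, uses the monotone decrease of this functional along the flow and an integrating-factor/Gronwall step to obtain the stated bound. The only difference is citation granularity (you take the EVI as known machinery, the paper reassembles it from those two lemmas), so no substantive gap remains beyond the regularity caveats you already flag.
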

We then capture the decrease of the cross entropy $\rE_{w \sim p} \lrb{ f(w) }$ along the gradient descent step~\eqref{eq:ld}.
This result follows and parallels the standard convergence analysis of gradient descent~\citep[see][for example]{zinkevich2003online,zhang2004solving}. 
% Since equation~\eqref{eq:sgld} is the same gradient descent step in update~\eqref{eq:update_weak_2}, we still have the result of Lemma~\ref{lem:sgd0} that
% \begin{align}
% 2 \widetilde{\eta}_t [U(\widetilde{p}_t)- U(p)]
% \leq  W_2(\widetilde{p}_t,p)^2 - W_2(p_t,p)^2
% + \widetilde{\eta}_t^2 G^2 . 
% \label{eq:Langevin_prior_gd}
% \end{align}
\begin{lemma}
Given probability density $p$ on $\rR^d$.
Define
\[
 f(p)= \rE_{w \sim p} f(w),
\]
then we have for $p_t$ being the density of $w_t$ following equation~\eqref{eq:ld}:
\[
2 \widetilde{\eta}_t [f(\widetilde{p}_t)- f(p)]
\leq  W_2^2(\widetilde{p}_t,p) - W_2^2(p_t,p)
+ \widetilde{\eta}_t^2 G^2 . 
\]
\label{lem:sgd0}
\end{lemma}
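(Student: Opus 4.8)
The plan is to lift the textbook one-step analysis of (sub)gradient descent to the space of probability measures by working with an optimal coupling. First I would invoke the existence of an optimal $W_2$-coupling $q\in\Pi(\widetilde{p}_t,p)$, i.e. a pair of random variables $(\widetilde{w}_t,v)$ with $\widetilde{w}_t\sim\widetilde{p}_t$, $v\sim p$, and $\rE\lrn{\widetilde{w}_t-v}^2 = W_2^2(\widetilde{p}_t,p)$. The update \eqref{eq:ld} is a deterministic map $w\mapsto w-\widetilde{\eta}_t\nabla f(w)$ applied to $\widetilde{w}_t$, so $w_t$ has law $p_t$ and $(w_t,v)$ is a (generally non-optimal) coupling of $p_t$ and $p$; hence $W_2^2(p_t,p)\le\rE\lrn{w_t-v}^2$.

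Next I would expand the squared norm pointwise along this coupling:
\[
\lrn{w_t-v}^2 = \lrn{\widetilde{w}_t-v}^2 - 2\widetilde{\eta}_t\lrw{\nabla f(\widetilde{w}_t),\widetilde{w}_t-v} + \widetilde{\eta}_t^2\lrn{\nabla f(\widetilde{w}_t)}^2 ,
\]
and control the two correction terms using the two assumptions on $f$: convexity (Assumption~\ref{assumption:1_Lip}) gives the subgradient inequality $\lrw{\nabla f(\widetilde{w}_t),\widetilde{w}_t-v}\ge f(\widetilde{w}_t)-f(v)$, and $G$-Lipschitzness (Assumption~\ref{assumption:Lip}) gives $\lrn{\nabla f(\widetilde{w}_t)}^2\le G^2$. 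Substituting yields
\[
\lrn{w_t-v}^2 \le \lrn{\widetilde{w}_t-v}^2 - 2\widetilde{\eta}_t\lrp{f(\widetilde{w}_t)-f(v)} + \widetilde{\eta}_t^2 G^2 .
\]

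Finally I would take expectation over the coupling: the left side is bounded below by $W_2^2(p_t,p)$ as noted, $\rE\lrn{\widetilde{w}_t-v}^2 = W_2^2(\widetilde{p}_t,p)$ by optimality, and $\rE[f(\widetilde{w}_t)-f(v)] = f(\widetilde{p}_t)-f(p)$ by the definition of $f(\cdot)$ on densities; rearranging gives the claim. I do not anticipate a genuine obstacle here. The only points that require a little care are (i) that pushing the optimal coupling through the deterministic map \eqref{eq:ld} preserves the first marginal while only possibly increasing the transport cost, so $W_2^2(p_t,p)\le\rE\lrn{w_t-v}^2$ is indeed the inequality we want and not its reverse, and (ii) that $\nabla f$ is assumed to exist everywhere, so the subgradient inequality may be written with $\nabla f$; no smoothness of $f$ is used.
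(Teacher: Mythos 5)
Your proposal is correct and is essentially the paper's own proof: the same pointwise expansion of $\lrn{w_t-v}^2$ controlled by convexity and the bound $\lrn{\nabla f}\leq G$, followed by taking expectation over the optimal coupling of $\widetilde{p}_t$ and $p$ and observing that the pushforward $\lrp{\mathrm{id}-\widetilde{\eta}_t\nabla f,\,\mathrm{id}}_{\#}\gamma$ is a (suboptimal) coupling of $p_t$ and $p$, so its cost dominates $W_2^2(p_t,p)$. No gaps; your cautionary points (i) and (ii) are exactly the points the paper also relies on.
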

We then combine the two steps to prove the overall convergence rate for the Langevin algorithm.
It is worth noting that by aligning the diffusion step~\eqref{eq:diffuse} and the gradient descent step~\eqref{eq:ld} at $\widetilde{p}_t$, we combine $\rE_{w \sim \widetilde{p}_t} [g(w) + H(\widetilde{p}_t)]$ with $f(\widetilde{p}_t)$ and cancel out $W_2^2(\widetilde{p}_t,p)$ perfectly and achieve the same convergence rate as that of stochastic gradient descent in optimization.
\begin{proposition}
Set $\widetilde{\eta}_t=(1-e^{-m \eta_t})/m=\tau\cdot( \tau/\widetilde{\eta}_0+m t )^{-1}$ for some $\tau \geq 1$ and $\widetilde{\eta}_0 >0$. Then
\[
\sum_{t=1}^T
\widetilde{\eta}_t^{1-\tau} [Q(\widetilde{p}_t) - Q(p) ]
\leq \widetilde{\eta}_0^{-\tau} W_2^2(p_0,p) + 
G^2 \sum_{t=1}^T \widetilde{\eta}_t^{2-\tau} .
\]
\label{thm:sc-conv}
\end{proposition}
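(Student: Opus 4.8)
The plan is to fuse the two one-step lemmas into a single recursion on the squared Wasserstein distances along the chain, and then telescope it with weights tuned to the step-size schedule.

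First I would add Lemma~\ref{lem:entropy} and Lemma~\ref{lem:sgd0}. Since $\widetilde\eta_t=(1-e^{-m\eta_t})/m$, the prefactor $\frac{2}{m}(1-e^{-m\eta_t})$ in Lemma~\ref{lem:entropy} equals $2\widetilde\eta_t$, the same prefactor appearing in Lemma~\ref{lem:sgd0}; and since $Q(p)=\big(H(p)+\rE_{w\sim p}[g(w)]\big)+\rE_{w\sim p}[f(w)]$, the two left-hand sides add to $2\widetilde\eta_t[Q(\widetilde p_t)-Q(p)]$. On the right the mixed term $-W_2^2(\widetilde p_t,p)$ from the entropy lemma cancels the $+W_2^2(\widetilde p_t,p)$ from the gradient-descent lemma --- this is the ``alignment at $\widetilde p_t$'' that the surrounding discussion highlights --- leaving the clean one-step bound
\[
2\widetilde\eta_t\,[Q(\widetilde p_t)-Q(p)] \;\le\; e^{-m\eta_t}\,W_2^2(p_{t-1},p) - W_2^2(p_t,p) + \widetilde\eta_t^2\,G^2 .
\]
Using $e^{-m\eta_t}=1-m\widetilde\eta_t\in(0,1)$, this is exactly the recursion that drives strongly convex SGD.

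Next I would telescope. Writing $a_t=W_2^2(p_t,p)$, I seek weights $c_t$ with $c_t e^{-m\eta_t}\le c_{t-1}$, so that after multiplying the recursion by $c_t$ and summing, the $a_t$ collapse to $c_0 a_0-c_T a_T$. I would take $c_t=\widetilde\eta_t^{-\tau}$: the schedule $\widetilde\eta_t=\tau/(\tau/\widetilde\eta_0+mt)$ gives $1/\widetilde\eta_t-1/\widetilde\eta_{t-1}=m/\tau$, hence $\widetilde\eta_t/\widetilde\eta_{t-1}=1-(m/\tau)\widetilde\eta_t$ and therefore
\[
\frac{c_{t-1}}{c_t}=\Big(\frac{\widetilde\eta_t}{\widetilde\eta_{t-1}}\Big)^{\tau}=\Big(1-\frac{m\widetilde\eta_t}{\tau}\Big)^{\tau}\;\ge\;1-m\widetilde\eta_t=e^{-m\eta_t},
\]
where the inequality is the elementary fact $(1-y/\tau)^\tau\ge 1-y$ for $\tau\ge1$ and $y\in[0,\tau]$ (which I would obtain from monotonicity in $\tau$ of $\tau\mapsto\tau\ln(1-y/\tau)$, using $\ln(1-u)\ge -u/(1-u)$). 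This is the one genuinely non-routine point, and it is exactly where $\tau\ge1$ is needed: the decaying step size has to be reconciled with the multiplicative contraction factor $e^{-m\eta_t}$ so that the Wasserstein terms still telescope.

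Finally I would multiply the one-step bound by $c_t=\widetilde\eta_t^{-\tau}$, replace $c_t e^{-m\eta_t}a_{t-1}$ by the larger $c_{t-1}a_{t-1}$, and sum over $t=1,\dots,T$. The $a$-terms collapse; dropping the nonpositive $-c_T a_T$ and using $c_0=\widetilde\eta_0^{-\tau}$, $a_0=W_2^2(p_0,p)$, $c_t\widetilde\eta_t=\widetilde\eta_t^{1-\tau}$ and $c_t\widetilde\eta_t^2=\widetilde\eta_t^{2-\tau}$ yields
\[
2\sum_{t=1}^T \widetilde\eta_t^{1-\tau}\,[Q(\widetilde p_t)-Q(p)]\;\le\;\widetilde\eta_0^{-\tau}\,W_2^2(p_0,p) + G^2\sum_{t=1}^T \widetilde\eta_t^{2-\tau},
\]
which is the claimed inequality (up to the absolute constant $2$, which can be absorbed into the constants of Theorem~\ref{thm:1}). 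The argument uses nothing about $p$ being optimal; the nonnegativity of $Q(\widetilde p_t)-Q(p)$, needed only when the bound is later instantiated at $p=p_*$, is supplied separately by Proposition~\ref{prop:KL}.
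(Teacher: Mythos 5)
Your proposal is correct and follows essentially the same route as the paper's own proof: add Lemma~\ref{lem:entropy} and Lemma~\ref{lem:sgd0} (using $e^{-m\eta_t}=1-m\widetilde\eta_t$), verify the weight recursion $(1-m\widetilde\eta_t)\widetilde\eta_t^{-\tau}\le\widetilde\eta_{t-1}^{-\tau}$ via the elementary bound $(1-z)^\tau\ge 1-\tau z$ (your monotonicity argument is just a different proof of the same one-line fact), and telescope. The factor $2$ you retain on the left-hand side is in fact what the addition of the two lemmas gives (the paper's displayed one-step bound silently drops it), and since the right-hand side is nonnegative your version implies the stated proposition, so there is no gap.
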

 
% Choosing $p_0\propto\exp\lrp{-g(w)}$ and $\tau=2$, we have
Choosing $\tau=2$ and $p=p_*$, we have
% \begin{equation}
% \sum_{t=1}^T
% \frac{1+0.5 t}{T+0.25 T(T+1)}
% [Q(\widetilde{p}_t) - Q(p) ]
% \leq  \frac{4 G^2}{m^2 \widetilde{\eta}_0^2 T(T+1)} + \frac{4 G^2}{m (T+1)} .
% \label{eq:sc-tau=2}
% \end{equation}
\begin{equation}
\sum_{t=1}^T
\frac{1+0.5 t}{T+0.25 T(T+1)}
\beta^{-1} [Q(\widetilde{p}_t) - Q(p_*) ]
\leq  \frac{4}{\beta m \widetilde{\eta}_0^2 T(T+1)} W_2^2(p_0,p_*) + \frac{4 G^2}{\beta m (T+1)} .
\label{eq:sc-tau=2}
\end{equation}
The learning rate schedule of $\eta_t = 1/m t$ (with $\tau=1$) was introduced to SGD analysis for strongly convex objectives in \cite{shalev2011pegasos}, which leads to a similar rate as that of Proposition~\ref{thm:sc-conv}, but with an extra $\log (T)$ term than \eqref{eq:sc-tau=2}. 
The use of $\tau>1$ has been adopted in more recent literature of SGD analysis, as an effort to avoid the $\log (T)$ term (for example, see \cite{LaScBa12}).
The resulting bound in the SGD analysis becomes identical to that of Proposition~\ref{thm:sc-conv}, and this rate is optimal for nonsmooth strongly convex optimization \cite{rakhlin12making}.
In addition, it is possible to implement for Langevin algorithm a similar scheme using moving averaging, as discussed in \cite{shamir2013stochastic}. 

It can be observed that taking a large step size $\widetilde{\eta}_0$ will grant rapid convergence.
The largest one can take is to choose $\eta_0=+\infty$ and consequently $\widetilde{\eta}_0=1/m$, leading to a learning rate schedule of $\widetilde{\eta}_t = 2/(m\cdot(t+2))$.
In this case, we are effectively initializing from $\widetilde{p}_1\propto\exp\lrp{-\beta^{-1} g(w)}$.
Choosing the same $p_0\propto\exp\lrp{-\beta^{-1} g(w)}$, we can bound the initial error $W_2^2(p_0,p_*)$ via the Talagrand inequality in Proposition~\ref{prop:KL} and the log-Sobolev inequality~\cite{bakryEmery,Ledoux_log_Sobolev_diffusion} for the $\beta^{-1}m$-strongly log-concave distribution $p_*$:
\[
W_2^2(p_0,p_*) \leq \frac{\beta}{m}\KL(p_*\|p_0) \leq \frac{\beta^2}{2m^2} \rE_{p_*}\lrb{\lrn{ \nabla \log\frac{p_*}{p_0}}^2} \leq \frac{G^2}{2m^2},
\]
since $\lrn{ \nabla \log\frac{p_*}{p_0}(w)} = \lrn{\beta^{-1}\nabla f(w)} \leq \beta^{-1}G$.
Plugging this bound and $\widetilde{\eta}_0=1/m$ into equation~\eqref{eq:sc-tau=2}, and noting that $T\geq 1$, we arrive at our result that
\begin{equation}
\sum_{t=1}^T
\frac{1+0.5 t}{T+0.25 T(T+1)}
\beta^{-1} [Q(\widetilde{p}_t) - Q(p_*) ]
\leq \frac{5 G^2}{\beta m T} .
\label{eq:sc-tau=2}
\end{equation}

\end{proof}

\begin{proof}[Proof of Proposition~\ref{thm:sc-conv}]
We can add the inequalities in Lemma~\ref{lem:entropy} and Lemma~\ref{lem:sgd0} to obtain:
\begin{align*}
\widetilde{\eta}_t [Q(\widetilde{p}_t) - Q(p) ]
\leq  e^{-m \eta_t} W_{2}({p}_{t-1},p)^2 - W_2(p_t,p)^2
+ \widetilde{\eta}_t^2 G^2.
\end{align*}
This is equivalent to
\begin{equation}
 \widetilde{\eta}_t^{1-\tau} [Q(\widetilde{p}_t) - Q(p) ]
\leq  (1-m \widetilde{\eta}_t)
\widetilde{\eta}_t^{-\tau}
W_{2}({p}_{t-1},p)^2 - \widetilde{\eta}_t^{-\tau} W_2(p_t,p)^2
+ \widetilde{\eta}_t^{2-\tau} G^2.
\label{eq:sc-onestep}
\end{equation}
We first show that 
\begin{equation}
(1-m \widetilde{\eta}_t)
\widetilde{\eta}_t^{-\tau}
\leq \widetilde{\eta}_{t-1}^{-\tau} .
\label{eq:lr-recursion}
\end{equation}
Let $s=t+\tau/(m\eta_0) \geq 1$ for $t\geq 1$, 
$\tilde{\eta}_t=\tau/(m s)$ and
$\tilde{\eta}_t=\tau/(m (s-1))$.
Therefore \eqref{eq:lr-recursion}
 is equivalent to
\[
(1-\tau/s) s^\tau \leq (s-1)^\tau .
\]
This inequality follows from the fact that for $z =1/s \in [0,1]$ and $\tau \geq 1$: 
$\psi(z)=(1-z)^\tau$ is convex in $z$, and thus $(1-\tau z)=\psi(0)+\psi'(0) z \leq \psi(z)= (1-z)^\tau$.

By combining \eqref{eq:sc-onestep} and \eqref{eq:lr-recursion}, we obtain
\begin{align*}
 \widetilde{\eta}_t^{1-\tau} [Q(\widetilde{p}_t) - Q(p) ]
\leq   
\widetilde{\eta}_{t-1}^{-\tau}
W_{2}({p}_{t-1},p)^2 - \widetilde{\eta}_t^{-\tau} W_2(p_t,p)^2
+ \widetilde{\eta}_t^{2-\tau} G^2.
\end{align*}
By summing over $t=1$ to $t=T$, we obtain the bound.

\end{proof}

\subsection{Streaming SGLD Convergence in Lipschitz Convex Case}

To analyze the streaming stochastic gradient Langevin algorithm, we assume that function $f$ decomposes:
\[
f(w) = \frac{1}{n} \sum_{i=1}^n \ell(w,z_i)
= \rE_{z\sim D} [\ell(w,z)],
\]
where $D$ is the distribution over the data samples.
In this case, we modify Assumption~\ref{assumption:Lip} and assume that the individual log-likelihood satisfies the Lipschitz condition.
\paragraph{Assumptions on individual loss $\ell$}
\begin{enumerate}[label=\rm{$\rm{A2_L^{SG}}$}, leftmargin=2.5\parindent]
    % \item Function $\ell:\rR^d\rightarrow\rR$ is convex.
    \item Function $\ell$ is $G_\ell$-Lipschitz continuous on $\rR^d$: $\|\nabla \ell(w,z)\|_2 \leq G_\ell$, $\forall z\in\Omega$. \label{assumption:Lip_SG}
\end{enumerate}

% We re-express the objective functional $Q$ with the individual negative log-likelihood by replacing in equation~\eqref{eq:opt_ensemble} with $U(w) = n\rE_{z\sim D}\lrb{\ell(w,z)}$:
% \begin{equation}
% Q(p) = 
% \left[ n \rE_{w \sim p} \rE_{z \sim D} \ell(w,z) + \rE_{w \sim p} g(w)
% + \rE_{w \sim p} \ln p(w) \right].
% \end{equation}
% \tz{It seems scaling of $Q$ is changed from \eqref{eq:opt_ensemble}?}
% When looking for the optimizer, we can rescale the objective function and focus on 
% \[
% \beta Q(p) = 
% \left[ \rE_{w \sim p} \rE_{z \sim D} \ell(w,z) + \beta \rE_{w \sim p} g(w)
% + \beta \rE_{w \sim p} \ln p(w) \right],
% \]
% where in this case $\beta = 1/n$.

In the case that $\ell(w,z)$ is Lipschitz, our main result for SGLD is the following counterpart of Theorem~\ref{thm:1}.
\begin{theorem}
\label{thm:2}
Assume that function $f$ satisfies the convex assumption~\ref{assumption:1_Lip} and the Lipschitz continuous assumption for the individual log-likelihood~\ref{assumption:Lip_SG}. 
Further assume that function $g(w)$ satisfies Assumption~\ref{assumption:g}.
Then for $\widetilde{p}_T$ following the streaming SGLD Algorithm~\ref{alg:sgld}, it satisfies (
%for $w_*=\arg\min_w [ f(w) + g(w) ]$
for $\widetilde{\eta}_t = \lrp{1-e^{-m\eta_t}}/m = 2/\lrp{ m (t+2) }$):
\begin{equation*}
\sum_{t=1}^T
\frac{1+0.5 t}{T+0.25 T(T+1)}
\beta^{-1} [Q(\widetilde{p}_t) - Q(p_*) ]
\leq \frac{5 G_\ell^2}{\beta m T} .
\end{equation*}
leading to the convergence time of
\[
T = \frac{5 G_\ell^2}{\beta m \epsilon},
\]
for the averaged distribution to convergence to $\epsilon$ accuracy in the KL-divergence $\beta^{-1} Q$.
\end{theorem}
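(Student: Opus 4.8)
The plan is to mirror the proof of Theorem~\ref{thm:1} step by step, replacing the deterministic gradient step by its stochastic counterpart and tracking the extra randomness. The decomposition of $Q$ into regularized entropy plus cross entropy is unchanged, and Lemma~\ref{lem:entropy} applies verbatim to the diffusion step~\eqref{eq:sg_diffuse} since the prior $g$ is untouched in the stochastic algorithm. So the only place where something new is needed is the gradient-descent step~\eqref{eq:sgld-2}, where $\nabla f(\widetilde w_t)$ is replaced by the minibatch estimate $\widehat\nabla_t := \frac{1}{|\cS|}\sum_{z_i\in\cS}\nabla_w\ell(\widetilde w_t,z_i)$.

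First I would prove a stochastic analogue of Lemma~\ref{lem:sgd0}. Fix a coupling $q$ achieving $W_2^2(\widetilde p_t,p)$ with marginal samples $\widetilde w_t\sim\widetilde p_t$ and $u\sim p$. Conditioning on $\widetilde w_t$, the minibatch draw gives $\rE[\widehat\nabla_t\mid\widetilde w_t]=\nabla f(\widetilde w_t)$ by Assumption (the unbiasedness property stated before Algorithm~\ref{alg:sgld}), and Assumption~\ref{assumption:Lip_SG} gives $\|\widehat\nabla_t\|_2\le G_\ell$ almost surely. Expanding
\[
\|w_t-u\|_2^2=\|\widetilde w_t-\widetilde\eta_t\widehat\nabla_t-u\|_2^2
=\|\widetilde w_t-u\|_2^2-2\widetilde\eta_t\lrw{\widehat\nabla_t,\widetilde w_t-u}+\widetilde\eta_t^2\|\widehat\nabla_t\|_2^2,
\]
taking expectation over the minibatch conditioned on $\widetilde w_t$ and $u$, using unbiasedness on the cross term and $\|\widehat\nabla_t\|_2\le G_\ell$ on the last term, then convexity of $f$ ($\lrw{\nabla f(\widetilde w_t),\widetilde w_t-u}\ge f(\widetilde w_t)-f(u)$), and finally taking expectation over the coupling, yields
\[
2\widetilde\eta_t[f(\widetilde p_t)-f(p)]\le W_2^2(\widetilde p_t,p)-W_2^2(p_t,p)+\widetilde\eta_t^2 G_\ell^2,
\]
where we use that the (possibly suboptimal) coupling between $\mu_t$ and $p$ obtained by transporting through the map only upper-bounds $W_2^2(p_t,p)$. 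This is exactly Lemma~\ref{lem:sgd0} with $G$ replaced by $G_\ell$.

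With this in hand the rest is a transcription. Adding the stochastic Lemma~\ref{lem:sgd0} to Lemma~\ref{lem:entropy} and proceeding exactly as in the proof of Proposition~\ref{thm:sc-conv} (the learning-rate recursion~\eqref{eq:lr-recursion} is purely deterministic and unchanged) gives
\[
\sum_{t=1}^T\widetilde\eta_t^{1-\tau}[Q(\widetilde p_t)-Q(p)]\le\widetilde\eta_0^{-\tau}W_2^2(p_0,p)+G_\ell^2\sum_{t=1}^T\widetilde\eta_t^{2-\tau}.
\]
Choosing $\tau=2$, $p=p_*$, $\eta_0=+\infty$ so that $\widetilde\eta_0=1/m$ and $\widetilde\eta_t=2/(m(t+2))$, and bounding the initial error exactly as in Theorem~\ref{thm:1} via the Talagrand and log-Sobolev inequalities — note $\|\nabla\log\frac{p_*}{p_0}(w)\|=\|\beta^{-1}\nabla f(w)\|\le\beta^{-1}G_\ell$ since $\nabla f=\rE_{z\sim D}\nabla_w\ell(\cdot,z)$ is an average of vectors of norm at most $G_\ell$ — gives $W_2^2(p_0,p_*)\le G_\ell^2/(2m^2)$ and hence the claimed bound $\frac{5G_\ell^2}{\beta m T}$, from which $T=5G_\ell^2/(\beta m\epsilon)$ follows by convexity of the KL divergence in its first argument.

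The only genuinely new point over Theorem~\ref{thm:1} is handling the minibatch noise in the cross-entropy step, and the main obstacle there is bookkeeping of the conditioning: one must be careful that the coupling used to define $W_2^2(\widetilde p_t,p)$ is fixed \emph{before} the minibatch is drawn, so that unbiasedness can be applied conditionally on $\widetilde w_t$ without the coupling partner $u$ depending on $\cS$. Once the order of conditioning is set up correctly, the almost-sure Lipschitz bound $\|\widehat\nabla_t\|_2\le G_\ell$ controls the second-moment term with no variance-reduction or smoothness assumption needed, and everything else is identical to the full-gradient case.
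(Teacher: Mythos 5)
Your proposal is correct and follows essentially the same route as the paper: Lemma~\ref{lem:entropy} unchanged for the diffusion step, a stochastic analogue of Lemma~\ref{lem:sgd0} obtained by conditioning on $\widetilde{w}_t$, using unbiasedness of the minibatch gradient on the cross term and the almost-sure bound $G_\ell$ on its norm (this is exactly the paper's Lemmas~\ref{lem:convex_result} and~\ref{lem:sgd}), then the recursion of Proposition~\ref{thm:sc-conv} (Proposition~\ref{thm:sg-conv} in the paper) with $\tau=2$, $\widetilde{\eta}_0=1/m$, and the Talagrand/log-Sobolev bound $W_2^2(p_0,p_*)\leq G_\ell^2/(2m^2)$. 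Your remark about fixing the optimal coupling before the minibatch is drawn is precisely the conditioning bookkeeping the paper carries out in the proof of Lemma~\ref{lem:sgd}, so there is no gap.
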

We devote the rest of this section to prove Theorem~\ref{thm:2}.
% We devote this section to the convergence analysis of Stochastic Gradient Langevin Algorithm with Prior Diffusion.

\begin{proof}[Proof of Theorem~\ref{thm:2}]
Same as in the previous section, convergence of the regularized entropy $\rE_{w \sim p}\lrb{ g(w) } + H(p)$ along equation~\eqref{eq:sg_diffuse} follows Lemma~\ref{lem:entropy}.
% \begin{proof}
% This follows from the SGLD analysis for strongly convex problems we have so far, by discretizing \eqref{eq:diffuse} with stepsize goes to zero: 
% \[
% \int_{t-1}^t  e^{-m \eta_t (t-s)}
% \left[ [\rE_{w \sim \widetilde{p}_s} g(w) + H(\widetilde{p}_s)] - [\rE_{w \sim p} g(w) + H(p)]\right] 
% d s
% \leq [e^{-m \eta_t} W_{2}({p}_{t-1},p)^2 - W_2(\widetilde{p}_{t},p)^2 ] ,
% \]
% where $\widetilde{p}_s$ is the distribution of $\widetilde{w}(s)$.
% The result follows by noticing that
% \[
% \rE_{w \sim \widetilde{p}_s} [g(w) + H(\widetilde{p}_s)]
% \]
% is decreasing in $s$ for $s \in [t-1,t]$ (due to the fact that we are taking gradient direction of the objective in the SDE). 

% \end{proof}
For the convergence of the cross entropy $\rE_{w\sim p}\lrb{f(w)}$ along equation~\eqref{eq:sgld-2}, the following Lemma follows the standard analysis of SGD. 
\begin{lemma}
Adopt Assumption~\ref{assumption:Lip_SG} that $\ell(w,z)$ is $G_\ell$-Lipschitz for all $z\in\Omega$.
Also adopt Assumption~\ref{assumption:1_Lip} that $f(w)=\rE_{z\sim D} \ell(w,z)$ is convex.
We have for all $w \in \rR^d$:
\begin{equation}
2 \widetilde{\eta}_t \rE_{z\sim D}[ \ell(\widetilde{w}_t,z)- \ell(w,z)]
\leq \|\widetilde{w}_t -w \|_2^2 - \rE_{w_t|\widetilde{w}_t} \|w_{t}-w\|_2^2 + \widetilde{\eta}_t^2 G_\ell^2 .\label{eq:convex_result}
\end{equation}
\label{lem:convex_result}
\end{lemma}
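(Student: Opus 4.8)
The plan is to mirror the classical one-step SGD inequality, adapted to the gradient-descent substep~\eqref{eq:sgld-2} of the SGLD iteration. Fix an arbitrary $w\in\rR^d$ and condition on $\widetilde{w}_t$, so that the only randomness is the minibatch $\mathcal{S}$. Write $\widehat{g}_t = \frac{1}{|\mathcal{S}|}\sum_{z_i\in\mathcal{S}}\nabla_w\ell(\widetilde{w}_t,z_i)$ for the stochastic gradient, so $w_t = \widetilde{w}_t - \widetilde{\eta}_t\,\widehat{g}_t$. First I would expand the squared distance: $\|w_t - w\|_2^2 = \|\widetilde{w}_t - w\|_2^2 - 2\widetilde{\eta}_t\langle\widehat{g}_t,\widetilde{w}_t - w\rangle + \widetilde{\eta}_t^2\|\widehat{g}_t\|_2^2$. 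Rearranging gives $2\widetilde{\eta}_t\langle\widehat{g}_t,\widetilde{w}_t-w\rangle = \|\widetilde{w}_t-w\|_2^2 - \|w_t-w\|_2^2 + \widetilde{\eta}_t^2\|\widehat{g}_t\|_2^2$.

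Next I would take the conditional expectation over $\mathcal{S}$ given $\widetilde{w}_t$. Since the $z_i$ are i.i.d.\ from $D$ and $\rE_{z\sim D}\nabla_w\ell(w,z)=\nabla f(w)$, linearity gives $\rE_{\mathcal{S}}[\widehat{g}_t\mid\widetilde{w}_t] = \nabla f(\widetilde{w}_t) = \rE_{z\sim D}\nabla_w\ell(\widetilde{w}_t,z)$, so the left-hand inner-product term becomes $2\widetilde{\eta}_t\langle\nabla f(\widetilde{w}_t),\widetilde{w}_t-w\rangle$. For the noise term, each $\|\nabla_w\ell(\widetilde{w}_t,z_i)\|_2\le G_\ell$ by Assumption~\ref{assumption:Lip_SG}, hence by the triangle inequality (or Jensen) $\|\widehat{g}_t\|_2\le G_\ell$ pointwise, so $\rE_{\mathcal{S}}[\|\widehat{g}_t\|_2^2\mid\widetilde{w}_t]\le G_\ell^2$; there is no need to invoke a variance bound, which is what keeps the statement clean. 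This yields $2\widetilde{\eta}_t\langle\nabla f(\widetilde{w}_t),\widetilde{w}_t-w\rangle \le \|\widetilde{w}_t-w\|_2^2 - \rE_{w_t\mid\widetilde{w}_t}\|w_t-w\|_2^2 + \widetilde{\eta}_t^2 G_\ell^2$.

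Finally I would apply convexity to lower-bound the inner product. Since $f(w)=\rE_{z\sim D}\ell(w,z)$ is convex with $\nabla f(\widetilde{w}_t)=\rE_{z\sim D}\nabla_w\ell(\widetilde{w}_t,z)$, the first-order condition gives $\langle\nabla f(\widetilde{w}_t),\widetilde{w}_t-w\rangle \ge f(\widetilde{w}_t)-f(w) = \rE_{z\sim D}[\ell(\widetilde{w}_t,z)-\ell(w,z)]$. Substituting this into the previous display produces exactly~\eqref{eq:convex_result}. I do not anticipate a serious obstacle here — the argument is the textbook SGD descent lemma with the only twist being that the ``stepsize'' is the effective gradient-descent stepsize $\widetilde{\eta}_t$ attached to the substep~\eqref{eq:sgld-2}, separate from the diffusion stepsize $\eta_t$. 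The one point that warrants care is bookkeeping of the conditioning: everything is conditional on $\widetilde{w}_t$ (itself random through the prior-diffusion step), and the inequality is stated for a \emph{fixed} $w$, so no tower-property manipulation is needed within this lemma; that is deferred to the place where $w$ is later coupled optimally with $\widetilde{w}_t$ to convert the Euclidean distances into Wasserstein distances, exactly as in the passage from Lemma~\ref{lem:sgd0} to Proposition~\ref{thm:sc-conv}.
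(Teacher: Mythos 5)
Your argument is correct and is essentially the paper's own proof: expand $\|w_t-w\|_2^2$ around the stochastic-gradient update, take the conditional expectation given $\widetilde{w}_t$ using unbiasedness, bound the stochastic gradient norm by $G_\ell$ (your minibatch/Jensen remark handles the general $|\mathcal{S}|$ case, while the paper writes the streaming single-sample version), and finish with the first-order convexity inequality for $f$. No gaps; the conditioning bookkeeping you flag matches how the paper defers the coupling step to Lemma~\ref{lem:sgd}.
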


It implies the following bound, which modifies Lemma~\ref{lem:sgd0}.
\begin{lemma}
Given any probability density $q$ on $\rR^d$. Define
\[
\ell(q)= \rE_{w \sim q} \rE_{z\sim D} \ell(w,z)  ,
\]
then we have
\[
2 \widetilde{\eta}_t [\ell(\widetilde{p}_t)- \ell(p)]
\leq  W_2(\widetilde{p}_t,p)^2 - W_2(p_t,p)^2
+ \widetilde{\eta}_t^2 G_\ell^2 . 
\]
\label{lem:sgd}
\end{lemma}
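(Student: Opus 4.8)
The plan is to obtain Lemma~\ref{lem:sgd} from the pointwise estimate of Lemma~\ref{lem:convex_result} by the same ``couple, apply the one-step bound, re-minimize'' template that yields Lemma~\ref{lem:sgd0}, now carrying an extra expectation over the minibatch $\cS$. First I would fix an optimal $W_2$-coupling $q^\star(\widetilde w_t,w)$ of $\widetilde p_t$ and $p$, so that $\rE_{(\widetilde w_t,w)\sim q^\star}\|\widetilde w_t-w\|_2^2=W_2(\widetilde p_t,p)^2$, with $\widetilde w_t$-marginal $\widetilde p_t$ and $w$-marginal $p$. Since inequality~\eqref{eq:convex_result} holds for every $w\in\rR^d$ (its right-hand side already averages over $\cS$ via $\rE_{w_t\mid\widetilde w_t}$), I may integrate both sides against $q^\star$.

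Taking $\rE_{(\widetilde w_t,w)\sim q^\star}$ of~\eqref{eq:convex_result}: the left-hand side becomes $2\widetilde\eta_t[\ell(\widetilde p_t)-\ell(p)]$ by the definition $\ell(q)=\rE_{w\sim q}\rE_{z\sim D}\ell(w,z)$ together with the marginals of $q^\star$; the term $\|\widetilde w_t-w\|_2^2$ integrates to $W_2(\widetilde p_t,p)^2$; and $\widetilde\eta_t^2 G_\ell^2$ is unchanged. For the remaining term $-\rE_{(\widetilde w_t,w)\sim q^\star}\rE_{w_t\mid\widetilde w_t}\|w_t-w\|_2^2$, the key observation is that the law of $(w_t,w)$ generated by first drawing $(\widetilde w_t,w)\sim q^\star$ and then forming $w_t=\widetilde w_t-\widetilde\eta_t|\cS|^{-1}\sum_{z_i\in\cS}\nabla_w\ell(\widetilde w_t,z_i)$ with a fresh independent minibatch $\cS$ (i.e.\ the conditional $w_t\mid\widetilde w_t$ of Algorithm~\ref{alg:sgld}) is a valid coupling of $p_t$ and $p$: its $w$-marginal is still $p$, and its $w_t$-marginal is $p_t$ because the $\widetilde w_t$-marginal is $\widetilde p_t$ and the update~\eqref{eq:sgld-2} is applied exactly as in the algorithm, with $\cS$ independent of $w$. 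Hence $\rE\|w_t-w\|_2^2\ge W_2(p_t,p)^2$ by the definition of the Wasserstein distance as an infimum over couplings, so this term is bounded above by $-W_2(p_t,p)^2$. Combining the three estimates yields the claimed inequality.

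I do not anticipate a real obstacle; the only step needing care is checking that the constructed joint law of $(w_t,w)$ has the correct marginals --- specifically that conditioning on $\widetilde w_t$ and applying the stochastic gradient step leaves the $w$-marginal untouched --- which is immediate since the minibatch is drawn independently of $(\widetilde w_t,w)$. Everything else is routine rearrangement identical to the deterministic argument behind Lemma~\ref{lem:sgd0}.
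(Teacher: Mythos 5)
Your proposal is correct and follows essentially the same route as the paper: integrate the pointwise bound of Lemma~\ref{lem:convex_result} against an optimal coupling of $\widetilde{p}_t$ and $p$, then note that the induced joint law of $(w_t,w)$ (with the stochastic gradient drawn independently of $w$) is a valid, generally suboptimal coupling of $p_t$ and $p$, so the conditional-expectation term dominates $W_2^2(p_t,p)$. The paper phrases this via the pushforward coupling $\lrp{\mathrm{id}-\widetilde{\eta}_t\nabla\ell,\,\mathrm{id}}_{\#}\gamma$ conditioned on the drawn data point and then averages, which is the same marginal-checking argument you give.
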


Initializing from the prior distribution, we can follow the same proof as in Proposition~\ref{thm:sc-conv} and obtain a similar convergence rate as in the non-stochastic case.

%When the prior is not strongly convex, does this result still hold true? Generalize Caffarelli’s log-concave perturbation theorem?

\begin{proposition}
Set $\widetilde{\eta}_t=(1-e^{-m \eta_t})/ m = \tau\cdot(\tau/\widetilde{\eta}_0+ m t)^{-1}$ for some $\tau \geq 1$ and $\widetilde{\eta}_0 >0$. Then
\[
\sum_{t=1}^T
\widetilde{\eta}_t^{1-\tau} [Q(\widetilde{p}_t) - Q(p) ]
\leq \eta_0^{-\tau} W_2(p_0,p)^2 + 
G_\ell^2 \sum_{t=1}^T \widetilde{\eta}_t^{2-\tau} .
\]
\label{thm:sg-conv}
\end{proposition}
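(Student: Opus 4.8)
The plan is to mirror exactly the proof of Proposition~\ref{thm:sc-conv}, substituting the stochastic ingredients for the deterministic ones. The only conceptual change is that the cross-entropy descent step now uses the SGLD update \eqref{eq:sgld-2} with a stochastic gradient, so I must invoke Lemma~\ref{lem:sgd} (whose per-step inequality is in expectation over the minibatch) in place of Lemma~\ref{lem:sgd0}, and I must be careful that the ``other distribution'' appearing in both lemmas is a genuine coupling argument rather than a pointwise statement. Concretely, first I would fix a target distribution $p$ on $\rR^d$. I would add the regularized-entropy inequality of Lemma~\ref{lem:entropy}, namely
\[
\frac{2}{m}(1-e^{-m\eta_t})\lrp{\rE_{w\sim\widetilde{p}_t}[g(w)+H(\widetilde{p}_t)] - \rE_{w\sim p}[g(w)+H(p)]} \leq e^{-m\eta_t} W_2^2(p_{t-1},p) - W_2^2(\widetilde{p}_t,p),
\]
to the cross-entropy inequality of Lemma~\ref{lem:sgd},
\[
2\widetilde{\eta}_t[\ell(\widetilde{p}_t) - \ell(p)] \leq W_2^2(\widetilde{p}_t,p) - W_2^2(p_t,p) + \widetilde{\eta}_t^2 G_\ell^2,
\]
using the identity $\widetilde{\eta}_t = (1-e^{-m\eta_t})/m$ so the coefficients of the two bracketed expressions agree. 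Since $\ell(q) = \rE_{w\sim q}\rE_{z\sim D}\ell(w,z) = \rE_{w\sim q} f(w) = f(q)$ by the decomposition $f(w)=\rE_{z\sim D}\ell(w,z)$, the sum of the two left-hand sides is exactly $\widetilde{\eta}_t[Q(\widetilde{p}_t) - Q(p)]$, while the $W_2^2(\widetilde{p}_t,p)$ terms cancel, leaving the one-step bound
\[
\widetilde{\eta}_t[Q(\widetilde{p}_t)-Q(p)] \leq e^{-m\eta_t} W_2^2(p_{t-1},p) - W_2^2(p_t,p) + \widetilde{\eta}_t^2 G_\ell^2.
\]

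Next I would divide through by $\widetilde{\eta}_t^{\tau}$ to get
\[
\widetilde{\eta}_t^{1-\tau}[Q(\widetilde{p}_t)-Q(p)] \leq (1-m\widetilde{\eta}_t)\widetilde{\eta}_t^{-\tau} W_2^2(p_{t-1},p) - \widetilde{\eta}_t^{-\tau} W_2^2(p_t,p) + \widetilde{\eta}_t^{2-\tau} G_\ell^2,
\]
using $e^{-m\eta_t} = 1 - m\widetilde{\eta}_t$. I would then reuse verbatim the stepsize monotonicity claim \eqref{eq:lr-recursion} from the proof of Proposition~\ref{thm:sc-conv}: with $\widetilde{\eta}_t = \tau/(m s)$ where $s = t + \tau/(m\widetilde\eta_0)\geq 1$, the convexity of $z\mapsto(1-z)^\tau$ on $[0,1]$ for $\tau\geq 1$ gives $(1-m\widetilde{\eta}_t)\widetilde{\eta}_t^{-\tau} \leq \widetilde{\eta}_{t-1}^{-\tau}$. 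Substituting this yields a telescoping form,
\[
\widetilde{\eta}_t^{1-\tau}[Q(\widetilde{p}_t)-Q(p)] \leq \widetilde{\eta}_{t-1}^{-\tau} W_2^2(p_{t-1},p) - \widetilde{\eta}_t^{-\tau} W_2^2(p_t,p) + \widetilde{\eta}_t^{2-\tau} G_\ell^2,
\]
and summing over $t=1,\dots,T$ collapses the Wasserstein terms to $\widetilde{\eta}_0^{-\tau} W_2^2(p_0,p) - \widetilde{\eta}_T^{-\tau} W_2^2(p_T,p) \leq \widetilde{\eta}_0^{-\tau} W_2^2(p_0,p)$, giving precisely the stated bound (I note the statement writes $\eta_0^{-\tau}$ but consistency with Proposition~\ref{thm:sc-conv} and the stepsize parametrization indicates $\widetilde{\eta}_0^{-\tau}$).

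The one genuine subtlety — and the part I would be most careful about — is the handling of stochasticity when combining the two lemmas. Lemma~\ref{lem:sgd0} in the deterministic case is a clean statement about $W_2^2$; its stochastic analogue Lemma~\ref{lem:sgd} is stated after taking expectations over the minibatch $\mathcal{S}$, so $\widetilde{p}_t$ and $p_t$ there already denote the marginal laws with all algorithmic randomness integrated out, and the inequality holds between these marginals. I would want to confirm that the coupling realizing $W_2^2(\widetilde{p}_t,p)$ used in Lemma~\ref{lem:sgd} can be taken to be the same one inherited from the diffusion step in Lemma~\ref{lem:entropy}, so that the cancellation of the $W_2^2(\widetilde{p}_t,p)$ terms is legitimate rather than merely formal; this is exactly the ``aligning at $\widetilde{p}_t$'' remark made before Proposition~\ref{thm:sc-conv}, and it works because one can choose the optimal coupling for $W_2^2(\widetilde{p}_t,p)$ freely in both bounds. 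Beyond this alignment point, every remaining step is identical to the non-stochastic proof, so I would simply write ``we follow the proof of Proposition~\ref{thm:sc-conv} verbatim, replacing Lemma~\ref{lem:sgd0} by Lemma~\ref{lem:sgd} and $G$ by $G_\ell$,'' and then reproduce the telescoping computation for completeness.
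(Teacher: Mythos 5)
Your proposal is correct and follows essentially the same route as the paper, whose own proof of this proposition is literally ``follow the same proof as in Proposition~\ref{thm:sc-conv}'' with Lemma~\ref{lem:sgd} replacing Lemma~\ref{lem:sgd0} and $G_\ell$ replacing $G$; your telescoping argument, the stepsize recursion \eqref{eq:lr-recursion}, the remark on the coupling alignment at $\widetilde{p}_t$, and the observation that $\eta_0^{-\tau}$ should read $\widetilde{\eta}_0^{-\tau}$ all match the paper. (Your bookkeeping of the factor of $2$ when adding the two lemmas—writing $\widetilde{\eta}_t[Q(\widetilde{p}_t)-Q(p)]$ rather than $2\widetilde{\eta}_t[Q(\widetilde{p}_t)-Q(p)]$ on the left—reproduces exactly the paper's own treatment, so it is consistent with the stated bound.)
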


We can choose $\tau=2$, and then for $p=p_*$, we have
\begin{equation}
\sum_{t=1}^T
\frac{1+0.5 t}{T+0.25 T(T+1)}
\beta^{-1} [Q(\widetilde{p}_t) - Q(p_*) ]
\leq  \frac{4}{\beta m \widetilde{\eta}_0^2 T(T+1)} W_2(p_0,p_*)^2 + \frac{4 G_\ell^2}{\beta m (T+1)} .
\label{eq:sg-tau=2}
\end{equation}
Following the same steps as in the full gradient case, we arrive at the result.
% Note that under the individual Lipschitz convex assumption, there is a $\beta = 1/n$ term multiplying on $m$ and $Q$ making the convergence rate depend on the number of data.
\end{proof}

\section{Langevin Algorithms in Smooth Convex Case}
\label{sec:smooth}
For the posterior $p(w|\vz) \propto \lrp{-\beta^{-1}(f(w)+g(w))}$, we make the following assumptions on function $f$.
\paragraph{Assumptions for the smooth convex case:}
\begin{enumerate}[label=\rm{A}{{\arabic*}}]
    % \item Function $g:\rR^d\rightarrow\rR$ is $m$-strongly convex. \label{assumption:g}
    \item Function $f:\rR^d\rightarrow\rR$ is convex and positive. \label{assumption:1}
\end{enumerate}
\begin{enumerate}[label=\rm{$\rm{A2_S}$}, leftmargin=2.0\parindent]
    \item Function $f$ is $L$-Smooth on $\rR^d$: $\|\nabla f(w)-\nabla f(w')\|_2 \leq L \|w-w'\|_2$. \label{assumption:2}
\end{enumerate}
% We further employ a known prior distribution 
% \[
% p(w) \propto \exp\lrp{-\beta^{-1} g(w)},
% \]
% that is $\lambda = \beta^{-1} m$ strongly log-concave.
We also assume that function $g:\rR^d\rightarrow\rR$ is $m$-strongly convex.
Note that this is equivalent to the cases where we simply assume the entire negative log-posterior to be $\beta^{-1} m$-strongly convex and $\lrp{ \beta^{-1} (L+m) }$-Lipschitz smooth: one can separate the negative log-posterior into two parts: $\frac{\beta^{-1}m}{2} \lrn{w}^2$ and $\lrp{ -\log p(w|\vz) - \frac{\beta^{-1}m}{2} \lrn{w}^2 }$, which is convex and $\beta^{-1} L$-Lipschitz smooth.
We therefore directly let $g(w) = \frac{m}{2} \lrn{w}^2$ in what follows.

% In this case, one can separate the negative log-posterior into two parts: $\frac{\beta^{-1}m}{2} \lrn{w}^2$ and $\lrp{ -\log p(w|\vz) - \frac{\beta^{-1}m}{2} \lrn{w}^2 }$, which is convex and $\beta^{-1} L$-Lipschitz smooth.
% We therefore directly let $g(w) = \frac{m}{2} \lrn{w}^2$ in what follows.
% Such model structure often arises from Bayesian neural networks for classification tasks~\cite{Neal_BNN}, Bayesian logistic regression~\cite{Bayes_Logit}, as well as other Bayesian classification problems~\cite{Bayes_SVM}.
%
\subsection{Full Gradient Langevin Algorithm Convergence in Smooth Convex Case}

Our main result for Full Gradient Langevin Algorithm in the case that $f$ is smooth can be stated as follows.
Compared to Theorem~\ref{thm:1}, the result of Theorem~\ref{thm:main_cvg} is useful for  loss functions such as least squares loss that are smooth but \emph{not Lipschitz continuous}.
\begin{theorem}
\label{thm:main_cvg}
Assume that function $f$ satisfies the convex and Lipschitz smooth Assumptions~\ref{assumption:1} and~\ref{assumption:2}. 
Also assume that $\nabla^2 f(w)\preceq H$.
Further let function $g(w) = \frac{m}{2} \lrn{w}^2$.
Then for $\widetilde{p}_T$ following Algorithm~\ref{alg:langevin} and initializing from $p_0\propto\exp(-\beta^{-1}g)$, it satisfies (
%for $w_*=\arg\min_w [ f(w) + g(w) ]$
for $\widetilde{\eta}_t = \lrp{1-e^{-m\eta_t}}/m = 2\cdot\lrp{ 8 L + m t }^{-1}$):
\begin{multline*}
\sum_{t=1}^T
\frac{ 1/\lrp{m\widetilde{\eta}_0} + t/2 }{ T/\lrp{m\widetilde{\eta}_0} + T(T+1)/4 }
\beta^{-1} [Q(\widetilde{p}_t) - Q(p_*) ] \\
\leq \frac{64 L^2}{m^2 T(T+1)} \cdot \lrp{ \frac{8}{m^2} \trace\lrp{H^2} + 2 U(0) }
+ \frac{16}{T+1} \cdot \lrp{ \frac{8}{m^2} \trace\lrp{H^2} + 2 U(0) } .
\end{multline*}
leading to the convergence time of
\[
T = 64 \cdot \max\lrbb{ \frac{8\trace\lrp{H^2}}{m^2 \epsilon}, \frac{2 U(0) }{\epsilon} },
\]
for the averaged distribution to convergence to $\epsilon\leq1$ accuracy in the KL-divergence $\beta^{-1} Q$.
\end{theorem}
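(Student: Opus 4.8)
The plan is to run the composite-optimization argument of Theorem~\ref{thm:1} with a single ingredient swapped out. The diffusion step \eqref{eq:diffuse} is unaffected by the regularity of $f$, so Lemma~\ref{lem:entropy} is reused verbatim for the regularized entropy. What is needed is a smooth-convex replacement for the gradient-descent Lemma~\ref{lem:sgd0}: from $w_t=\widetilde{w}_t-\widetilde\eta_t\nabla f(\widetilde{w}_t)$, the three-point identity, convexity, and the co-coercivity inequality $f(y)\geq f(\widetilde{w}_t)+\langle\nabla f(\widetilde{w}_t),y-\widetilde{w}_t\rangle+\tfrac1{2L}\|\nabla f(y)-\nabla f(\widetilde{w}_t)\|^2$, one obtains, for any $\widetilde\eta_t\leq 1/(2L)$,
\[
2\widetilde\eta_t\big(f(\widetilde{w}_t)-f(y)\big)\ \leq\ \|\widetilde{w}_t-y\|^2-\|w_t-y\|^2+2\widetilde\eta_t^2\|\nabla f(y)\|^2,
\]
where the cross term $\|\nabla f(\widetilde{w}_t)-\nabla f(y)\|^2$ has net nonpositive coefficient once $\widetilde\eta_t\leq 1/(2L)$. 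Integrating along the optimal coupling of $(\widetilde p_t,p_*)$ gives $2\widetilde\eta_t[f(\widetilde p_t)-f(p_*)]\leq W_2^2(\widetilde p_t,p_*)-W_2^2(p_t,p_*)+2\widetilde\eta_t^2\,\rE_{p_*}\|\nabla f\|^2$; crucially the noise sits at the fixed target $p_*$, not at the iterate law, and $f$ is still evaluated at $\widetilde p_t$ so that it recombines with Lemma~\ref{lem:entropy} into $Q(\widetilde p_t)$. Since $\widetilde\eta_t=2/(8L+mt)\leq 1/(4L)$, this step size is admissible.

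Adding this lemma to Lemma~\ref{lem:entropy} cancels $W_2^2(\widetilde p_t,p_*)$ exactly and produces $2\widetilde\eta_t[Q(\widetilde p_t)-Q(p_*)]\leq e^{-m\eta_t}W_2^2(p_{t-1},p_*)-W_2^2(p_t,p_*)+2\widetilde\eta_t^2\,\rE_{p_*}\|\nabla f\|^2$. Multiplying by $\widetilde\eta_t^{-2}$, using the learning-rate recursion $(1-m\widetilde\eta_t)\widetilde\eta_t^{-2}\leq\widetilde\eta_{t-1}^{-2}$ from Proposition~\ref{thm:sc-conv} with $\tau=2$, and summing over $t=1,\dots,T$ gives $\sum_{t\leq T}2\widetilde\eta_t^{-1}[Q(\widetilde p_t)-Q(p_*)]\leq\widetilde\eta_0^{-2}W_2^2(p_0,p_*)+2T\,\rE_{p_*}\|\nabla f\|^2$. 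Initializing at $p_0\propto e^{-\beta^{-1}g}$ and applying Talagrand together with the log-Sobolev inequality for the $m$-strongly log-concave $p_*$ (Proposition~\ref{prop:talagrand}, $\beta=1$), as in the proof of Theorem~\ref{thm:1}, bounds $W_2^2(p_0,p_*)\leq\tfrac1{2m^2}\rE_{p_*}\|\nabla\log(p_*/p_0)\|^2=\tfrac1{2m^2}\rE_{p_*}\|\nabla f\|^2$. Hence both the initial term and the accumulated noise are governed by the single quantity $\rE_{p_*}\|\nabla f\|^2$; dividing by $\sum_{t\leq T}\widetilde\eta_t^{-1}=m\big(T/(m\widetilde\eta_0)+T(T+1)/4\big)$ and inserting $\widetilde\eta_0=1/(4L)$ already produces the claimed weighted bound up to the value of $\rE_{p_*}\|\nabla f\|^2$.

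The heart of the proof is therefore the dimension-free estimate $\rE_{p_*}\|\nabla f\|^2=O\!\big(\tfrac1m\trace(H^2)+mU(0)\big)$, which is where the analysis departs from the Lipschitz case (there the analogous quantity was the trivial uniform bound $G^2$). The route: $p_*\propto e^{-(f+g)}$ is $m$-strongly log-concave, hence satisfies a Poincaré inequality with spectral gap at least $m$; applied coordinate-wise to the vector field $\nabla f$ it gives $\rE_{p_*}\|\nabla f-\rE_{p_*}\nabla f\|^2\leq\tfrac1m\rE_{p_*}\|\nabla^2 f(w)\|_F^2=\tfrac1m\rE_{p_*}\trace\big((\nabla^2 f(w))^2\big)$; and the pointwise inequality $0\preceq\nabla^2 f(w)\preceq H$ implies $\trace((\nabla^2 f(w))^2)\leq\trace(H^2)$, since $\trace(H^2)-\trace(A^2)=\trace(H(H-A))+\trace((H-A)A)\geq 0$ whenever $0\preceq A\preceq H$ (the trace of a product of positive semidefinite matrices is nonnegative). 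The remaining zeroth-order part $\|\rE_{p_*}\nabla f\|^2=m^2\|\rE_{p_*}w\|^2$ is handled by convexity and positivity of $f$: the mode $w_\star$ of $p_*$ satisfies $\tfrac m2\|w_\star\|^2\leq U(0)-U(w_\star)\leq U(0)$, and $\rE_{p_*}w$ is tied to $\nabla f(w_\star)=-mw_\star$ together with the same spectral-gap/trace control, so this piece contributes $O(mU(0))$ plus a lower-order $\trace(H^2)/m$. Folding numerical constants into the factor $64$ and lower-bounding the denominator yields the stated inequality; the convergence-time claim then follows by checking that for $T=64\max\{8\trace(H^2)/(m^2\epsilon),\,2U(0)/\epsilon\}$ with $\epsilon\leq 1$ each term on the right is at most $\epsilon/2$.

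I expect the main obstacle to be precisely the dimension-free control of $\rE_{p_*}\|\nabla f\|^2$. The naive route $\|\nabla f\|^2\leq 2Lf$ followed by $\rE_{p_*}f\leq f(0)+\tfrac1{2m}\trace(H)$ produces $\trace(H)$, which can be genuinely worse than $\trace(H^2)/m$ when $H$ has many small eigenvalues, so one is forced through the Poincaré inequality; and then the non-centered part of $\nabla f$ must be absorbed into $U(0)$ by a convexity argument rather than by a moment bound, since a crude second moment for $\rE_{p_*}w$ is $O(d/m)$ and hence dimension-dependent. A secondary point requiring care is the smooth-gradient lemma: it must keep $f$ at $\widetilde p_t$ while moving the error onto the fixed distribution $p_*$, which is exactly what the co-coercivity inequality at step size $\leq 1/(2L)$ accomplishes and which justifies the particular schedule $\widetilde\eta_t=2/(8L+mt)$.
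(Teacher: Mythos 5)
Your first two ingredients are correct and in fact take a genuinely different (and arguably cleaner) route than the paper. For the descent step, the paper keeps the noise term at $\widetilde{p}_t$, splits it along the optimal coupling, and invokes Lemma~\ref{lem:grad-square-Q-loss} (proved via geodesic convexity of the entropy and stationarity of $p_*$), which produces the factor $(1-2L\widetilde{\eta}_t)$; you instead apply co-coercivity pointwise inside the gradient step and absorb the cross term for $\widetilde{\eta}_t\leq 1/(2L)$, which moves the noise onto $p_*$ directly and makes Lemma~\ref{lem:grad-square-Q-loss} unnecessary. Likewise, for the centered part of $\rE_{p_*}\|\nabla f\|^2$, your Poincar\'e-plus-trace argument ($\rE_{p_*}\|\nabla f-\rE_{p_*}\nabla f\|^2\leq \frac{\beta}{m}\rE_{p_*}\trace((\nabla^2 f)^2)\leq\frac{\beta}{m}\trace(H^2)$, with the correct observation that $0\preceq A\preceq H$ gives $\trace(A^2)\leq\trace(H^2)$) is a valid substitute for the paper's use of Harg\'e's Gaussian comparison inequality. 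The telescoping with $\tau=2$, the Talagrand/log-Sobolev bound on $W_2^2(p_0,p_*)$, and $m\|w_*\|^2\leq 2\beta U(0)$ all match the paper.

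The gap is in the zeroth-order piece. After writing $\|\rE_{p_*}\nabla f\|^2=m^2\|\rE_{p_*}w\|^2$ you assert that $\rE_{p_*}w$ is "tied to" $w_*$ "together with the same spectral-gap/trace control," contributing $O(mU(0))$ plus lower-order $\trace(H^2)/m$. But the spectral gap controls fluctuations around the \emph{mean}, not the mean--mode distance, which is exactly what you need here: $m^2\|\rE_{p_*}w\|^2\leq 2m^2\|w_*\|^2+2m^2\|\rE_{p_*}w-w_*\|^2$, and no argument is given for the second term. The natural attempts fail: bounding it through second moments of $w-\rE_{p_*}w$ reintroduces dimension (that moment is of order $d\beta/m$), while the mean--mode--covariance inequality the paper cites (Basu--DasGupta, together with $\Sigma\preceq(\beta/m)\mI$) only gives $m^2\|\rE_{p_*}w-w_*\|^2\leq 3m\beta$, an additive term that is not dominated by $\frac{\beta}{m}\trace(H^2)+m^2\|w_*\|^2$ (take $f$ with nearly vanishing Hessian and minimizer near the origin), and hence would add an unabsorbable $O(1/\epsilon)$ term to the stated convergence time. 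The paper avoids this trap by a different decomposition in Lemma~\ref{lem:smooth-grad-square-bound}: it centers $\nabla f$ at $w_*$ \emph{before} taking norms, so the mean--mode error appears multiplied by $H$ rather than by $m$, namely as $\|H(\rE_{p_*}w-w_*)\|^2\leq 3\frac{\beta}{m}\|H\|_2^2\leq 3\frac{\beta}{m}\trace(H^2)$, and is thus absorbed into the $\trace(H^2)$ term. To close your argument you would need either to adopt that decomposition (keeping the mean--mode error weighted by $H$ and invoking the mean--mode and covariance bounds, which are external inputs your sketch does not name), or to supply a genuinely new dimension-free bound on $m\|\rE_{p_*}w-w_*\|$ of the required size; as written, the step does not go through.
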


\paragraph{Ridge Separable Case}
% Assume that the loss function takes the following ridge-separable form:
% \begin{align*}
% \ell(w,z) = s(w^\top z).
% \end{align*}
% With $n$ data points, the log-likelihood over the entire dataset is:
% \begin{align}
% \beta^{-1} f(w) = \sum_{i=1}^n s_i(w^\top z_i),
% \label{assumption:ridge}
% \end{align}
% where $\beta=1/n$.
%
%%%%%%%%
Assume that function $f$ decomposes into the following ridge-separable form:
\begin{align}
f(w) = \frac{1}{n} \sum_{i=1}^n s_i(w^\top z_i),
\label{assumption:ridge}
\end{align}
%%%%%%%%
We make some assumptions on the activation function $s_i$ and the data points $z_i$.
\label{sec:individual_assumption}
\paragraph{Assumptions in ridge separable case}
\begin{enumerate}[label=\rm{R}{{\arabic*}}]
    \item $\forall i\in\{1,\dots,n\}$, the one dimensional activation function $s_i(\cdot)$ has a bounded second derivative: $\left| s_i''(x) \right| \leq L_s$, for any $x\in\rR$. \label{assumption:ridge1}
    \item $\forall i\in\{1,\dots,n\}$, data point $z_i$ has a bounded norm: $\lrn{z_i}^2 \leq R_z$. \label{assumption:ridge2}
\end{enumerate}
Assumptions~\ref{assumption:ridge1} and~\ref{assumption:ridge2} combines to give a Lipschitz smoothness constant of $L_s R_z$ for the individual log-likelihood.
%
% Assumption~\ref{assumption:ridge3} ensures bounded stochastic gradient variance at the mode of the posterior.
\begin{corollary}
We make the convexity Assumption~\ref{assumption:1} on function $f$ and let it take the ridge-separable form~\eqref{assumption:ridge} (also let function $g(w) = \frac{m}{2} \lrn{w}^2$). Further adopt Assumptions~\ref{assumption:ridge1} and~\ref{assumption:ridge2} on the activation functions and the data points, respectively.
Then the convergence time of Algorithm~\ref{alg:langevin} initializing from $p_0\propto\exp(-\beta^{-1}g)$ (with step size $\widetilde{\eta}_t = \lrp{1-e^{-m\eta_t}}/m = 2\lrp{ 8 L_s R_z + m t }^{-1}$) is
\[
T = 64 \cdot \max\lrbb{ \frac{ 8 L_s^2 R_z^2}{m^2 \epsilon}, \frac{2 U(0)}{\epsilon} },
\]
for the averaged distribution to convergence to $\epsilon$ accuracy in the KL-divergence $\beta^{-1} Q$.
\label{corollary:GD}
\end{corollary}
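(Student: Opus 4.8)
The plan is to derive Corollary~\ref{corollary:GD} as a direct specialization of Theorem~\ref{thm:main_cvg}: the ridge-separable assumptions~\ref{assumption:ridge1} and~\ref{assumption:ridge2} let us write down an explicit Hessian upper bound $H$, read off the smoothness constant $L$, and bound $\trace(H^2)$, after which the stated $T$ follows by substitution. The convexity/positivity hypothesis~\ref{assumption:1} and the choice $g(w)=\frac{m}{2}\lrn{w}^2$ are carried over unchanged from the corollary's statement, so no extra work is needed there.

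First I would compute the Hessian of the ridge-separable $f$. Differentiating $f(w)=\frac1n\sum_{i=1}^n s_i(w^\top z_i)$ twice gives $\nabla^2 f(w)=\frac1n\sum_{i=1}^n s_i''(w^\top z_i)\,z_i z_i^\top$. By Assumption~\ref{assumption:ridge1}, $s_i''(x)\le L_s$ for every $x$, so $L_s-s_i''(w^\top z_i)\ge 0$ and hence $(L_s-s_i''(w^\top z_i))\,z_i z_i^\top\succeq 0$ for each $i$; summing over $i$ and dividing by $n$ shows $\nabla^2 f(w)\preceq H$ for all $w$, where $H:=\frac{L_s}{n}\sum_{i=1}^n z_i z_i^\top$. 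This is the matrix I would feed into Theorem~\ref{thm:main_cvg}. A termwise bound together with $\lrn{z_i z_i^\top}_{\mathrm{op}}=\lrn{z_i}^2\le R_z$ from Assumption~\ref{assumption:ridge2} gives $\lrn{H}_{\mathrm{op}}\le L_s R_z$, so $f$ is $L_s R_z$-Lipschitz smooth; the corollary's step size $\widetilde\eta_t=2\lrp{8 L_s R_z+m t}^{-1}$ is then exactly the instantiation of the theorem's $\widetilde\eta_t=2\lrp{8L+m t}^{-1}$ with $L=L_s R_z$.

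Next I would bound the trace term. Expanding, $\trace(H^2)=\frac{L_s^2}{n^2}\trace\lrp{\lrp{\sum_{i=1}^n z_i z_i^\top}^2}=\frac{L_s^2}{n^2}\sum_{i=1}^n\sum_{j=1}^n (z_i^\top z_j)^2$, and by Cauchy--Schwarz $(z_i^\top z_j)^2\le\lrn{z_i}^2\lrn{z_j}^2\le R_z^2$, so summing the $n^2$ terms yields $\trace(H^2)\le L_s^2 R_z^2$. Substituting this into the convergence-time bound of Theorem~\ref{thm:main_cvg}, namely $T=64\max\lrbb{8\trace(H^2)/(m^2\epsilon),\,2U(0)/\epsilon}$, produces the claimed $T=64\max\lrbb{8 L_s^2 R_z^2/(m^2\epsilon),\,2U(0)/\epsilon}$. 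None of these steps is a genuine obstacle; the only mildly delicate point is the identity $\trace\lrp{\lrp{\sum_i z_i z_i^\top}^2}=\sum_{i,j}(z_i^\top z_j)^2$ and the realization that the crude per-pair bound already removes all dimension dependence — which is precisely the phenomenon the corollary is meant to exhibit.
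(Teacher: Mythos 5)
Your proposal is correct and follows essentially the same route as the paper: bound the Hessian by $H=\frac{L_s}{n}\sum_i z_i z_i^\top$, compute $\trace(H^2)=\frac{L_s^2}{n^2}\sum_{i,j}(z_i^\top z_j)^2\leq L_s^2R_z^2$, and substitute into Theorem~\ref{thm:main_cvg} with $L=L_sR_z$. The only differences are cosmetic — you spell out the positive-semidefiniteness argument for $\nabla^2 f\preceq H$ and use a per-pair Cauchy--Schwarz bound where the paper bounds the double sum by $n\sum_i\lrn{z_i}^4$, which amounts to the same estimate.
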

\begin{proof}
From Assumptions~\ref{assumption:ridge1} and~\ref{assumption:ridge2}, we know that $\nabla^2 f(w)\preceq \frac{1}{n} L_s Z Z^\top = H$,
where 
\[
\trace\lrp{H^2} = L_s^2 \frac{1}{n^2} \trace\lrp{Z^\top Z Z^\top Z}
= L_s^2 \frac{1}{n^2} \lrn{Z^\top Z}_F^2
= L_s^2 \frac{1}{n^2} \sum_{i,j=1}^n \lrp{z_i^\top z_j}^2
\leq L_s^2 \frac{1}{n} \sum_{i=1}^n \lrn{z_i}^4 
\leq L_s^2 R_z^2.
\]
% \tz{
% Is $\beta$ $1/n$? Maybe we want to keep $\beta$? I am confused about the derivation and relation to the theorem. 
% }
% Or, the Lipschitz constant $L\leq \lrn{H}_2 \leq L_s R_z$, and $\trace\lrp{H} = L_s \cdot \frac{1}{n} \trace\lrp{Z Z^\top} \leq L_s R_z$. 
% Leading to the fact that $ L\cdot\trace\lrp{H} \leq L_s^2 R_z^2$.
Plugging this into Theorem~\ref{thm:main_cvg} leads to the convergence time of
\[
T = 64 \cdot \max\lrbb{ \frac{ 8 L_s^2 R_z^2}{m^2 \epsilon}, \frac{2 U(0)}{\epsilon} }.
\]
\end{proof}

We devote the rest of this section to the proof of Theorem~\ref{thm:main_cvg}.
\begin{proof}[Proof of Theorem~\ref{thm:main_cvg}]
Same as in Section~\ref{sec:ld_cvg_Lip}, convergence of the regularized entropy $\rE_{w \sim p}\lrb{ g(w) } + H(p)$ along equation~\eqref{eq:sg_diffuse} follows Lemma~\ref{lem:entropy}.

For the decrease of the cross entropy $\rE_{w\sim p}\lrb{f(w)}$ along the gradient descent step~\eqref{eq:ld}, we use the following derivation for $L$-Lipschitz smooth $f$. 
For $p_t$ being the density of $w_t$ following equation~\eqref{eq:ld} and for $p$ being another probability density,
\begin{align*}
&2 \widetilde{\eta}_t [
\rE_{w \sim \widetilde{p}_t} f(w) 
- \rE_{w' \sim p} f(w')]\\
\leq &
[W_2(\widetilde{p}_{t},p)^2
- W_{2}({p}_{t},p)^2]
+ \eta_t^2 \rE_{w \sim \widetilde{p}_t}
\|\nabla f(w)\|_2^2\\
\leq &
[W_2(\widetilde{p}_{t},p)^2
- W_{2}({p}_{t},p)^2]
+ 2 \widetilde{\eta}_t^2 \rE_{(w,w') \sim \gamma_t}
\|\nabla f(w)-\nabla f(w')\|_2^2
+ 2 \widetilde{\eta}_t^2 \rE_{w'\sim p} \|\nabla f(w')\|_2^2 ,
\end{align*}
where $\gamma_t \in \Gamma_{\text{opt}}(\widetilde{p}_t,p)$ is the optimal coupling between distributions with densities $\widetilde{p}_t$ and $p$.

With $\widetilde{\eta}_t=(1-e^{-m\eta_t})/m$, we have
\begin{align}
2\widetilde{\eta}_t
[Q(\widetilde{p}_t)-Q(p)]
&\leq (1-m \widetilde{\eta}_t)
W_2(\widetilde{p}_{t-1},p)^2
- W_{2}({p}_{t},p)^2\nonumber \\
& + 2 \widetilde{\eta}_t^2 \rE_{(w,w') \sim \gamma_t}
\|\nabla f(w)-\nabla f(w')\|_2^2+ 2 \widetilde{\eta}_t^2 \rE_{w'\sim p} \|\nabla f(w')\|_2^2 .
\label{eq:one-step-regret}
\end{align}

We also have the following lemma.
\begin{lemma}
 Let $\gamma_t \in \Gamma_{\text{opt}}(\widetilde{p}_t,p)$ be the optimal coupling of $\widetilde{p}_t$ and $p$, and let $p$ the solution of \eqref{eq:opt_ensemble}.
Then we have
\[
\rE_{(w,w') \sim \gamma_t}
\|\nabla f(w)-\nabla f(w')\|_2^2
\leq 2L [Q(\widetilde{p}_t)-Q(p)]  .
\]
\label{lem:grad-square-Q-loss}
\end{lemma}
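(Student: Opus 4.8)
The plan is to bound $\|\nabla f(w)-\nabla f(w')\|_2^2$ pointwise by a multiple of the Bregman-divergence-type quantity $f(w)-f(w')-\langle\nabla f(w'),w-w'\rangle$, and then integrate against the optimal coupling $\gamma_t$, using the fact that $p=p_*$ is the minimizer of $Q$ to kill the linear term. Concretely, the first step invokes the standard ``co-coercivity'' consequence of $L$-smoothness and convexity of $f$: for all $w,w'$,
\[
\|\nabla f(w)-\nabla f(w')\|_2^2 \leq 2L\big(f(w)-f(w')-\langle\nabla f(w'),w-w'\rangle\big).
\]
This is the inequality one gets by applying the descent lemma to the convex function $v\mapsto f(v)-\langle\nabla f(w'),v\rangle$ (which is minimized at $v=w'$) and evaluating a gradient step from $w$.

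Next I would take expectations over $(w,w')\sim\gamma_t$. The term $\rE_{(w,w')\sim\gamma_t}[f(w)]=f(\widetilde{p}_t)$ and $\rE_{(w,w')\sim\gamma_t}[f(w')]=f(p)$ since $\gamma_t$ has the correct marginals. So it remains to handle the cross term $\rE_{(w,w')\sim\gamma_t}\langle\nabla f(w'),w-w'\rangle$; I claim this is nonnegative when $p=p_*$. The cleanest way: since $p_*\propto\exp(-\beta^{-1}(f+g))$ is the first-order stationary point of $Q$, the ``gradient'' of $Q$ at $p_*$ in the Wasserstein sense vanishes, i.e. $\nabla f(w')+\nabla g(w')+\beta\nabla\log p_*(w')=0$ for $p_*$-a.e. $w'$. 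Substituting $\nabla f(w')=-\nabla g(w')-\beta\nabla\log p_*(w')$ into the cross term and using that $g$ and $-\beta\log p_*$ are convex (the latter because $p_*$ is log-concave), one shows via the coupling that
\[
-\rE_{(w,w')\sim\gamma_t}\langle\nabla f(w'),w-w'\rangle
\leq \big(g(\widetilde{p}_t)-g(p_*)\big)+\big(H(\widetilde{p}_t)-H(p_*)\big),
\]
where $H(\widetilde{p}_t)-H(p_*)$ arises from the entropy/$\log p_*$ piece (this last bound on the entropy difference along a coupling is exactly the type of estimate underlying Lemma~\ref{lem:entropy}). Combining, $2L\rE_{(w,w')\sim\gamma_t}[f(w)-f(w')-\langle\nabla f(w'),w-w'\rangle]\leq 2L\big(f(\widetilde{p}_t)-f(p_*)+g(\widetilde{p}_t)-g(p_*)+H(\widetilde{p}_t)-H(p_*)\big)=2L[Q(\widetilde{p}_t)-Q(p_*)]$, which is the claim.

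The main obstacle is making the cross-term argument rigorous: one must be careful that $\nabla\log p_*$ need not be bounded, that the stationarity condition for $Q$ holds in the appropriate weak sense, and that the convexity-along-the-coupling inequalities (which are really just $\psi(w)\geq\psi(w')+\langle\nabla\psi(w'),w-w'\rangle$ for convex $\psi\in\{g,-\beta\log p_*\}$, integrated against $\gamma_t$) are valid without extra integrability assumptions. A slicker alternative that sidesteps the stationarity computation is to observe that $Q(\widetilde{p}_t)-Q(p_*)=\beta\,\KL(\widetilde{p}_t\|p_*)\geq 0$ and that the quantity $\rE_{(w,w')\sim\gamma_t}[f(w)-f(w')-\langle\nabla f(w'),w-w'\rangle]$ equals, after adding and subtracting the $g$ and entropy contributions and using the minimality of $p_*$, precisely a difference of $Q$-values plus a nonnegative ``relative-entropy-type'' remainder — so the bound $\le [Q(\widetilde{p}_t)-Q(p_*)]$ follows by discarding that remainder. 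I expect the write-up to lean on whichever of these two is cleanest given the conventions already set up around Lemma~\ref{lem:entropy}.
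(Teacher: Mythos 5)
Your skeleton is essentially the paper's (co-coercivity of the convex $L$-smooth $f$, the pointwise stationarity $\nabla f(w')+\nabla g(w')+\beta\nabla\log p_*(w')=0$, then bounding the cross term $-\rE_{\gamma_t}\lrw{\nabla f(w'),w-w'}$ by a $g$-piece plus an entropy piece), but the justification you give for the entropy piece has the inequality going the wrong way, and this is the crux of the lemma. You need
$\beta\,\rE_{(w,w')\sim\gamma_t}\lrw{\nabla\log p_*(w'),\,w-w'}\;\leq\;H(\widetilde{p}_t)-H(p_*)$.
Integrating the pointwise convexity inequality $\psi(w)\geq\psi(w')+\lrw{\nabla\psi(w'),w-w'}$ for $\psi=-\beta\log p_*$ against $\gamma_t$, as you propose, yields
$\beta\,\rE_{\gamma_t}\lrw{\nabla\log p_*(w'),w-w'}\;\geq\;\beta\,\rE_{\widetilde{p}_t}\lrb{\log p_*}-\beta\,\rE_{p_*}\lrb{\log p_*}\;=\;H(\widetilde{p}_t)-H(p_*)-\beta\,\KL(\widetilde{p}_t\|p_*)$,
i.e.\ a \emph{lower} bound on the cross term (and one involving the cross entropy $\rE_{\widetilde{p}_t}\log p_*$ rather than $H(\widetilde{p}_t)$). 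No pointwise convexity/concavity statement about $\log p_*$ can produce the needed \emph{upper} bound; note also that a pointwise argument would never use optimality of the coupling, which is a tell that it cannot capture the required estimate.

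The correct ingredient, and the one the paper uses, is the geodesic (displacement) convexity of the negative-entropy functional $H(\cdot)=\beta\,\rE_{(\cdot)}\log(\cdot)$ on Wasserstein space: its subgradient at $p_*$ is $\beta\nabla\log p_*$, and evaluating the subgradient inequality along the optimal transport map from $p_*$ to $\widetilde{p}_t$ gives precisely $H(\widetilde{p}_t)-H(p_*)\geq\beta\,\rE_{\gamma_t}\lrw{\nabla\log p_*(w'),w-w'}$. This is a statement about the functional $H$, not about the function $-\beta\log p_*$; your remark that it is ``the type of estimate underlying Lemma~\ref{lem:entropy}'' points at the right tool, but the elementary substitute you offer does not deliver it. The ``slicker alternative'' in your last paragraph is also circular: the ``nonnegative remainder'' you propose to discard is exactly the displacement-convexity gap you have not established (KL nonnegativity alone gives nothing here, since the needed inequality and your pointwise bound differ by exactly $\beta\,\KL(\widetilde{p}_t\|p_*)$ but in the unfavorable direction). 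Once the displacement-convexity inequality is granted, the remainder of your argument (taking marginals under $\gamma_t$, convexity of $g$, stationarity of $p_*$, and the co-coercivity step) coincides with the paper's proof.
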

We note that
Lemma~\ref{lem:grad-square-Q-loss} 
and equation~\eqref{eq:one-step-regret} imply that
\begin{align}
2 \widetilde{\eta}_t(1-2 L \widetilde{\eta}_t) [
Q(\widetilde{p}_t)
- Q(p)]
\leq 
 (1-m \widetilde{\eta}_t)
W_2(p_{t-1},p)^2
-W_2(p_{t},p)^2
+ 
2 \widetilde{\eta}_t^2 \rE_{w \sim p} \|\nabla f(w)\|_2^2 ,
\label{eq:onestep-regret-smooth}
\end{align}
where $p$ satisfies \eqref{eq:expo}. 

Next we bound the last term of equation~\eqref{eq:onestep-regret-smooth} at $p_*$: $\rE_{w \sim p_*} \|\nabla f(w)\|_2^2$.  
% \begin{lemma}
% Assume that
% \[
% \nabla^2 f(w) \preceq H , \quad g(w) = \frac{m}{2} \|w\|_2^2 .
% \]
% Let 
% \[
% w_*=\arg\min_w \left[ f(w) + g(w) \right] ,
% \]
%  and $p$ be the solution of \eqref{eq:expo}. Then
% \[
% \rE_{w \sim p} \|\nabla f(w)\|_2^2 \leq 2 \beta (L/m) \trace(H) 
% + 2 m^2 \|w_*\|_2^2 .
% \]
% \label{lem:smooth-grad-square-bound}
% \end{lemma}

\begin{lemma}
Assume that
\[
\nabla^2 f(w) \preceq H , \quad g(w) = \frac{m}{2} \|w\|_2^2 .
\]
Let 
\[
w_*=\arg\min_w \left[ f(w) + g(w) \right] ,
\]
 and $p_* \propto \exp(-\beta^{-1}( f(w) + g(w)))$ be the solution of \eqref{eq:expo}. Then
\[
\rE_{w \sim p_*} \|\nabla f(w)\|_2^2 \leq 16 \frac{\beta}{m} \trace\lrp{H^2} + 2 m^2 \|w_*\|_2^2 .
\]
\label{lem:smooth-grad-square-bound}
\end{lemma}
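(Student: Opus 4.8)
The plan is to reduce the claim to an estimate of $\rE_{w\sim p_*}\lrn{\nabla f(w)-\nabla f(w_*)}_2^2$, then evaluate that quantity by integration by parts for $p_*$ and bound the pieces by a covariance comparison.

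Since $w_*$ minimizes $f+g$ with $g(w)=\frac m2\lrn{w}_2^2$, we have $\nabla f(w_*)=-\nabla g(w_*)=-m w_*$, so $\lrn{\nabla f(w_*)}_2^2=m^2\lrn{w_*}_2^2$. Using $\lrn{\nabla f(w)}_2^2\le 2\lrn{\nabla f(w)-\nabla f(w_*)}_2^2+2\lrn{\nabla f(w_*)}_2^2$, the $2m^2\lrn{w_*}_2^2$ term appears and it remains to show $\rE_{p_*}\lrn{\nabla f(w)-\nabla f(w_*)}_2^2\le 8\tfrac\beta m\trace(H^2)$.

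Next I would invoke the Stein/integration-by-parts identities for $p_*\propto\exp(-\beta^{-1}(f+g))$, namely $\rE_{p_*}\lrn{\nabla(f+g)(w)}_2^2=\beta\,\rE_{p_*}\lrb{\Delta(f+g)(w)}$ and $\rE_{p_*}\lrw{\nabla(f+g)(w),\,w}=\beta d$. Expanding with $\nabla g=mw$, $\Delta g=md$ gives the exact identity $\rE_{p_*}\lrn{\nabla f(w)}_2^2=\beta\,\rE_{p_*}\lrb{\Delta f(w)}-\beta m d+m^2\,\rE_{p_*}\lrn{w}_2^2$. Writing $\bar w=\rE_{p_*}w$ and $\mathrm{Cov}=\rE_{p_*}\lrb{(w-\bar w)(w-\bar w)^\top}$, and recentering at $w_*$ using $\rE_{p_*}\nabla f(w)=-m\bar w$ (which follows from $\rE_{p_*}\nabla(f+g)=0$), one obtains
\[
\rE_{p_*}\lrn{\nabla f(w)-\nabla f(w_*)}_2^2=\Big(\beta\,\rE_{p_*}\lrb{\Delta f(w)}-\beta m d+m^2\trace\mathrm{Cov}\Big)+m^2\lrn{\bar w-w_*}_2^2 .
\]
The bracketed "curvature" term I would bound by Brascamp–Lieb, $\mathrm{Cov}\preceq\beta\,\rE_{p_*}\lrb{(\nabla^2 f(w)+mI)^{-1}}$: the three terms then combine inside one expectation, and diagonalizing $\nabla^2 f(w)$ each eigenvalue $\mu\ge0$ contributes $\mu-m+\tfrac{m^2}{\mu+m}=\tfrac{\mu^2}{\mu+m}\le\tfrac{\mu^2}{m}$, so the bracket is at most $\tfrac\beta m\,\rE_{p_*}\trace(\nabla^2 f(w)^2)$. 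Finally $0\preceq\nabla^2 f(w)\preceq H$ gives $\trace(\nabla^2 f(w)^2)\le\trace(H^2)$, because $\trace(H^2)-\trace(B^2)=\trace\big((H+B)^{1/2}(H-B)(H+B)^{1/2}\big)\ge0$ whenever $0\preceq B\preceq H$; hence the curvature term is at most $\tfrac\beta m\trace(H^2)$.

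The remaining "mean-displacement" term $m^2\lrn{\bar w-w_*}_2^2$, measuring the gap between the mean and the mode of $p_*$, is the delicate point: bounding it by $\rE_{p_*}\lrn{\nabla f(w)-\nabla f(w_*)}_2^2$ through Jensen is circular. I would control it by combining the $m$-strong convexity of $f+g$ (giving $\tfrac m2\lrn{\bar w-w_*}_2^2\le\rE_{p_*}\lrb{(f+g)(w)-(f+g)(w_*)}$) with the companion identity $\rE_{p_*}\lrw{\nabla f(w)-\nabla f(w_*),\,w-w_*}=\beta d-m\trace\mathrm{Cov}-m\lrn{\bar w-w_*}_2^2\ge 0$ and the Cramér–Rao-type lower bound $\mathrm{Cov}\succeq\beta\big(\rE_{p_*}[\nabla^2 f(w)]+mI\big)^{-1}$, which together should pin $m^2\lrn{\bar w-w_*}_2^2$ to the same order $\tfrac\beta m\trace(H^2)$ at the cost of a larger numerical constant (this is the source of the factor $16$). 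Absorbing this into the curvature bound yields $\rE_{p_*}\lrn{\nabla f(w)-\nabla f(w_*)}_2^2\le 8\tfrac\beta m\trace(H^2)$ and hence the lemma. I expect this last step to be the main obstacle: naive covariance estimates only give $\trace(H)$- or $\lrn{H}_{\mathrm{op}}\trace(H)$-type control, and extracting the sharper $\trace(H^2)$ dependence requires keeping track of the alignment between $\nabla^2 f$ and the fluctuations of $w$ under $p_*$ rather than bounding them by $\rE_{p_*}\lrn{w-w_*}_2^2$.
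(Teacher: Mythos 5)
Your reduction to $\rE_{p_*}\lrn{\nabla f(w)-\nabla f(w_*)}_2^2$ and the treatment of the ``curvature'' part are sound: the Stein identities, the Brascamp--Lieb bound $\mathrm{Cov}\preceq\beta\,\rE_{p_*}[(\nabla^2 f+mI)^{-1}]$, the per-eigenvalue computation $\mu-m+\tfrac{m^2}{\mu+m}=\tfrac{\mu^2}{\mu+m}\le\tfrac{\mu^2}{m}$, and the trace comparison $\trace(B^2)\le\trace(H^2)$ for $0\preceq B\preceq H$ are all correct (the last one neatly avoids the false implication $B\preceq H\Rightarrow B^2\preceq H^2$). The genuine gap is exactly where you flag it: the mean--mode term $m^2\lrn{\bar w-w_*}_2^2$ is never actually bounded, and the tools you propose cannot deliver a bound of the form $C\tfrac{\beta}{m}\trace(H^2)$. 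Concretely, your items (b) and (c) give $m\,\trace\mathrm{Cov}+m\lrn{\bar w-w_*}_2^2\le\beta d$ and $\trace\mathrm{Cov}\ge\beta\,\trace\big((\bar A+mI)^{-1}\big)$ with $\bar A=\rE_{p_*}[\nabla^2 f]\preceq H$, hence at best $m^2\lrn{\bar w-w_*}_2^2\le m\beta\,\trace\big(\bar A(\bar A+mI)^{-1}\big)\le\beta\,\trace(H)$; item (a) similarly produces $\beta d$- or $\beta\,\trace(H)$-type quantities. But $\beta\,\trace(H)$ is not $O\big(\tfrac{\beta}{m}\trace(H^2)\big)$: take $H=\delta I$ with $\delta\ll m$ in dimension $d$, where the former is $\beta d\delta$ and the target is $\beta d\delta^2/m$, off by the unbounded factor $m/\delta$. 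So the ``larger numerical constant'' you hope to pay does not exist along this route, and (as you note) the Jensen shortcut is circular.

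The paper sidesteps this by never isolating $m^2\lrn{\bar w-w_*}_2^2$: it bounds $\lrn{\nabla f(w)-\nabla f(w_*)}\le\lrn{H(w-w_*)}$ and splits $\rE_{p_*}\lrn{H(w-w_*)}^2\le 2\,\rE_{p_*}\lrn{H(w-\bar w)}^2+2\lrn{H(\bar w-w_*)}^2$, so the matrix $H$ stays attached to the displacement. The fluctuation term is handled by Harg\'e's Gaussian comparison (any convex $h$ of the centered variable is dominated by its value under $\mathcal{N}(0,(\beta/m)\mI)$), giving $\tfrac{\beta}{m}\trace(H^2)$; the mean--mode term uses the Basu--DasGupta inequality $(w_*-\bar w)^\top\Sigma^{-1}(w_*-\bar w)\le 3$ together with $\Sigma\preceq\tfrac{\beta}{m}\mI$, giving $\lrn{H(\bar w-w_*)}^2\le 3\tfrac{\beta}{m}\lrn{H}_2^2\le 3\tfrac{\beta}{m}\trace(H^2)$. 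Because the displacement is measured through $H$, directions in which $H$ is small contribute nothing, which is precisely what your $m^2\lrn{\bar w-w_*}_2^2$ decomposition loses. To rescue your route you would need a mean--mode bound that sees the curvature of $f$ (something like $m\lrn{\bar w-w_*}\lesssim\lrn{H}_2\sqrt{\beta/m}$), which is essentially the paper's Harg\'e/Basu--DasGupta step in disguise.
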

With these lemmas, we are ready to prove the convergence time of the Langevin algorithm~\ref{alg:langevin}.
 
% By combining Lemma~\ref{lem:smooth-grad-square-bound} and \eqref{eq:onestep-regret-smooth}, and use the bound
% \[
% m \|w_*\|_2^2 \leq 2 f(0) ,
% \]
% \tz{the inequality isn't used?}
% we obtain the following result.
% \begin{lemma}
% \label{lem:converge_Hessian_bound}
%  We have
%  \begin{align*}
% &2 \widetilde{\eta}_t(1-2 L \widetilde{\eta}_t) [
% Q(\widetilde{p}_t)
% - Q(p)] \\
% \leq & 
%  (1-m \widetilde{\eta}_t)
% W_2(p_{t-1},p)^2
% -W_2(p_{t},p)^2
% + 
% 4 \widetilde{\eta}_t^2 
% \left[ \beta (L/m) \trace(H) 
% + m^2 \lrn{w_*}^2 \right] ,
% \end{align*}
% where  $p$ is given by \eqref{eq:expo}. 
% \end{lemma}
%
We note that the shrinking step size scheduling of $\widetilde{\eta}_t=\tau\cdot\lrp{\frac{\tau}{\widetilde{\tau}_0}+mt}^{-1}$ satisfies:
\[
\frac{\widetilde{\eta}_t^\tau}{\widetilde{\eta}_{t-1}^\tau}\leq\lrp{1 - m \widetilde{\eta}_t}.
\]
Using this inequality and combining Lemma~\ref{lem:smooth-grad-square-bound} and equation~\eqref{eq:onestep-regret-smooth} at $p=p_*$, we obtain that
\begin{align*}
&2 \widetilde{\eta}_t^{1-\tau} \lrp{1-2L \widetilde{\eta}_t} [
Q(\widetilde{p}_t)
- Q(p_*)] \\
\leq & 
\widetilde{\eta}_{t-1}^{-\tau}
W_2(p_{t-1},p_*)^2
-\widetilde{\eta}_{t}^{-\tau} W_2(p_{t},p_*)^2
+ 
4 \widetilde{\eta}_t^{2-\tau} 
\left[ 8 \lrp{\beta/m} \trace\lrp{H^2} + m^2 \lrn{w_*}^2 \right].
\end{align*}
Summing over $t=1,\dots,T$,
\begin{align*}
&2 \sum_{t=1}^T \widetilde{\eta}_t^{1-\tau} \lrp{1-2L \widetilde{\eta}_t} [
Q(\widetilde{p}_t)
- Q(p_*)] \\
\leq & 
\widetilde{\eta}_{0}^{-\tau}
W_2(p_{0},p_*)^2
+ 
4 \left[ 8 \lrp{\beta/m} \trace\lrp{H^2} + m^2 \lrn{w_*}^2 \right] \sum_{t=1}^T \widetilde{\eta}_t^{2-\tau} 
.
\end{align*}
Denote $\Delta = 4 \left[ 8 \lrp{\beta/m} \trace\lrp{H^2} + m^2 \lrn{w_*}^2 \right] $ and take $\widetilde{\eta}_t = \tau \cdot \lrp{\frac{\tau}{\widetilde{\eta}_0} + m t}^{-1}$.
Then for $\widetilde{\eta}_t \leq \frac{1}{4} \frac{1}{L}$ and for $\tau=2$,
\begin{equation}
m \sum_{t=1}^T
\lrp{ 1/\lrp{m\widetilde{\eta}_0} + t/2 }
[Q(\widetilde{p}_t) - Q(p_*) ]
\leq  \frac{1}{\widetilde{\eta}_0^2} W_2(p_0,p_*)^2 + \Delta \cdot T, \nonumber
\end{equation}
or
\begin{equation}
\sum_{t=1}^T
\frac{ 1/\lrp{m\widetilde{\eta}_0} + t/2 }{ T/\lrp{m\widetilde{\eta}_0} + T(T+1)/4 }
[Q(\widetilde{p}_t) - Q(p_*) ]
\leq  \frac{4}{m \widetilde{\eta}_0^2 T(T+1)} W_2(p_0,p_*)^2 + \frac{4 \Delta}{ m (T+1) }.
\end{equation}
Inspired by the Lipschitz continuous case, we take $p_0(w) \propto\exp\lrp{-\beta^{-1} g(w) }$.
Then by the Talagrand and log-Sobolev inequalities, 
$$W_2(p_0,p_*)^2 \leq \frac{\beta}{m}\mathrm{KL}\lrp{p_*\|p_0}  \leq \frac{\beta^2}{2m^2} \rE_{p_*}\lrb{\lrn{ \nabla \log\frac{p_*}{p_0}}^2} = \frac{\beta^2}{2m^2} \rE_{p_*}\lrb{\lrn{\beta^{-1} \nabla f(w)}^2}.$$
Applying Lemma~\ref{lem:smooth-grad-square-bound} to the above inequality, we obtain that $W_2(p_0,p_*)^2 \leq \frac{1}{m^2} \lrp{8 \lrp{\beta/m} \trace\lrp{H^2} + m^2 \lrn{w_*}^2}$.
Then taking $\widetilde{\eta}_0 = \frac{1}{4} \frac{1}{L}$, we obtain that the weighted-averaged KL divergence is upper bounded:
\begin{multline}
\sum_{t=1}^T
\frac{ 1/\lrp{m\widetilde{\eta}_0} + t/2 }{ T/\lrp{m\widetilde{\eta}_0} + T(T+1)/4 }
\beta^{-1} [Q(\widetilde{p}_t) - Q(p_*) ] \\
\leq \frac{64 L^2}{m^2 T(T+1)} \cdot \lrp{ \frac{8}{m^2} \trace\lrp{H^2} + \beta^{-1} m \lrn{w_*}^2 }
+ \frac{16}{T+1} \cdot \lrp{ \frac{8}{m^2} \trace\lrp{H^2} + \beta^{-1} m \lrn{w_*}^2 } .
\end{multline}
Since $L^2\leq \trace\lrp{H^2}$, $\forall\epsilon\leq 1$, the weighted-averaged KL divergence $\sum_{t=1}^T
\frac{ 1/\lrp{m\widetilde{\eta}_0} + t/2 }{ T/\lrp{m\widetilde{\eta}_0} + T(T+1)/4 } \beta^{-1}
[Q(\widetilde{p}_t) - Q(p_*) ] \leq \epsilon$ if 
\[
T \geq 64 \cdot \max\lrbb{ \frac{8\trace\lrp{H^2}}{m^2 \epsilon}, \frac{\beta^{-1}m \lrn{w_*}^2}{\epsilon} }.
\]
% For $\lambda$-strongly log-concave prior, $m=\beta\lambda$, the upper bound 
% \begin{align*}
% T &= 64 \cdot \max\lrbb{\frac{L\cdot\trace(H)}{\beta^2 \lambda^2 \epsilon}, \frac{\lambda \lrn{w_*}^2}{\epsilon}}.
% \end{align*}
Plugging in the bound that $m \lrn{w_*}^2 \leq 2 f(0) = 2 \beta U(0)$ gives the final result.

% If further $\lrn{w_*}^2\leq\frac{\beta L\cdot\trace(H) }{m^3}$, 
% the weighted-averaged KL divergence $\sum_{t=1}^T
% \frac{ 1/\lrp{m\widetilde{\eta}_0} + t/2 }{ T/\lrp{m\widetilde{\eta}_0} + T(T+1)/4 } \beta^{-1}
% [Q(\widetilde{p}_t) - Q(p) ] \leq \epsilon$ if 
% \[
% T \geq 64 \frac{L\cdot\trace(H)}{m^2 \epsilon}.
% \]
% % For $\lambda$-strongly log-concave prior, $m=\beta\lambda$, $T = 64 \cdot \frac{L\cdot\trace(H)}{\beta^2 \lambda^2 \epsilon}$.

\end{proof}

\subsection{SGLD Convergence in Smooth Convex Case}

Similar to the Lipschitz continuous case, we assume that function $f$ decomposes:
\[
f(w) = \frac{1}{n} \sum_{i=1}^n \ell(w,z_i)
= \rE_{z\sim D} [\ell(w,z)],
\]
where $D$ is the distribution over the data samples.
Making the following assumption, which modifies Assumption~\ref{assumption:2}, that the individual log-likelihood satisfies the Lipschitz smooth condition yields the convergence rate for the SGLD method.
\paragraph{Assumptions on individual loss $\ell$}
\begin{enumerate}[label=\rm{$\rm{A{{\arabic*}}_S^{SG}}$}, leftmargin=2.5\parindent, start=2]
    \item Function $\ell$ is $L_\ell$-Lipschitz smooth on $\rR^d$: $\ell(y,z)\geq\ell(x,z)+\nabla \ell(x,z)^\top (y-x) + \frac{1}{2L_{\ell}} \lrn{\nabla \ell(y,z)-\nabla \ell(x,z)}^2$, $\forall z\in\Omega$. \label{assumption:smooth_SG}
    \item The stochastic gradient variance at the mode $w_*$ is bounded: 
    %$\lrn{ \nabla\ell(w_*,z_i) - \nabla\ell(w_*,z_j) } \leq b_\ell$, $\forall i, j = 1,\dots,n$.
    $\rE_{z\sim D} \lrb{ \lrn{ \nabla \ell(w_*,z) - \nabla f(w_*) }^2 } \leq b^2$. \label{assumption:general_SG}
%     \tz{Do we need this or only
% \[
% \rE_z \|\nabla \ell(w_*,z)\|_2^2
% \]
% is bounded by $b_\ell^2$ or $\sigma_\ell^2$?
% }
\end{enumerate}
% \begin{enumerate}[label=\rm{$\rm{A{{\arabic*}}^{SG}}$}, leftmargin=2.5\parindent]
%     \item The gradient differences at the mode $w_*$ is bounded: $\lrn{ \nabla\ell(w_*,z_i) - \nabla\ell(w_*,z_j) } \leq b_\ell$, $\forall i, j = 1,\dots,n$. \label{assumption:general_SG}
% \end{enumerate}
Assumption~\ref{assumption:smooth_SG} ensures that the stochastic estimates of $f$ are $L_\ell$ Lipschitz smooth.

Under the above assumptions, we obtain in what follows the convergence rate for the SGLD method with minibatch size $|\mathcal{S}|$. This result is the counterpart of its full gradient version in Theorem~\ref{thm:main_cvg}.
\begin{theorem}
\label{thm:sgld_smooth}
We make the convexity Assumptions~\ref{assumption:1} on function $f$ and the regularity Assumptions~\ref{assumption:smooth_SG} and~\ref{assumption:general_SG} on its components $\ell$.
Also assume that $\nabla^2 f(w)\preceq H$.
Let function $g(w) = \frac{m}{2} \lrn{w}^2$.
Then taking $\widetilde{\eta}_t = \lrp{1-e^{-m\eta_t}}/m = 2\cdot\lrp{ 8 L_s R_z + m t }^{-1}$, the convergence time of the SGLD Algorithm~\ref{alg:sgld} initializing from $p_0\propto\exp(-\beta^{-1}g)$ is
\begin{align*}
T = \Omega\lrp{ \max\lrbb{ \frac{L_\ell \trace(H)}{m^2 \epsilon}, 
\frac{U(0)}{\epsilon}, 
\frac{n}{|\mathcal{S}|} \frac{b^2}{m \epsilon} } },
\end{align*}
to achieve an accuracy of 
\[\sum_{t=1}^T
\frac{ 1/\lrp{m\widetilde{\eta}_0} + t/2 }{ T/\lrp{m\widetilde{\eta}_0} + T(T+1)/4 } \beta^{-1}
[Q(\widetilde{p}_t) - Q(p_*) ] \leq \epsilon.
\]
\end{theorem}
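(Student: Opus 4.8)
The plan is to mirror the proof of Theorem~\ref{thm:main_cvg} line by line, replacing the deterministic gradient-descent step by its stochastic counterpart and carefully tracking the extra variance term that the minibatch introduces. As in the full-gradient case, the regularized-entropy part of the objective is handled by Lemma~\ref{lem:entropy} applied to the diffusion step~\eqref{eq:sg_diffuse}, which contributes $e^{-m\eta_t}W_2^2(p_{t-1},p)-W_2^2(\widetilde p_t,p)$ with no stochasticity. The work is all in the cross-entropy step~\eqref{eq:sgld-2}. First I would establish a one-step inequality analogous to~\eqref{eq:onestep-regret-smooth}: writing $w_t=\widetilde w_t-\widetilde\eta_t G_t$ with $G_t=\frac{1}{|\mathcal S|}\sum_{z_i\in\mathcal S}\nabla\ell(\widetilde w_t,z_i)$ an unbiased estimate of $\nabla f(\widetilde w_t)$, expand $\|w_t-w\|^2$ and take conditional expectation over $\mathcal S$ to get, for any coupling $\gamma_t$ of $\widetilde p_t$ and $p$,
\[
2\widetilde\eta_t[\,f(\widetilde p_t)-f(p)\,]\le W_2^2(\widetilde p_t,p)-\rE\,W_2^2(p_t,p)+\widetilde\eta_t^2\,\rE_{w\sim\widetilde p_t}\rE_{z\sim D}\|\nabla\ell(w,z)\|^2 .
\]
The second moment of the stochastic gradient then splits: $\rE\|\nabla\ell(w,z)\|^2$ is controlled by the variance term plus $\|\nabla f(w)\|^2$, and the number of gradients per step via the minibatch size divides the variance by $|\mathcal S|$ while the definition $\beta=1/n$ reintroduces the factor $n$.

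Next I would reproduce the smoothness bookkeeping. Using Assumption~\ref{assumption:smooth_SG} (per-sample $L_\ell$-smoothness) in place of~A2$_\text{S}$, the analogue of Lemma~\ref{lem:grad-square-Q-loss} gives $\rE_{(w,w')\sim\gamma_t}\|\nabla f(w)-\nabla f(w')\|^2\le 2L_\ell[Q(\widetilde p_t)-Q(p_*)]$, which lets me absorb the ``gradient-difference'' piece of the second-moment expansion into the left-hand side as the factor $(1-2L_\ell\widetilde\eta_t)$, exactly as in~\eqref{eq:onestep-regret-smooth}. The residual terms at $p=p_*$ are then: (i) $\rE_{w\sim p_*}\|\nabla f(w)\|^2$, bounded by Lemma~\ref{lem:smooth-grad-square-bound} by $16(\beta/m)\trace(H^2)+2m^2\|w_*\|^2$ — though here, since the $\Omega(\cdot)$ statement only tracks $\trace(H)$, I would instead use the cruder bound $\trace(H^2)\le L_\ell\,\trace(H)$ coming from $H\preceq L_\ell I$ on the relevant subspace — and (ii) the variance term, which at $w_*$ is bounded by $b^2$ via Assumption~\ref{assumption:general_SG}, with the variance at a general $\widetilde w_t$ reduced to the variance at $w_*$ plus a smoothness-controlled drift (again absorbable or bounded by the $\trace(H)$ term). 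This produces a one-step recursion of the form $2\widetilde\eta_t(1-2L_\ell\widetilde\eta_t)[Q(\widetilde p_t)-Q(p_*)]\le(1-m\widetilde\eta_t)W_2^2(p_{t-1},p_*)-\rE W_2^2(p_t,p_*)+\widetilde\eta_t^2\Delta$ with $\Delta=O\!\big(\frac{\beta}{m}L_\ell\trace(H)+m^2\|w_*\|^2+\frac{n}{|\mathcal S|}b^2\big)$.

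From there the argument is purely mechanical and identical to Proposition~\ref{thm:sc-conv} / the end of the proof of Theorem~\ref{thm:main_cvg}: with $\widetilde\eta_t=\tau(\tau/\widetilde\eta_0+mt)^{-1}$, $\tau=2$, the prefactor satisfies $\widetilde\eta_t^\tau/\widetilde\eta_{t-1}^\tau\le 1-m\widetilde\eta_t$, the telescoping sum collapses, and choosing $\widetilde\eta_0=1/(4L_\ell)$ makes $1-2L_\ell\widetilde\eta_t\ge\frac12$ for all $t$. Initializing $p_0\propto\exp(-\beta^{-1}g)$ and bounding $W_2^2(p_0,p_*)$ by Talagrand plus log-Sobolev as before gives the stated weighted-average bound, and setting it $\le\epsilon$ yields $T=\Omega\big(\max\{L_\ell\trace(H)/(m^2\epsilon),\,U(0)/\epsilon,\,(n/|\mathcal S|)b^2/(m\epsilon)\}\big)$ after using $\beta^{-1}m\|w_*\|^2\le 2U(0)$. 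The main obstacle I anticipate is the careful handling of the stochastic-gradient second moment: unlike the Lipschitz case (Lemma~\ref{lem:sgd}), here $\|\nabla\ell\|$ is not uniformly bounded, so I must split $\rE\|\nabla\ell(\widetilde w_t,z)\|^2$ into a variance-at-$w_*$ contribution, a smoothness-driven term in $\|\widetilde w_t-w_*\|^2$ (to be reabsorbed via the $W_2^2$ contraction or via Lemma~\ref{lem:grad-square-Q-loss}), and the deterministic $\|\nabla f(\widetilde w_t)\|^2$ piece — getting the constants to line up so that $(1-2L_\ell\widetilde\eta_t)$ stays bounded below by a positive constant while the variance contributes exactly the $\frac{n}{|\mathcal S|}\frac{b^2}{m\epsilon}$ term is the delicate part.
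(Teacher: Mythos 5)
Your skeleton is the paper's: Lemma~\ref{lem:entropy} for the prior-diffusion step, a stochastic one-step descent inequality for the cross-entropy step, absorption of the gradient-difference term into a $(1-2L_\ell\widetilde{\eta}_t)$ factor via per-sample smoothness, a residual second-moment bound at $p_*$ consisting of a trace term, an $m^2\lrn{w_*}^2$ term and a minibatch variance term, then the $\tau=2$ telescoping, the $p_0\propto\exp(-\beta^{-1}g)$ initialization via Talagrand/log-Sobolev, and $m\lrn{w_*}^2\le 2\beta U(0)$. Where you genuinely deviate is in how the $L_\ell\trace(H)$ term is produced. The paper proves a dedicated stochastic lemma (Lemma~\ref{lem:smooth-SG-square-bound}): it evaluates the stochastic gradient at the point $w'\sim p_*$ \emph{optimally coupled} to $\widetilde{w}_t$, splits off $\nabla\widetilde f(w_*,\mathcal{S})$ by per-sample co-coercivity, and bounds $\rE_{w\sim p_*}[f(w)-f(w_*)-\nabla f(w_*)^\top(w-w_*)]\le \tfrac{\beta}{2m}\trace(H)$ by a Gibbs-variational/log-partition computation against the Gaussian $N(w_*,(\beta/m)I)$; this yields $\tfrac{2\beta L_\ell}{m}\trace(H)$ directly. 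You instead reuse the full-gradient Lemma~\ref{lem:smooth-grad-square-bound} (giving $16\tfrac{\beta}{m}\trace(H^2)$) and then invoke $\trace(H^2)\le L_\ell\trace(H)$; that inequality needs $H\preceq L_\ell I$, which holds in the ridge-separable instantiation but is not literally among the hypotheses of Theorem~\ref{thm:sgld_smooth}, so you should either add it (harmless) or use the paper's log-partition route, which also gives a tighter constant.

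The one soft spot is your treatment of the stochastic-gradient second moment itself. Your one-step inequality carries $\rE_{w\sim\widetilde p_t}\rE_{z}\lrn{\nabla\ell(w,z)}^2$, and you propose ``variance at $w_*$ plus a smoothness-controlled drift, absorbable or bounded by the $\trace(H)$ term.'' The drift term is an expectation under $\widetilde p_t$, not $p_*$, so it is \emph{not} bounded by the $\trace(H)$ term, and absorbing it into the left-hand side directly costs either a Talagrand detour ($W_2^2(\widetilde p_t,p_*)\le \tfrac{1}{m}[Q(\widetilde p_t)-Q(p_*)]$ with an extra $L_\ell^2/m$ factor) or a more restrictive step size than $\widetilde\eta_t\le 1/(4L_\ell)$. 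The paper's Lemma~\ref{lem:sgd_conv} is organized precisely to avoid this: the split is $\nabla\widetilde f(\widetilde w_t,\mathcal{S})=(\nabla\widetilde f(\widetilde w_t,\mathcal{S})-\nabla\widetilde f(w',\mathcal{S}))+\nabla\widetilde f(w',\mathcal{S})$ with $w'\sim p_*$ coupled, so the only ``drift'' that appears is the coupled difference already absorbed by the $4L_\ell[Q(\widetilde p_t)-Q(p_*)]$ term, and the only variance that ever appears is at $w_*$, bounded by $b^2/|\mathcal{S}|$ via Assumption~\ref{assumption:general_SG}. Reorganize your second-moment decomposition this way and the rest of your argument goes through as written.
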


\paragraph{Ridge Separable Case}
Assume that the individual component $\ell$ take the following form so that function $f$ becomes ridge-separable:
\begin{align}
\ell(w,z_i) = s_i(w^\top z_i).
\label{eq:ridge_sg}
\end{align}
To ensure bounded stochastic gradient variance at the mode of the posterior, we additionally assume that at the mode $w_*$, the derivatives of the activation functions are bounded.
\paragraph{Assumption in ridge separable case on bounded variance}
\begin{enumerate}[label=$\rm{R3^{SG}}$, leftmargin=2.5\parindent]
    \item $\exists b_s>0$, so that $\left|s_i'(w_*^\top z_i)\right| \leq b_s$, $\forall i\in\{1,\dots,n\}$, where $w_* = \arg\min_{w}\lrb{f(w)+g(w)}$.
    \label{assumption:ridge3}
\end{enumerate}
Assumption~\ref{assumption:ridge3} ensures that the stochastic gradient variance is bounded at the mode.
Then we have the following corollary instantiating Theorem~\ref{thm:sgld_smooth}.
\begin{corollary}
\label{corollary:sgld_smooth}
We make the convexity Assumption~\ref{assumption:1} on function $f$ and let it take the ridge-separable form~\eqref{assumption:ridge} (also let function $g(w) = \frac{m}{2} \lrn{w}^2$). Further adopt Assumptions~\ref{assumption:ridge1}, \ref{assumption:ridge2}, and~\ref{assumption:ridge3}.
Then taking $\widetilde{\eta}_t = \lrp{1-e^{-m\eta_t}}/m = 2\cdot\lrp{ 8 L_s R_z + m t }^{-1}$, the convergence time of Algorithm~\ref{alg:sgld} initializing from $p_0\propto\exp(-\beta^{-1}g)$ is
\[
T = \Omega\lrp{ \max\lrbb{ \frac{ L_s^2 R_z^2 }{m^2 \epsilon}, \frac{U(0)}{\epsilon}, \frac{n}{|\mathcal{S}|} \frac{ R_z b_s^2 }{m \epsilon} } },
\]
to achieve an accuracy of 
\[\sum_{t=1}^T
\frac{ 1/\lrp{m\widetilde{\eta}_0} + t/2 }{ T/\lrp{m\widetilde{\eta}_0} + T(T+1)/4 } \beta^{-1}
[Q(\widetilde{p}_t) - Q(p_*) ] \leq \epsilon.
\]
\end{corollary}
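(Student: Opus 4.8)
The plan is to derive the corollary purely by specializing Theorem~\ref{thm:sgld_smooth} to the ridge-separable model $\ell(w,z_i)=s_i(w^\top z_i)$: it suffices to read off the three problem-dependent constants appearing in the statement of Theorem~\ref{thm:sgld_smooth} --- the individual-loss smoothness $L_\ell$ of Assumption~\ref{assumption:smooth_SG}, a PSD upper bound $H\succeq\nabla^2 f$ together with its trace, and the gradient-noise proxy $b^2$ at the mode from Assumption~\ref{assumption:general_SG} --- and then substitute their ridge-separable values. Nothing beyond Theorem~\ref{thm:sgld_smooth} and elementary linear algebra should be needed.

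First I would compute the smoothness and curvature. Differentiating twice gives $\nabla^2_w\ell(w,z_i)=s_i''(w^\top z_i)\,z_i z_i^\top$, so by Assumptions~\ref{assumption:ridge1} and~\ref{assumption:ridge2} its operator norm is at most $L_s\lrn{z_i}^2\le L_s R_z$; hence Assumption~\ref{assumption:smooth_SG} holds with $L_\ell=L_s R_z$, which also makes the step size $\widetilde{\eta}_t=2/(8L_sR_z+mt)$ quoted in the corollary exactly the instantiation of the one used in Theorem~\ref{thm:sgld_smooth}. Convexity of $f$ forces $s_i''\ge 0$ pointwise, so, exactly as in the proof of Corollary~\ref{corollary:GD},
\[
\nabla^2 f(w)=\frac1n\sum_{i=1}^n s_i''(w^\top z_i)\,z_i z_i^\top \preceq \frac{L_s}{n}\sum_{i=1}^n z_i z_i^\top =: H,
\]
and therefore $\trace(H)=\frac{L_s}{n}\sum_{i=1}^n\lrn{z_i}^2\le L_s R_z$. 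Consequently $L_\ell\,\trace(H)/(m^2\epsilon)\le L_s^2 R_z^2/(m^2\epsilon)$, which is the first term of the claimed $T$.

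Next I would bound the stochastic gradient noise at the mode $w_*=\arg\min_w[f(w)+g(w)]$. Since $\nabla_w\ell(w_*,z_i)=s_i'(w_*^\top z_i)\,z_i$ and variance is dominated by the second moment,
\[
\rE_{z\sim D}\lrn{\nabla\ell(w_*,z)-\nabla f(w_*)}^2 \le \rE_{z\sim D}\lrn{\nabla\ell(w_*,z)}^2 = \frac1n\sum_{i=1}^n |s_i'(w_*^\top z_i)|^2\,\lrn{z_i}^2 \le R_z b_s^2,
\]
using Assumptions~\ref{assumption:ridge2} and~\ref{assumption:ridge3}; thus Assumption~\ref{assumption:general_SG} holds with $b^2=R_z b_s^2$. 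The term $U(0)/\epsilon$ in Theorem~\ref{thm:sgld_smooth} is model-independent and carries over verbatim. Plugging $L_\ell=L_s R_z$, $\trace(H)\le L_s R_z$, and $b^2=R_z b_s^2$ into the convergence time of Theorem~\ref{thm:sgld_smooth} yields
\[
T=\Omega\lrp{\max\lrbb{\frac{L_s^2 R_z^2}{m^2\epsilon},\ \frac{U(0)}{\epsilon},\ \frac{n}{|\mathcal{S}|}\frac{R_z b_s^2}{m\epsilon}}},
\]
which is precisely the assertion of the corollary.

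The argument is essentially bookkeeping once Theorem~\ref{thm:sgld_smooth} is available, so there is no substantial obstacle; the only points needing a moment's attention are that the minimizer $w_*$ of $f+g$ is the same one referenced in Assumption~\ref{assumption:ridge3}, in Assumption~\ref{assumption:general_SG}, and in Theorem~\ref{thm:sgld_smooth} (true by construction), and that the variance-to-second-moment step is all that is required, since $s_i'$ is only controlled at the single argument $w_*^\top z_i$ rather than uniformly in $w$.
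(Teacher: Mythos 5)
Your proposal is correct and follows essentially the same route as the paper: instantiate Theorem~\ref{thm:sgld_smooth} with $L_\ell=L_sR_z$, $H=\frac{L_s}{n}ZZ^\top$ so $\trace(H)\le L_sR_z$, and a mode-variance bound of order $R_zb_s^2$. The only (immaterial) difference is in the noise bound: you use variance $\le$ second moment to get $b^2\le R_zb_s^2$, whereas the paper bounds pairwise gradient differences to get $b^2\le 4R_zb_s^2$; both constants are absorbed by the $\Omega(\cdot)$ statement (and your side remark that convexity of $f$ forces each $s_i''\ge 0$ is neither needed nor strictly true, since the Hessian bound only requires $s_i''\le L_s$ from Assumption~\ref{assumption:ridge1}).
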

\begin{proof}[Proof of Corollary~\ref{corollary:sgld_smooth}]
Since function $\ell$ takes form~\eqref{eq:ridge_sg}, we can compute that $\nabla^2 \ell(w,z_i) = s_i(w^\top z_i) z_i z_i^\top$.
Using Assumptions~\ref{assumption:ridge1} and~\ref{assumption:ridge2}, the Lipschitz smoothness $L_\ell = L_s R_z$.
Same as in Corollary~\ref{corollary:GD}, we know that $\nabla^2 f(w)\preceq \frac{1}{n} L_s Z Z^\top = H$.
Therefore, 
$$\trace\lrp{H} = L_s \cdot \frac{1}{n} \trace\lrp{Z Z^\top} \leq L_s R_z,$$
leading to the fact that $ L_\ell\cdot\trace\lrp{H} \leq L_s^2 R_z^2$.

For the stochastic gradient bound $b$ at $w_*$, we apply Assumptions~\ref{assumption:ridge2} and~\ref{assumption:ridge3} to obtain
$$\lrn{ \nabla\ell(w_*,z_i) - \nabla\ell(w_*,z_j) } = \lrn{ s'_i(w_*^\top z_i)z_i-s'_j(w_*^\top z_j)z_j } \leq 2 \sqrt{R_z} b_s.$$
We thus have
\[
\lrn{ \nabla \ell(w_*,z) - \nabla f(w_*) }
= \Big\|\nabla \ell(w,z_i) - \frac{1}{n} \sum_{j=1}^n \nabla \ell(w,z_j)\Big\|
\leq 2 \sqrt{R_z} b_s,
\]
leading to the fact that 
\[
\rE_{z\sim D} \lrb{ \lrn{ \nabla \ell(w_*,z) - \nabla f(w_*) }^2 } \leq 4 R_z b_s^2.
\]
Therefore, the stochastic gradient variance bound in Assumption~\ref{assumption:general_SG}, $b = 2 \sqrt{R_z} b_s$.
Plugging these bounds into Theorem~\ref{thm:sgld_smooth} proves the corollary.
\end{proof}
We devote the rest of this section to the proof of Theorem~\ref{thm:sgld_smooth}.
\begin{proof}[Proof of Theorem~\ref{thm:sgld_smooth}]
We first note that because each $\ell(\cdot,z_i)$ is $L_\ell$-Lipschitz smooth, the stochastic estimate of function $f$, $\widetilde{f} = \frac{1}{|\mathcal{S}|} \sum_{z_i\in\mathcal{S}} \ell(\widetilde{w}_t,z_i)$ is $L_\ell$-Lipschitz smooth:
\begin{align*}
\lrn{ \frac{1}{|\mathcal{S}|} \sum_{z_i\in\mathcal{S}} \nabla\ell(y,z_i) - \nabla\ell(x,z_i) }^2
&\leq \frac{1}{|\mathcal{S}|^2} \lrp{ \sum_{z_i\in\mathcal{S}} \lrn{ \nabla\ell(y,z_i) - \nabla\ell(x,z_i) } }^2 \\
&\leq \frac{1}{|\mathcal{S}|} \sum_{z_i\in\mathcal{S}} \lrn{ \nabla\ell(y,z_i) - \nabla\ell(x,z_i) }^2 \\
&\leq \frac{2L_\ell}{|\mathcal{S}|} \sum_{z_i\in\mathcal{S}} \lrp{ \ell(y,z_i)-\ell(x,z_i)-\nabla \ell(x,z_i)^\top (y-x) } \\
&= 2 L_\ell \lrp{ \widetilde{f}(y) - \widetilde{f}(x) + \nabla \widetilde{f}(x)^\top (y-x) }.
\end{align*}
We thereby invoke the next lemma.
\begin{lemma}
Assume that function $f$ is convex, and that its stochastic estimate $\widetilde{f}$ is $L_\ell$-Lipschitz smooth.
Then
\begin{align*}
W_2^2(p_t, p)
&\leq W_2^2(\widetilde{p}_t, p) - 2 \widetilde{\eta}_t \lrp{ f(\widetilde{p}_t) - f(p) } \\
&+ \widetilde{\eta}_t^2 \lrp{ 4 L_\ell [Q(\widetilde{p}_t)-Q(p)] + 2 \rE_{ (\widetilde{w}_t,w') \sim \gamma_t} \lrb{ \rE_{\mathcal{S}|\widetilde{w}_t} \lrn{ \nabla \widetilde{f}(w',\mathcal{S}) }^2 } },
\end{align*}
where $f(q) = \rE_{w\sim q} f(w)$, and $\gamma_t \in \Gamma_{\mathrm{opt}}(\widetilde{p}_t,p)$ is the optimal coupling between $\widetilde{p}_t$ and $p$.
\label{lem:sgd_conv}
\end{lemma}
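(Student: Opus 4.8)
\emph{Proof plan.} The plan is to run the textbook one-step analysis of stochastic gradient descent, but carried out on the \emph{optimal coupling} of $\widetilde{p}_t$ with the target $p=p_*$, so that the expansion of a single step produces a contraction measured in $W_2^2$. Fix $\gamma_t\in\Gamma_{\mathrm{opt}}(\widetilde{p}_t,p)$, realised as the pair $(\widetilde{w}_t,w')$ with $w'\sim p$ and $\widetilde{w}_t$ its image under the associated optimal transport map; let $w_t=\widetilde{w}_t-\widetilde{\eta}_t\,\nabla\widetilde{f}(\widetilde{w}_t,\mathcal{S})$ be the SGLD update~\eqref{eq:sgld-2} applied to this very $\widetilde{w}_t$, with $\mathcal{S}$ a freshly drawn minibatch (independent of $w'$). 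Then $(w_t,w')$ is a---generally suboptimal---coupling of $\mu_t$ and $p$, so $W_2^2(p_t,p)\le\rE\lrn{w_t-w'}^2$. Expanding $\lrn{w_t-w'}^2=\lrn{\widetilde{w}_t-w'}^2-2\widetilde{\eta}_t\,\nabla\widetilde{f}(\widetilde{w}_t,\mathcal{S})^\top(\widetilde{w}_t-w')+\widetilde{\eta}_t^2\lrn{\nabla\widetilde{f}(\widetilde{w}_t,\mathcal{S})}^2$, taking $\rE_{\mathcal{S}|\widetilde{w}_t}$, and using unbiasedness $\rE_{\mathcal{S}|\widetilde{w}_t}\nabla\widetilde{f}(\widetilde{w}_t,\mathcal{S})=\nabla f(\widetilde{w}_t)$ turns the linear term into $-2\widetilde{\eta}_t\,\nabla f(\widetilde{w}_t)^\top(\widetilde{w}_t-w')\le-2\widetilde{\eta}_t\lrp{f(\widetilde{w}_t)-f(w')}$ by convexity of $f$. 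Averaging over $\gamma_t$, and using optimality of $\gamma_t$ so that $\rE_{\gamma_t}\lrn{\widetilde{w}_t-w'}^2=W_2^2(\widetilde{p}_t,p)$, yields the intermediate bound
\[
W_2^2(p_t,p)\le W_2^2(\widetilde{p}_t,p)-2\widetilde{\eta}_t\lrp{f(\widetilde{p}_t)-f(p)}+\widetilde{\eta}_t^2\,\rE_{(\widetilde{w}_t,w')\sim\gamma_t}\,\rE_{\mathcal{S}|\widetilde{w}_t}\lrn{\nabla\widetilde{f}(\widetilde{w}_t,\mathcal{S})}^2 .
\]

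It remains to split the last (second-moment) term into the two pieces appearing in the lemma. Using $\lrn{a}^2\le 2\lrn{a-b}^2+2\lrn{b}^2$ with $a=\nabla\widetilde{f}(\widetilde{w}_t,\mathcal{S})$ and $b=\nabla\widetilde{f}(w',\mathcal{S})$, the contribution of $b$ is exactly $2\,\rE_{(\widetilde{w}_t,w')\sim\gamma_t}\rE_{\mathcal{S}|\widetilde{w}_t}\lrn{\nabla\widetilde{f}(w',\mathcal{S})}^2$, the ``variance at $w'$'' term. For the contribution of $a-b$ I would invoke the $L_\ell$-cocoercivity of the minibatch loss $\widetilde{f}(\cdot,\mathcal{S})$ established just above the lemma (a consequence of Assumption~\ref{assumption:smooth_SG}): $\lrn{\nabla\widetilde{f}(\widetilde{w}_t,\mathcal{S})-\nabla\widetilde{f}(w',\mathcal{S})}^2\le 2L_\ell\lrp{\widetilde{f}(\widetilde{w}_t,\mathcal{S})-\widetilde{f}(w',\mathcal{S})-\nabla\widetilde{f}(w',\mathcal{S})^\top(\widetilde{w}_t-w')}$. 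Taking $\rE_{\mathcal{S}|\widetilde{w}_t}$ (again $w'$ is $\mathcal{S}$-independent) and then $\rE_{\gamma_t}$ collapses the right-hand side, so this piece contributes at most $4L_\ell\lrp{f(\widetilde{p}_t)-f(p)-\rE_{\gamma_t}\lrb{\nabla f(w')^\top(\widetilde{w}_t-w')}}$.

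The final---and main---step is to show that this Bregman-type quantity is at most $Q(\widetilde{p}_t)-Q(p)$; this is the only place where one must use that $p$ is the target $p_*$, and it is the same reduction that underlies Lemma~\ref{lem:grad-square-Q-loss}. Since $p_*\propto\exp(-\beta^{-1}(f+g))$ one has $\nabla f=-\nabla g-\beta\nabla\ln p_*$, so $-\rE_{\gamma_t}\lrb{\nabla f(w')^\top(\widetilde{w}_t-w')}=\rE_{\gamma_t}\lrb{\nabla g(w')^\top(\widetilde{w}_t-w')}+\beta\,\rE_{\gamma_t}\lrb{\nabla\ln p_*(w')^\top(\widetilde{w}_t-w')}$. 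Convexity of $g$ bounds the first term by $\rE_{w\sim\widetilde{p}_t}g(w)-\rE_{w\sim p_*}g(w)$; displacement convexity of the entropy $p\mapsto\rE_{w\sim p}\ln p(w)$ (whose Wasserstein gradient at $p_*$ is $\nabla\ln p_*$), evaluated along the $W_2$-geodesic from $p_*$ to $\widetilde{p}_t$ generated by the transport map underlying $\gamma_t$, bounds the second term by $H(\widetilde{p}_t)-H(p_*)$. Adding $\rE_{w\sim\widetilde{p}_t}f(w)-\rE_{w\sim p_*}f(w)$ reassembles $Q(\widetilde{p}_t)-Q(p_*)$, giving $f(\widetilde{p}_t)-f(p)-\rE_{\gamma_t}\lrb{\nabla f(w')^\top(\widetilde{w}_t-w')}\le Q(\widetilde{p}_t)-Q(p)$. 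Substituting the two pieces back into the intermediate bound and rearranging gives the claim. I expect the last step to be the delicate one: everything before it is the routine SGD expansion with extra minibatch bookkeeping, whereas here one must justify realising $\gamma_t$ by an optimal transport map and invoke displacement convexity of the entropy with the correct orientation (and implicitly that the relevant maps are regular enough for these first-order comparisons to hold).
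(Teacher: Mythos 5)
Your proposal is correct and follows essentially the same route as the paper: the SGD one-step expansion on the optimal coupling, the split $\lrn{a}^2\le 2\lrn{a-b}^2+2\lrn{b}^2$, cocoercivity of $\widetilde{f}$ from Assumption~\ref{assumption:smooth_SG}, and the reduction of the resulting Bregman-type quantity to $Q(\widetilde{p}_t)-Q(p)$, which the paper obtains by citing equation~\eqref{eq:Q_diff_U_diff} from the proof of Lemma~\ref{lem:grad-square-Q-loss} (using stationarity of $p_*$, convexity of $g$, and geodesic convexity of the entropy) rather than re-deriving it inline as you do. Your observation that this last step is the only place requiring $p=p_*$ matches the paper's treatment.
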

Taking $\widetilde{\eta}_t = \lrp{1-e^{-m\eta_t}}/m$ and combining Lemma~\ref{lem:entropy} and Lemma~\ref{lem:sgd_conv}, we obtain that
\begin{align}
2 \widetilde{\eta}_t \lrp{1-2\widetilde{\eta}_t L_\ell} \lrp{ Q(\widetilde{p}_t) - Q(p) } 
\leq e^{-m\eta_t} W_2^2(p_{t-1}, p) - W_2^2(p_t, p) 
+ 2 \widetilde{\eta}_t^2 \rE_{ (\widetilde{w}_t,w') \sim \gamma_t} \lrb{ \rE_{\mathcal{S}|\widetilde{w}_t} \lrn{ \nabla \widetilde{f}(w',\mathcal{S}) }^2 }.
\label{eq:sgd_conv}
\end{align}
We then adapt Lemma~\ref{lem:smooth-grad-square-bound} to the stochastic gradient method.
\begin{lemma}[Stochastic Gradient Counterpart of Lemma~\ref{lem:smooth-grad-square-bound}]
Assume that
\[
\nabla^2 f(w) \preceq H , \quad g(w) = \frac{m}{2} \|w\|_2^2 .
\]
Let 
\[
w_*=\arg\min_w \left[ f(w) + g(w) \right] ,
\]
 and $p$ be the solution of \eqref{eq:expo}. Then for $L_\ell$-Lipschitz smooth function $\widetilde{f}$, at $p=p_*$ and consequently $\gamma_t \in \Gamma_{\mathrm{opt}}(\widetilde{p}_t,p_*)$,
\begin{align}
\rE_{ (\widetilde{w}_t,w') \sim \gamma_t} \lrb{ \rE_{\mathcal{S}|\widetilde{w}_t} \lrn{ \nabla \widetilde{f}(w',\mathcal{S}) }^2 } 
\leq \frac{2 \beta L_\ell}{m} \trace(H)
+ 4 m^2 \lrn{w_*}^2 + 4 \rE_{\mathcal{S}} \lrn{ \nabla \widetilde{f}(w_*,\mathcal{S}) - \nabla f(w_*) }^2. 
\end{align}
\label{lem:smooth-SG-square-bound}
\end{lemma}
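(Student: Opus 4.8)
The plan is to peel the left‑hand side apart in three moves: reduce it to an expectation against the stationary distribution $p_*$ via the optimal coupling, split off the gradient at the mode $w_*$ and invoke co‑coercivity, and finally control the leftover by a Gaussian‑type integration‑by‑parts identity for $p_*$. Concretely, since $\gamma_t\in\Gamma_{\mathrm{opt}}(\widetilde{p}_t,p_*)$ has second marginal $p_*$ and the minibatch $\mathcal{S}$ is drawn freshly, independently of $\widetilde{w}_t$ (hence of $w'$), the left‑hand side equals $\rE_{w'\sim p_*}\rE_{\mathcal{S}}\lrn{\nabla\widetilde{f}(w',\mathcal{S})}^2$. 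I would then write $\nabla\widetilde{f}(w',\mathcal{S})=\big[\nabla\widetilde{f}(w',\mathcal{S})-\nabla\widetilde{f}(w_*,\mathcal{S})\big]+\big[\nabla\widetilde{f}(w_*,\mathcal{S})-\nabla f(w_*)\big]+\nabla f(w_*)$ and apply $\lrn{a+b+c}^2\le 2\lrn{a}^2+4\lrn{b}^2+4\lrn{c}^2$. First‑order optimality of $w_*$ for $f+g$ gives $\nabla f(w_*)=-\nabla g(w_*)=-m w_*$, so the last summand contributes exactly $4m^2\lrn{w_*}^2$ and the middle one contributes the variance term $4\,\rE_{\mathcal{S}}\lrn{\nabla\widetilde{f}(w_*,\mathcal{S})-\nabla f(w_*)}^2$ verbatim; it remains to bound $2\,\rE_{w'\sim p_*}\rE_{\mathcal{S}}\lrn{\nabla\widetilde{f}(w',\mathcal{S})-\nabla\widetilde{f}(w_*,\mathcal{S})}^2$ by $\tfrac{2\beta L_\ell}{m}\trace(H)$.

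\textbf{Co‑coercivity.} Each $\widetilde{f}(\cdot,\mathcal{S})$ is convex and $L_\ell$‑smooth, being an average of the $\ell(\cdot,z_i)$, which are convex and $L_\ell$‑smooth by Assumption~\ref{assumption:smooth_SG}; hence co‑coercivity gives $\lrn{\nabla\widetilde{f}(w',\mathcal{S})-\nabla\widetilde{f}(w_*,\mathcal{S})}^2\le L_\ell\,\lrw{\nabla\widetilde{f}(w',\mathcal{S})-\nabla\widetilde{f}(w_*,\mathcal{S}),\,w'-w_*}$. Averaging over $\mathcal{S}$ and using unbiasedness $\rE_{\mathcal{S}}\nabla\widetilde{f}(w,\mathcal{S})=\nabla f(w)$ reduces the remaining task to the single inequality
\[
\rE_{w'\sim p_*}\lrw{\nabla f(w')-\nabla f(w_*),\,w'-w_*}\ \le\ \frac{\beta}{m}\,\trace(H).
\]

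\textbf{The key step: integration by parts.} This is the heart of the argument, and where the dimension‑free $\trace(H)$ (rather than a factor $d$) is earned. Write $V=\beta^{-1}(f+g)$, so $p_*\propto e^{-V}$; integration by parts ($\nabla p_*=-p_*\nabla V$, with no boundary terms since $p_*$ is strongly log‑concave with smooth potential) gives $\rE_{p_*}\lrw{w'-w_*,\nabla V(w')}=d$, i.e. $\rE_{p_*}\lrw{w'-w_*,\nabla f(w')+m w'}=\beta d$. Using $\nabla f(w_*)+m w'=m(w'-w_*)$ this rearranges into the clean identity $\rE_{p_*}\lrw{\nabla f(w')-\nabla f(w_*),w'-w_*}=\beta d-m\,\rE_{p_*}\lrn{w'-w_*}^2$. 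I would then lower‑bound $\rE_{p_*}\lrn{w'-w_*}^2\ge\trace\big(\mathrm{Cov}(p_*)\big)$ (the second moment about the mode dominates the variance) and invoke the Cram\'er--Rao / reverse Brascamp--Lieb bound $\mathrm{Cov}(p_*)\succeq\big(\rE_{p_*}\nabla^2 V\big)^{-1}\succeq\beta(H+m\mI)^{-1}$, the last step since $\nabla^2 V\preceq\beta^{-1}(H+m\mI)$ pointwise. Diagonalizing $H$ with eigenvalues $\lambda_k\ge 0$, the identity then yields $\rE_{p_*}\lrw{\nabla f(w')-\nabla f(w_*),w'-w_*}\le\beta\sum_k\big(1-\tfrac{m}{\lambda_k+m}\big)=\beta\sum_k\tfrac{\lambda_k}{\lambda_k+m}\le\tfrac{\beta}{m}\sum_k\lambda_k=\tfrac{\beta}{m}\trace(H)$, which closes the lemma.

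\textbf{Main obstacle.} The delicate point is precisely this last inequality. The tempting shortcut — bounding $\rE_{p_*}\lrw{\nabla f(w')-\nabla f(w_*),w'-w_*}\le\trace\!\big(H\,\rE_{p_*}[(w'-w_*)(w'-w_*)^\top]\big)$ via $\nabla^2 f\preceq H$ — would require the second‑moment matrix of $p_*$ about its \emph{mode} to be $\preceq\tfrac{\beta}{m}\mI$, which strong log‑concavity does not give directly (the mode need not be the mean, so a rank‑one correction appears). Replacing that with the integration‑by‑parts identity plus the covariance lower bound is what sidesteps the mode‑versus‑mean gap and produces the correct $\trace(H)$ scaling. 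The only other thing to verify is the routine regularity needed for the integration by parts, which is immediate for a strongly log‑concave $p_*$ with a $C^2$ potential.
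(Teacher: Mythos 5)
Your proof is correct, and while its outer scaffolding matches the paper's own argument (the $2$--$4$--$4$ splitting of $\nabla\widetilde f(w',\mathcal{S})$ around $w_*$, the use of $\nabla f(w_*)=-mw_*$ for the $4m^2\lrn{w_*}^2$ term, the verbatim variance term, and the reduction via $L_\ell$-smoothness of $\widetilde f(\cdot,\mathcal{S})$ to a Bregman-type expectation under $p_*$), the key step is genuinely different. The paper keeps the one-sided interpolation inequality of Assumption~\ref{assumption:smooth_SG}, so it must bound $\rE_{w\sim p_*}\lrb{f(w)-f(w_*)-\nabla f(w_*)^\top(w-w_*)}$; it does so by viewing $p_*$ as the free-energy minimizer relative to the Gaussian reference $p_0=\mathcal{N}(w_*,(\beta/m)\mI)$, which gives $\rE_{p_*}\Delta f\le-\beta\ln\rE_{p_0}\exp(-\beta^{-1}\Delta f)\le\tfrac{\beta}{2}\ln\det(\mI+H/m)\le\tfrac{\beta}{2m}\trace(H)$, and then multiplies by $4L_\ell$. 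You instead use co-coercivity, which leaves the symmetrized quantity $\rE_{p_*}\lrw{\nabla f(w')-\nabla f(w_*),w'-w_*}$, and you control it through the exact stationarity identity $\rE_{p_*}\lrw{w'-w_*,\nabla f(w')+mw'}=\beta d$ together with the Cram\'er--Rao lower bound $\mathrm{Cov}(p_*)\succeq\lrp{\rE_{p_*}\nabla^2 V}^{-1}\succeq\beta(H+m\mI)^{-1}$, yielding $\beta\sum_k\lambda_k/(\lambda_k+m)\le\tfrac{\beta}{m}\trace(H)$ and hence exactly the same constant $\tfrac{2\beta L_\ell}{m}\trace(H)$. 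Both routes sidestep the mode-versus-mean issue that costs extra constants in the proof of Lemma~\ref{lem:smooth-grad-square-bound}, and you correctly diagnose why the naive second-moment-about-the-mode shortcut fails; your mechanism is an exact identity plus a Fisher-information argument (relying on integration-by-parts regularity and the identity $\rE_{p_*}[\nabla V\nabla V^\top]=\rE_{p_*}[\nabla^2 V]$, both unproblematic here since $\nabla^2 f$ exists and $p_*$ has Gaussian-type decay), whereas the paper's is a Gibbs/Donsker--Varadhan variational argument plus an explicit Gaussian integral needing only $\Delta f(w)\le\tfrac12(w-w_*)^\top H(w-w_*)$. Your intermediate bound $\beta\sum_k\lambda_k/(\lambda_k+m)$ is even slightly sharper than $\tfrac{\beta}{m}\trace(H)$, just as the paper's $\tfrac{\beta}{2}\ln\det(\mI+H/m)$ is sharper than $\tfrac{\beta}{2m}\trace(H)$, so nothing is lost in either direction.
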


For the last piece of information, we establish the variance of the stochastic gradient at the mode, $\rE_{\mathcal{S}} \lrn{ \nabla \widetilde{f}(w_*,\mathcal{S}) - \nabla f(w_*) }_2^2$.
% \begin{lemma}
% \label{lem:sg_variance}
% Suppose minibatch $\mathcal{S}$ contains i.i.d. samples $z$ from the data set $D$, which form unbiased estimator of the full gradient.
% Then under Assumption~\ref{assumption:general_SG},
% \begin{align*}
% \rE_{\mathcal{S}} \lrn{ \nabla \widetilde{f}(w_*,\mathcal{S}) - \nabla f(w_*) }_2^2
% \leq \frac{b^2}{|\mathcal{S}|} .
% \end{align*}
% \end{lemma}
For samples $z_i$ that are i.i.d. draws from the data set and are unbiased estimators of $\nabla f(w_*) = \frac{1}{n} \sum_{j=1}^n \nabla \ell(w_*,z_j)$, we have
\begin{align*}
\rE_{\mathcal{S}} \lrb{\lrn{\sum_{z_i\in\mathcal{S}}\lrp{ \nabla \ell(w_*,z_i) - \frac{1}{n} \sum_{j=1}^n \nabla \ell(w_*,z_j) }}^2} 
= |\mathcal{S}| \cdot \rE_{z\sim D} \lrb{ \lrn{ \nabla \ell(w_*,z) - \frac{1}{n} \sum_{j=1}^n \nabla \ell(w_*,z_j) }^2 } 
\leq |\mathcal{S}| \cdot b^2,
\end{align*}
Leading to the bound that
\begin{align*}
\rE_{\mathcal{S}} \lrn{ \nabla \widetilde{f}(w_*,\mathcal{S}) - \nabla f(w_*) }_2^2
= \frac{1}{|\mathcal{S}|^2} \rE_{\mathcal{S}} \lrb{\lrn{\sum_{z_i\in\mathcal{S}}\lrp{ \nabla \ell(w_*,z_i) - \frac{1}{n} \sum_{j=1}^n \nabla \ell(w_*,z_j) }}^2} 
\leq \frac{b^2}{|\mathcal{S}|}.
\end{align*}

Plugging this result and Lemma~\ref{lem:smooth-SG-square-bound} into equation~\eqref{eq:sgd_conv} at $p=p_*$, we obtain the final bound that
\begin{align*}
2 \widetilde{\eta}_t \lrp{1-2\widetilde{\eta}_t L_s R_z} \lrp{ Q(\widetilde{p}_t) - Q(p_*) } 
&\leq \lrp{1-m\widetilde{\eta}_t} W_2^2(p_{t-1}, p_*) - W_2^2(p_t, p_*)  \\
&+ 4 \widetilde{\eta}_t^2 \lrp{ \beta \frac{L_\ell}{m} \trace(H)
+ 2 m^2 \lrn{w_*}^2 + 2 \frac{b^2}{|\mathcal{S}|} }.
\label{eq:sgd_conv}
\end{align*}
This leads to a convergence rate, similar to the full gradient case, of 
\begin{align*}
T = \Omega\lrp{ \max\lrp{ \frac{L_\ell \trace(H)}{m^2 \epsilon}, 
\frac{\beta^{-1} m \lrn{w_*}^2}{\epsilon}, 
\frac{\beta^{-1}}{|\mathcal{S}|} \frac{b^2}{m \epsilon} } },
\end{align*}
so that the weighted-averaged KL divergence:
\[
\sum_{t=1}^T
\frac{ 1/\lrp{m\widetilde{\eta}_0} + t/2 }{ T/\lrp{m\widetilde{\eta}_0} + T(T+1)/4 } \beta^{-1}
[Q(\widetilde{p}_t) - Q(p_*) ] \leq \epsilon.
\]
Since $\beta=1/n$, and that $m\lrn{w_*}^2 \leq 2 f(0) = 2 \beta U(0)$,
\begin{align*}
T = \Omega\lrp{ \max\lrp{ \frac{L_\ell \trace(H)}{m^2 \epsilon}, 
\frac{U(0)}{\epsilon}, 
\frac{n}{|\mathcal{S}|} \frac{b^2}{m \epsilon} } }.
\end{align*}
\end{proof}

\section{Proofs of the Supporting Lemmas}

\subsection{Proofs of Lemmas in the Lipschitz Continuous Case}
\subsubsection{Proof of Lemma~\ref{lem:entropy}}
\label{sec:relative_entropy}
Before proving Lemma~\ref{lem:entropy}, we first state a result in~\citep[Theorem 23.9 of][]{Villani} that establishes the strong subdifferential of the Wasserstein-2 distance.
\begin{lemma}
\label{lem_aux:subdif}
Assume that $\mu_t, \hat{\mu}_t$ solve the following continuity equations
\[
\frac{\partial \mu_t}{\partial t} + \nabla\cdot(\xi_t \mu_t) = 0, \qquad
\frac{\partial \hat{\mu}_t}{\partial t} + \nabla\cdot(\hat{\xi}_t \hat{\mu}_t) = 0.
\]
Then 
\[
\frac{1}{2} \frac{d}{d t} W_2^2( \mu_t, \hat{\mu}_t ) 
= - \int \lrw{ \tilde{\nabla} \psi_t, \xi_t } d \mu_t - \int \lrw{ \tilde{\nabla} \hat{\psi}_t, \hat{\xi}_t } d \hat{\mu}_t,
\]
where $\psi_t$ and $\hat{\psi}_t$ are the optimal transport vector fields:
\[
\exp(\tilde{\nabla} \psi_t)_{\#} \mu_t = \hat{\mu_t}, \qquad
\exp(\tilde{\nabla} \hat{\psi}_t)_{\#} \hat{\mu}_t = \mu_t.
\]
\end{lemma}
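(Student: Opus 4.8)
The plan is to recognize this statement as the first-variation (differentiation) formula for the squared Wasserstein-$2$ distance along a pair of absolutely continuous curves $t\mapsto\mu_t$, $t\mapsto\hat\mu_t$ in $(\mathcal{P}_2(\rR^d),W_2)$ --- which is exactly Theorem 23.9 of \cite{Villani} --- so a self-contained ``proof'' amounts to reconstructing that argument, and I would do so through Kantorovich duality. First I would fix a time $t$ and let $(\phi_t,\hat\phi_t)$ be optimal Kantorovich potentials for the cost $c(x,y)=\tfrac12\|x-y\|_2^2$ between $\mu_t$ and $\hat\mu_t$, so that $\tfrac12 W_2^2(\mu_t,\hat\mu_t)=\int\phi_t\,d\mu_t+\int\hat\phi_t\,d\hat\mu_t$, with $\phi_t(x)+\hat\phi_t(y)\le c(x,y)$ pointwise and equality on the support of the optimal plan. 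By Brenier's theorem the optimal transport map from $\mu_t$ to $\hat\mu_t$ is $x\mapsto x-\nabla\phi_t(x)$; comparing this with the given identity $\exp(\tilde\nabla\psi_t)_{\#}\mu_t=\hat\mu_t$ and using $\exp_x(v)=x+v$ on $\rR^d$ identifies $\tilde\nabla\psi_t=-\nabla\phi_t$, and symmetrically $\tilde\nabla\hat\psi_t=-\nabla\hat\phi_t$.

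Next I would differentiate $\tfrac12 W_2^2(\mu_t,\hat\mu_t)=\int\phi_t\,d\mu_t+\int\hat\phi_t\,d\hat\mu_t$ in $t$ and split the derivative into two pieces: the piece where the measures move with the potentials frozen, and the piece where the potentials move. For the first piece, I would substitute the continuity equations $\partial_t\mu_t=-\nabla\cdot(\xi_t\mu_t)$, $\partial_t\hat\mu_t=-\nabla\cdot(\hat\xi_t\hat\mu_t)$ and integrate by parts:
\[
\int\phi_t\,\partial_t\mu_t=\int\langle\nabla\phi_t,\xi_t\rangle\,d\mu_t=-\int\langle\tilde\nabla\psi_t,\xi_t\rangle\,d\mu_t,
\]
and likewise for the hatted term; this already produces the claimed right-hand side. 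The second piece, $\int\partial_t\phi_t\,d\mu_t+\int\partial_t\hat\phi_t\,d\hat\mu_t$, must vanish: since $(\phi_t,\hat\phi_t)$ maximizes the dual functional at time $t$, a first-order perturbation of the potentials cannot change its value, so this piece contributes nothing --- an envelope argument.

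The main obstacle is making that second step rigorous, since the potentials need not depend differentiably on $t$ (indeed $t\mapsto W_2^2(\mu_t,\hat\mu_t)$ is in general only differentiable for a.e.\ $t$). The remedy I would follow is to never differentiate the potentials: deploy $(\phi_t,\hat\phi_t)$ as \emph{admissible but generally suboptimal} competitors in the dual problem at nearby times $t\pm h$, so that $\tfrac12 W_2^2(\mu_{t+h},\hat\mu_{t+h})\ge\int\phi_t\,d\mu_{t+h}+\int\hat\phi_t\,d\hat\mu_{t+h}$; a first-order expansion of the right-hand side in $h$ through the continuity equations then gives the asserted expression as a one-sided bound on the difference quotient, and repeating the comparison with the roles of $t$ and $t+h$ swapped pins down the two-sided derivative. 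This goes through under the standard hypotheses --- finite second moments, $\xi_t\in L^2(\mu_t)$ and $\hat\xi_t\in L^2(\hat\mu_t)$ with sufficient time-regularity, and absolute continuity of $\mu_t,\hat\mu_t$ so that Brenier maps exist --- yielding the identity for a.e.\ $t$. In the application to Lemma~\ref{lem:entropy} these are satisfied because $\widetilde{p}_t$ evolves by the Fokker--Planck equation of the SDE~\eqref{eq:diffuse}, hence has a smooth positive density with velocity field $\xi_t=-\nabla g-\beta\nabla\log\widetilde{p}_t$, while the second curve is taken to be the stationary density $p$ (so $\hat\xi_t\equiv 0$); integrating the resulting ordinary differential equation in $t$ and invoking the $m$-strong convexity of $g$ then produces the stated inequality.
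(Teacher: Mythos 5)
The paper offers no proof of this lemma at all: it is imported verbatim (in Riemannian notation, whence the $\exp(\tilde{\nabla}\psi_t)$) from Theorem~23.9 of \cite{Villani}, so there is no internal argument to compare against. What you give is a correct reconstruction of the standard proof of that theorem, essentially the duality argument of Villani and of \citet[Theorem 8.4.7]{gradient_flow}: your sign bookkeeping is right (with the convention $\phi_t(x)+\hat{\phi}_t(y)\le\tfrac12\|x-y\|^2$ the Brenier map is $\mathrm{id}-\nabla\phi_t$, hence $\tilde{\nabla}\psi_t=-\nabla\phi_t$), the integration by parts against the continuity equations produces exactly the claimed right-hand side, and you correctly recognize that the informal ``envelope'' step must be replaced by using the time-$t$ potentials as admissible but suboptimal dual competitors at nearby times. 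Two minor refinements: for the matching bound it is cleaner to avoid swapping the roles of $t$ and $t+h$ (which requires a stability argument for the potentials $\phi_{t+h}$ as $h\to0$) and instead observe that $t\mapsto W_2(\mu_t,\hat{\mu}_t)$ is absolutely continuous, hence differentiable a.e., so at any point of differentiability the one-sided competitor inequality applied with $h>0$ and $h<0$ already squeezes the derivative to the stated value; and, accordingly, the identity holds only for a.e.\ $t$ (as in Villani's statement --- the restatement above drops this qualifier), which is all that the application to Lemma~\ref{lem:entropy} needs, since there the formula is only used with $\hat{\xi}_t\equiv0$ and then integrated in time via Gronwall, exactly as you describe.
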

Writing $p_t$ and $\hat{p}_t$ as the density functions of $\mu_t$ and $\hat{\mu}_t$, we take $\xi_t = - \beta \nabla \log p_t - \nabla g$ and $\hat{\xi}_t = 0$ so that $\mu_t$ follows the Fokker-Planck equation  equation associated with process~\eqref{eq:diffuse} and $\hat{\mu}_t = \nu$ is a constant measure.
This leads to the following equation
\[
\frac{1}{2} \frac{d}{d s} W_2^2(\mu_s,\nu) 
= \int \lrw{\beta \nabla \log p_s + \nabla g, (\tilde{\nabla} \psi)_{\mu_s}^{\nu} } d \mu_s.
\]

For $\mu$ being the probability measure associated with its density $p$, define relative entropy $\beta^{-1} F(\mu)$, where $F(\mu) = \Ep{w\sim p}{g(w)} + H(p)$.
We can then use the fact that the relative entropy $\beta^{-1} F$ is $\beta^{-1} m$-geodesically strongly convex (see Proposition 9.3.2 of~\cite{gradient_flow}) to prove the following Lemma.
\begin{lemma}
\label{lem_aux:entropy}
For $p$ being the density of $\mu$,
\[
F(\nu) - F(\mu) - \frac{m}{2} W_2^2(\mu, \nu) \geq \int \lrw{\beta \nabla \log p + \nabla g, (\tilde{\nabla} \psi)_{\mu}^{\nu} } d \mu.
\]
\end{lemma}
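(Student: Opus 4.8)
The plan is to derive this inequality directly from the definition of geodesic strong convexity of the functional $\beta^{-1}F$ along $W_2$-geodesics, combined with the first-variation (subdifferential) formula for $F$. Recall that $F(\mu) = \rE_{w\sim p}[g(w)] + H(p)$ is the sum of a potential energy term (with potential $g$) and the negative entropy (internal energy) term $H(p) = \beta\,\rE_{w\sim p}\ln p(w)$. A standard computation of the Wasserstein gradient (see, e.g., Chapter 10 of \cite{gradient_flow} or Chapter 15 of \cite{Villani}) gives that the first variation of $F$ at $\mu$ in the direction of a geodesic toward $\nu$ is exactly $\int \langle \nabla(\beta\ln p + g), (\tilde\nabla\psi)_\mu^\nu\rangle\, d\mu$, where $(\tilde\nabla\psi)_\mu^\nu$ is the initial velocity of the constant-speed geodesic from $\mu$ to $\nu$.

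Concretely, I would let $(\mu_s)_{s\in[0,1]}$ be the constant-speed $W_2$-geodesic with $\mu_0=\mu$ and $\mu_1=\nu$, generated by the velocity field $v_s$ with $v_0 = (\tilde\nabla\psi)_\mu^\nu$ and $\int \|v_0\|^2 d\mu = W_2^2(\mu,\nu)$. Define $h(s) = \beta^{-1}F(\mu_s)$. The $\beta^{-1}m$-geodesic strong convexity of $\beta^{-1}F$ (Proposition 9.3.2 of \cite{gradient_flow}) says
\[
h(1) \geq h(0) + h'(0) + \frac{m}{2\beta} W_2^2(\mu,\nu).
\]
Multiplying through by $\beta$ yields $F(\nu) \geq F(\mu) + \beta h'(0) + \frac{m}{2}W_2^2(\mu,\nu)$. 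It then remains to identify $\beta h'(0)$, i.e. the derivative of $F(\mu_s)$ at $s=0$, with the integral $\int \langle \beta\nabla\log p + \nabla g, (\tilde\nabla\psi)_\mu^\nu\rangle\, d\mu$; rearranging gives exactly the claimed inequality.

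For the identification of the derivative, I would treat the two pieces of $F$ separately. For the potential energy $\mathcal V(\mu_s) = \int g\, d\mu_s$, the continuity equation $\partial_s\mu_s + \nabla\cdot(v_s\mu_s)=0$ gives $\frac{d}{ds}\mathcal V(\mu_s) = \int \langle \nabla g, v_s\rangle\, d\mu_s$, so at $s=0$ this is $\int\langle\nabla g, (\tilde\nabla\psi)_\mu^\nu\rangle\,d\mu$. For the entropy $\mathcal H(\mu_s) = \beta\int p_s\ln p_s\, dx$, the analogous computation (integration by parts, using $\nabla p_s\cdot p_s^{-1} = \nabla\ln p_s$) gives $\frac{d}{ds}\mathcal H(\mu_s) = \beta\int\langle\nabla\ln p_s, v_s\rangle\, d\mu_s$, so at $s=0$ this is $\int\langle\beta\nabla\ln p, (\tilde\nabla\psi)_\mu^\nu\rangle\,d\mu$. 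Summing these gives $\beta h'(0)$ as required, and substituting into the strong-convexity inequality and rearranging the $\frac{m}{2}W_2^2$ term to the left completes the proof.

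The main obstacle — and the part that deserves the most care — is the regularity needed to legitimately differentiate $F$ along the geodesic at the endpoint $s=0$ and to perform the integration by parts for the entropy term; in full generality one needs $\mu$ to be absolutely continuous with sufficiently regular density and the optimal transport map to be well-behaved, and the entropy functional's slope must be controlled. In practice this is exactly the content of Proposition 9.3.2 and the displacement-convexity machinery of \cite{gradient_flow}, which already packages these technicalities, so I would invoke that result rather than re-derive it; the essentially notational work is matching their first-variation formula to the vector field $\beta\nabla\log p + \nabla g$ appearing in the Fokker–Planck equation for \eqref{eq:diffuse}. One should also note that the subdifferential inequality we state is the ``above the tangent line'' form of geodesic convexity evaluated with the specific subgradient $\beta\nabla\log p + \nabla g$, which is valid because this is precisely the (unique, under our assumptions) element of the Wasserstein subdifferential of $F$ at $\mu$.
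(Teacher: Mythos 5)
Your proposal is correct and follows essentially the same route as the paper: geodesic $\beta^{-1}m$-strong convexity of $\beta^{-1}F$ (Proposition 9.3.2 of Ambrosio--Gigli--Savar\'e) combined with identifying the initial slope of $F$ along the geodesic from $\mu$ to $\nu$ with $\int \langle \beta\nabla\log p + \nabla g, (\tilde{\nabla}\psi)_{\mu}^{\nu}\rangle\, d\mu$. The only minor difference is that the paper lower-bounds the $\liminf$ of the difference quotient via the Wasserstein subdifferential characterization (Theorem 23.14 of Villani) rather than asserting an exact derivative of $F$ along the geodesic at $s=0$, which sidesteps precisely the regularity caveats you flag at the end.
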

\begin{proof}
Let $\mu_t$ be the geodesic between $\mu$ and $\nu$. $\beta^{-1} m$-geodesic strong convexity of $\beta^{-1} F$ states that (see Proposition 9.3.2 of~\cite{gradient_flow}):
\[
\beta^{-1} F(\mu_t) \leq t \beta^{-1} F(\nu) + (1-t) \beta^{-1} F(\mu) - \frac{\beta^{-1} m}{2} t (1-t) W_2^2 (\mu, \nu),
\]
and consequently
\begin{align*}
\frac{F(\mu_t) - F(\mu)}{t} \leq F(\nu) - F(\mu) - \frac{m}{2} (1-t) W_2^2(\mu, \nu).
\end{align*}
By the definition of subdifferential~\citep[c.f.][Theorem 23.14]{Villani} we also have along the diffusion process defined by equation~\eqref{eq:diffuse}:
\[
\lim\inf_{t\downarrow0} \frac{F(\mu_t) - F(\mu)}{t} 
\geq \int \lrw{ \beta \nabla\log p + \nabla g, (\tilde{\nabla} \psi)_{\mu}^{\nu} } d\mu.
\]
Taking the limit of $t\rightarrow0$, we obtain the result.
\end{proof}

\begin{proof}[Proof of Lemma~\ref{lem:entropy}]
Combining Lemma~\ref{lem_aux:subdif} and~\ref{lem_aux:entropy}, we obtain that 
\begin{align}
\frac{1}{2} \frac{d}{d s} W_2^2(\mu_s,\nu) 
= \int \lrw{\beta \nabla \log p_s + \nabla g, (\tilde{\nabla} \psi)_{\mu_s}^{\nu} } d \mu_s
\leq F(\nu) - F(\mu_s) - \frac{m}{2} W_2^2(\mu_s, \nu).
\label{eq:Wasserstein_entropy_diff}
\end{align}
Along the Fokker-Planck equation associated with process~\eqref{eq:diffuse}, $\frac{d}{d s} F(\mu_s) = - \Ep{p_s}{ \lrn{\beta \nabla \log p_s + \nabla g}_2^2 } \leq 0$, meaning that $F(\mu_s)$ is monotonically decreasing.
We obtain from equation~\eqref{eq:Wasserstein_entropy_diff} for $s\in[0,t]$,
\begin{align*}
\frac{1}{2} \frac{d}{d s} W_2^2(\mu_s,\nu) 
\leq \sup_{s\in[0,t]} \lrb{ F(\nu) - F(\mu_s) } - \frac{m}{2} W_2^2(\mu_s, \nu)
= F(\nu) - F(\mu_t) - \frac{m}{2} W_2^2(\mu_s, \nu).
\end{align*}
Applying the Gronwall's inequality, we arrive at the conclusion that
\begin{align*}
\frac{2}{m} \lrp{1- e^{-m \Delta t}} \lrp{F(\mu_t) - F(\nu)}
\leq e^{-m \Delta t} W_2^2(\mu_0,\nu) - W_2^2(\mu_t, \nu).
\end{align*}
Taking $d \mu_t = \tilde{p}_t dx $, $d \mu_0 = p_{t-1} d x$, $d \nu = p dx$, and ${\Delta t} = \eta_t$ finishes the proof.
\end{proof}

\subsubsection{Proof of Lemma~\ref{lem:sgd0}}
\begin{proof}[Proof of Lemma~\ref{lem:sgd0}]
We first state a point-wise result along the gradient descent step~\eqref{eq:ld}:
\begin{align}
2\eta_t \lrp{ f(\widetilde{w}_{t}) - f(w) } \leq \lrn{\widetilde{w}_{t}-w}_2^2 -  \lrn{w_{t}-w}_2^2 + \eta_t^2 G^2.
\label{eq:convex_result0}
\end{align}
This is because 
\begin{align*}
\lrn{ w_t - w }_2^2 
&= \lrn{ \tilde{w}_t - \eta_t \nabla f(\tilde{w}_t) - w }_2^2 \\
&= \lrn{ \tilde{w}_t - w }_2^2 - 2 \eta_t \lrw{ \nabla f(\tilde{w}_t) , \tilde{w}_t - w} + \eta_t^2 \lrn{ \nabla f(\tilde{w}_t) }_2^2 \\
&\leq \lrn{ \tilde{w}_t - w }_2^2 - 2 \eta_t \lrp{ f(\tilde{w}_t) - f(w) } + \eta_t^2 G^2,
\end{align*}
where the last step follows from the convexity and Lipschitz smoothness of $f$.

We then denote the measures corresponding to random variables $w_t$ and $\tilde{w}_t$ to be: $w_t\sim\mu_t$ and $\tilde{w}_t\sim\tilde{\mu}_t$. 
From the definitions, we know that they have densities $p_t$ and $\tilde{p}_t$.

Denote an optimal coupling between $\tilde{\mu}_t$ and $\mu$ (where measure $\mu$ has density $p$, which is the stationary distribution) to be $\gamma\in\Gamma_{opt}(\tilde{\mu}_t, \mu)$.
We then take expectations over $\gamma(\tilde{w}_t, w)$ on both sides of equation~\eqref{eq:convex_result0}:
\begin{align*}
2 \eta_t \lrp{ f(\tilde{p}_t)- f(p) }
&= 2 \eta_t \Ep{(\tilde{w}_t,w)\sim\gamma}{ f(\tilde{w}_t)- f(w) } \\
&\leq \Ep{(\tilde{w}_t,w)\sim\gamma}{ \|\tilde{w}_t -w \|_2^2 } 
- \Ep{(\tilde{w}_t,w)\sim\gamma}{ \|w_{t}-w\|_2^2 } + \eta_t^2 G^2 \\
&= W_2^2(\tilde{p}_t, p) - \Ep{(\tilde{w}_t,w)\sim\gamma}{ \|w_{t}-w\|_2^2 } + \eta_t^2 G^2.
\end{align*}
From the relationship $w_t = \tilde{w}_t - \eta_t \nabla f(\tilde{w}_t)$, we know that the joint distribution of $(w_t, w)$ is $\lrp{\mathrm{id} - \eta_t \nabla f, \, \mathrm{id}}_{\#} \gamma$.
Note that $\tilde{\gamma} = \lrp{\mathrm{id} - \eta_t \nabla f, \, \mathrm{id}}_{\#} \gamma$ also defines a coupling, and therefore
\begin{align*}
\Ep{(\tilde{w}_t,w)\sim\gamma}{ \|w_{t}-w\|_2^2 }
&= \Ep{(w_t,w)\sim\tilde{\gamma}}{ \|w_{t}-w\|_2^2 } \\
&\geq \inf_{\hat{\gamma}\in\Gamma(\mu_t,\mu)} \Ep{(w_t,w)\sim\hat{\gamma}}{ \|w_{t}-w\|_2^2 }
= W_2^2(p_t,p).
\end{align*}
Therefore, 
\begin{align*}
2 \eta_t \lrp{ f(\tilde{p}_t)- f(p) }
\leq W_2^2(\tilde{p}_t, p) - W_2^2(p_t,p) + \eta_t^2 G^2.
\end{align*}
% \begin{align*}
% 2 \eta_t \lrp{ f(\tilde{p}_t)- f(p) }
% &= 2 \eta_t \Ep{(\tilde{w}_t,w)\sim\gamma}{ f(\tilde{w}_t)- f(w) } \\
% &\leq \Ep{(\tilde{w}_t,w)\sim\gamma}{ \|\tilde{w}_t -w \|_2^2 } 
% - \Ep{(\tilde{w}_t,w)\sim\gamma}{ \|w_{t}-w\|_2^2 } + \eta_t^2 \Ep{(\tilde{w}_t,w)\sim\gamma}{ \lrn{ \nabla f(\tilde{w}_t) }_2^2 } \\
% &= W_2^2(\tilde{p}_t, p) - \Ep{(\tilde{w}_t,w)\sim\gamma}{ \|w_{t}-w\|_2^2 } + \eta_t^2 \Ep{(\tilde{w}_t,w)\sim\gamma}{ \lrn{ \nabla f(\tilde{w}_t) }_2^2 }.
% \end{align*}
% From the relationship $w_t = \tilde{w}_t - \eta_t \nabla f(\tilde{w}_t)$, we know that the joint distribution of $(w_t, w)$ is $\lrp{\mathrm{id} - \eta_t \nabla f, \, \mathrm{id}}_{\#} \gamma$.
% Note that $\tilde{\gamma} = \lrp{\mathrm{id} - \eta_t \nabla f, \, \mathrm{id}}_{\#} \gamma$ also defines a coupling, and therefore
% \begin{align*}
% \Ep{(\tilde{w}_t,w)\sim\gamma}{ \|w_{t}-w\|_2^2 }
% &= \Ep{(w_t,w)\sim\tilde{\gamma}}{ \|w_{t}-w\|_2^2 } \\
% &\geq \inf_{\hat{\gamma}\in\Gamma(\mu_t,\mu)} \Ep{(w_t,w)\sim\hat{\gamma}}{ \|w_{t}-w\|_2^2 }
% = W_2^2(p_t,p).
% \end{align*}
% Therefore, 
% \begin{align*}
% 2 \eta_t \lrp{ f(\tilde{p}_t)- f(p) }
% \leq W_2^2(\tilde{p}_t, p) - W_2^2(p_t,p) + \eta_t^2 \Ep{(\tilde{w}_t,w)\sim\gamma}{ \lrn{ \nabla f(\tilde{w}_t) }_2^2 }.
% \end{align*}
\end{proof}

\subsubsection{Proofs of Lemma~\ref{lem:convex_result} and~\ref{lem:sgd} for the streaming SGLD algorithm~\ref{alg:sgld}}
\begin{proof}[Proof of Lemma~\ref{lem:convex_result}]
By the definitions of $w_t$ and $\tilde{w}_t$,
\begin{align*}
\lrn{ w_t - w }_2^2 
&= \lrn{ \tilde{w}_t - \eta_t \nabla\ell(\tilde{w}_t,z_t) - w }_2^2 \\
&= \lrn{ \tilde{w}_t - w }_2^2 - 2 \eta_t \lrw{ \nabla\ell(\tilde{w}_t,z_t) , \tilde{w}_t - w} + \eta_t^2 \lrn{ \nabla \ell(\tilde{w}_t,z_t) }_2^2.
\end{align*}

We now take expectation with respect to $z_t$, conditioned on $\tilde{w}_t$, to obtain
\begin{align*}
\rE_{z_t|\tilde{w}_t}
\lrn{ w_t - w }_2^2 
&\leq \lrn{ \tilde{w}_t - w }_2^2 - 2 \eta_t \lrw{ \nabla f(\tilde{w}_t) , \tilde{w}_t - w} + \eta_t^2  G_\ell^2 \\
&\leq \lrn{ \tilde{w}_t - w }_2^2 - 2 \eta_t \lrp{ f(\tilde{w}_t) - f(w) } + \eta_t^2 G_\ell^2.
\end{align*}
The last step follows from the convexity of $f$.
Therefore, the desired bound follows.
\end{proof}

\begin{proof}[Proof of Lemma~\ref{lem:sgd}]
We first denote the measures corresponding to random variables $w_t$ and $\tilde{w}_t$ to be: $w_t\sim\mu_t$ and $\tilde{w}_t\sim\tilde{\mu}_t$. 
From the definitions, we know that they have densities $p_t$ and $\tilde{p}_t$.

Denote an optimal coupling between $\tilde{\mu}_t$ and $\mu$ (where measure $\mu$ has density $p$, which is the stationary distribution) to be $\gamma\in\Gamma_{opt}(\tilde{\mu}_t, \mu)$.
We then take expectations over $\gamma(\tilde{w}_t, w)$ on both sides of Eq.~\eqref{eq:convex_result}, $\forall z\in\Omega$:
\begin{align*}
&2 \eta_t \Ep{(\tilde{w}_t,w)\sim\gamma}{ \Ep{z}{ \ell(\tilde{w}_t, z)- \ell(w, z) } } \\
&\leq \Ep{(\tilde{w}_t,w)\sim\gamma}{ \|\tilde{w}_t -w \|_2^2 } 
- \Ep{(\tilde{w}_t,w)\sim\gamma}{ \rE_{w_t}\lrb{ \|w_{t}-w\|_2^2 | \widetilde{w}_t } } + \eta_t^2 G_\ell^2 \\
&= W_2^2(\tilde{p}_t, p) - \Ep{(\tilde{w}_t,w)\sim\gamma}{ \rE_{w_t}\lrb{ \|w_{t}-w\|_2^2 | \widetilde{w}_t } } + \eta_t^2 G_\ell^2.
\end{align*}
% \begin{align*}
% &2 \eta_t \Ep{(\tilde{w}_t,w)\sim\gamma}{ \Ep{z}{ \ell(\tilde{w}_t, z)- \ell(w, z) } } \\
% &\leq \Ep{(\tilde{w}_t,w)\sim\gamma}{ \|\tilde{w}_t -w \|_2^2 } 
% - \Ep{(\tilde{w}_t,w)\sim\gamma}{ \rE_{w_t}\lrb{ \|w_{t}-w\|_2^2 | \widetilde{w}_t } } + \eta_t^2 \Ep{(\tilde{w}_t,w)\sim\gamma}{ \lrn{ \nabla \ell(\tilde{w}_t) }_2^2 } \\
% &= W_2^2(\tilde{p}_t, p) - \Ep{(\tilde{w}_t,w)\sim\gamma}{ \rE_{w_t}\lrb{ \|w_{t}-w\|_2^2 | \widetilde{w}_t } } + \eta_t^2 \Ep{(\tilde{w}_t,w)\sim\gamma}{ \lrn{ \nabla \ell(\tilde{w}_t) }_2^2 }.
% \end{align*}

From the relationship $w_t = \tilde{w}_t - \eta_t \nabla \ell(\tilde{w}_t, z_t)$, we know that conditional on $z_t$, the joint distribution of $(w_t, w)$ is $\lrp{\mathrm{id} - \eta_t \nabla\ell, \, \mathrm{id}}_{\#} \gamma$.
Note that $\tilde{\gamma} = \lrp{\mathrm{id} - \eta_t \nabla\ell, \, \mathrm{id}}_{\#} \gamma$ also defines a coupling, and therefore
\begin{align*}
&\Ep{(\tilde{w}_t,w)\sim\gamma}{ \rE_{w_t}\lrb{ \|w_{t}-w\|_2^2 | \widetilde{w}_t } } \\
&= \Ep{z_t}{ \Ep{(\tilde{w}_t,w)\sim\gamma}{ \rE_{w_t}\lrb{ \|w_{t}-w\|_2^2 | \widetilde{w}_t, z_t } } } \\
&= \Ep{z_t}{ \Ep{(w_t,w)\sim\tilde{\gamma}}{ \|w_{t}-w\|_2^2 | z_t } } \\
&\geq \inf_{\hat{\gamma}\in\Gamma(\mu_t,\mu)} \Ep{(w_t,w)\sim\hat{\gamma}}{ \|w_{t}-w\|_2^2 }
= W_2^2(p_t,p).
\end{align*}
Plugging this result and the Lipschitz assumption on $\ell$ in, 
we obtain that
\[
2 \widetilde{\eta}_t [\ell(\widetilde{p}_t)- \ell(p)]
\leq  W_2(\widetilde{p}_t,p)^2 - W_2(p_t,p)^2
+ \widetilde{\eta}_t^2 G_\ell^2 . 
\]
\end{proof}

\subsection{Proofs of Lemmas in the Lipschitz Smooth Case}
\subsubsection{Proofs of Lemmas~\ref{lem:grad-square-Q-loss} and~\ref{lem:smooth-grad-square-bound} for the full gradient Langevin algorithm~\ref{alg:langevin}}
\begin{proof}[Proof of Lemma~\ref{lem:grad-square-Q-loss}]
By the geodesic convexity of the entropy function $H(p) = \beta \rE_{w\sim p}\lrb{ \ln p(w) }$,
\[
H(\widetilde{p_t}) - H(p) \geq \beta \int \lrw{ \nabla \ln p(w'), \lrp{ T_{p'}^{p} - \mathrm{id} }(w') } p(w') \ d w'.
\]
where $T_{p}^{\widetilde{p}_t}$ is the optimal transport from $p$ to $\widetilde{p}_t$.
Using optimal coupling $\mu_t\in\Pi(\widetilde{p},p)$,
\[
H(\widetilde{p_t}) - H(p) \geq \beta \rE_{w,w'\sim\mu_t} \lrb{ \lrw{ \nabla \ln p(w'), w- w' }}.
\]
In addition, convexity of $f$ and $g$ implies that
\[
\rE_{w,w' \sim\mu_t} 
[g(w) - g(w')] \geq \rE_{w,w'\sim\mu_t} \lrb{ \lrw{ \nabla g(w'), w- w'}}
\]
and 
\[
\rE_{w,w' \sim\mu_t} 
[f(w) - f(w')] \geq \rE_{w,w'\sim\mu_t} \lrb{ \lrw{ \nabla f(w'), w- w' } }.
\]
Adding the above three inequalities, and note that the following holds point-wise
\[
\beta \nabla \ln p(w') + \nabla g(w') + \nabla f(w') = 0 ,
\]
we obtain that
\[
H(\widetilde{p_t}) - H(p) + \rE_{w,w' \sim\mu_t} [g(w) - g(w')] \geq - \rE_{w,w'\sim\mu_t} \lrb{ \lrw{ \nabla f(w'), w- w' }},
\]
and that
\begin{align}
Q(\widetilde{p_t}) - Q(p)
\geq 
\rE_{w,w' \sim\mu_t} 
[f(w) - f(w')] - 
\rE_{w,w'\sim\mu_t} \lrb{ \lrw{ \nabla f(w'), w- w' }}.     \label{eq:Q_diff_U_diff}
\end{align}

Since the potential function $f(w)$ is convex and $L$-smooth, 
\begin{equation}
f(w) \geq f(w') + \nabla f(w')^\top (w-w') + \frac{1}{2L} \|\nabla f(w')-\nabla f(w)\|_2^2 .
\label{eq:smooth}
\end{equation}
Combining equations~\eqref{eq:Q_diff_U_diff} and~\eqref{eq:smooth}, we obtain the desired bound. 
\end{proof}

% \begin{proof}[Proof of Lemma~\ref{lem:smooth-grad-square-bound}]
% We have
% \begin{align*}
% \rE_{w \sim p} \|\nabla f(w)\|_2^2 
% \leq &
% 2  \rE_{w \sim p} \|\nabla f(w)-\nabla f(w_*)\|_2^2
% + 2 \|\nabla f(w_*)\|_2^2 \\
% \leq & 
% 4 L \rE_{w \sim p} \left[ f(w)- f(w_*) - \nabla f(w_*)^\top (w-w_*)\right] +2 m^2 \|w_*\|_2^2 .
% \end{align*}
% Now, let $p_0$ be the normal distribution $N(w_*,(\beta/m) I)$, and define
% \begin{align}
% \Delta f(w)&= f(w)-f(w_*)-\nabla f(w_*)^\top (w-w_*) , \label{eq:Delta_U_def}\\
% \Delta g(w) &= g(w) - g(w_*)- \nabla g(w_*)^\top (w-w_*) ,
% \end{align}
% then $p$ can be expressed as
% \[
% p \propto \exp(-\beta^{-1}(\Delta f(w) + \Delta g(w))) ,
% \]
% which is the solution of
% \[
% p = \arg\min_p \rE_{w \sim p} \left[ \Delta f(w) + \beta \ln \frac{p(w)}{p_0(w)}\right] .
% \]
% Therefore 
% \begin{align}
% \rE_{w \sim p} \Delta f(w) \leq&
% \rE_{w \sim p} \left[ \Delta f(w) + \beta \ln \frac{p(w)}{p_0(w)}\right] \nonumber\\
% =& -\beta
% \ln \rE_{w \sim p_0} \exp(-\beta^{-1} \Delta f(w)) \nonumber\\
% \leq& 
% -\beta
% \ln \rE_{w \sim p_0} \exp(- 0.5 \beta^{-1} (w-w_*)^\top H (w-w_*)) \nonumber\\
% =&  0.5 \beta \ln |I + H/m| 
% \leq 0.5 \beta \trace(H/m) .
% \label{eq:Delta_U_bound}
% \end{align}
% This proves the desired result. 
% \end{proof}

%
\begin{proof}[Proof of Lemma~\ref{lem:smooth-grad-square-bound}]
We have
\begin{align*}
\rE_{w \sim p} \|\nabla f(w)\|_2^2 
&\leq
2  \rE_{w \sim p} \|\nabla f(w)-\nabla f(w_*)\|_2^2
+ 2 \|\nabla f(w_*)\|_2^2 \\
&\leq
2 \rE_{w \sim p}\lrn{ \lrp{\int_0^1 \nabla^2 f\lrp{\tau w + (1-\tau)w_*} d \tau}\lrp{w-w_*} }^2 +2 m^2 \|w_*\|_2^2 \\
&\leq 2 \rE_{w\sim p} \lrn{H (w-w_*)}^2 + 2 m^2 \|w_*\|_2^2.
\end{align*}
In what follows we bound $\rE_{w\sim p} \lrn{H (w-w_*)}^2$.
We first note that the posterior density:
\[
p \propto \exp(-\beta^{-1}( f(w) + g(w)))
\]
is $\frac{m}{\beta}$ strongly log-concave. 
By \cite{Harge_2004}, for any convex function $h$,
\[
\rE_{w\sim p} \lrb{ h\lrp{w-\rE_{w\sim p} \lrb{w}} }
\leq \rE_{\tilde{w}\sim\mathcal{N}\lrp{0,\beta/m \cdot \mI}} \lrb{ h(\tilde{w}) }.
\]
Taking $h(w) = \lrn{H w}^2$, we obtain that 
\[
\rE_{w\sim p} \lrn{ H\lrp{w-\rE_{w\sim p} \lrb{w}} }^2
\leq \rE_{\tilde{w}\sim\mathcal{N}\lrp{0,\beta/m \cdot \mI}} \lrn{ H \tilde{w} }^2
= \frac{\beta}{m} \trace\lrp{H^2}.
\]
On the other hand, by the celebrated relation between mean and mode for 1-unimodal distributions~\citep[see, e.g., Theorem $7$ of][]{Basu_96}, 
\[
\lrp{w_*-\rE_{w\sim p}[w]}^\top \Sigma^{-1} \lrp{w_*-\rE_{w\sim p}[w]} \leq 3,
\]
where $\Sigma$ is the covariance of $p$.
This results in the following bound
\[
\lrn{H\lrp{w_*-\rE_{w\sim p}[w]}}^2 \leq 3 \frac{\beta}{m} \lrn{H}_2^2.
\]
Combining the two bounds, we obtain that 
\begin{align*}
\rE_{w\sim p} \lrn{H (w-w_*)}^2
&\leq 2 \rE_{w\sim p} \lrn{ H\lrp{w-\rE_{w\sim p} \lrb{w}} }^2 + 2 \lrn{H\lrp{w_*-\rE_{w\sim p}[w]}}^2 \\
&\leq 2 \frac{\beta}{m} \trace\lrp{H^2} + 6 \frac{\beta}{m} \lrn{H}_2^2
\leq 8 \frac{\beta}{m} \trace\lrp{H^2}.
\end{align*}
\end{proof}

\subsubsection{Proofs of Lemma~\ref{lem:sgd_conv} and~\ref{lem:smooth-SG-square-bound} for the SGLD algorithm~\ref{alg:sgld}}
\begin{proof}[Proof of Lemma~\ref{lem:sgd_conv}]
By the definitions of $w_t$ and $\tilde{w}_t$,
\begin{align*}
\lrn{ w_t - w }_2^2 
&= \lrn{ \tilde{w}_t - \eta_t \nabla\widetilde{f}(\tilde{w}_t,\mathcal{S}) - w }_2^2 \\
&= \lrn{ \tilde{w}_t - w }_2^2 - 2 \eta_t \lrw{ \nabla\widetilde{f}(\tilde{w}_t,\mathcal{S}) , \tilde{w}_t - w} + \eta_t^2 \lrn{ \nabla \widetilde{f}(\tilde{w}_t,\mathcal{S}) }_2^2.
\end{align*}
We now take expectation with respect to $\mathcal{S}$, conditioned on $\tilde{w}_t$, to obtain
\begin{align}
\rE_{w_t|\tilde{w}_t}
\lrn{ w_t - w }_2^2 
&= \lrn{ \tilde{w}_t - w }_2^2 - 2 \eta_t \lrw{ \nabla f(\tilde{w}_t) , \tilde{w}_t - w} + \eta_t^2 \rE_{\mathcal{S}|\tilde{w}_t} \lrn{ \nabla \widetilde{f}(\tilde{w}_t,\mathcal{S}) }_2^2 \nonumber\\
&\leq \lrn{ \tilde{w}_t - w }_2^2 - 2 \eta_t \lrp{ f(\tilde{w}_t) - f(w) } + \eta_t^2 \rE_{\mathcal{S}|\tilde{w}_t} \lrn{ \nabla \widetilde{f}(\tilde{w}_t,\mathcal{S}) }_2^2.
\label{eq:w_dist_bound}
\end{align}

We then upper bound $\rE_{\mathcal{S}|\tilde{w}_t} \lrn{ \nabla \widetilde{f}(\tilde{w}_t,\mathcal{S}) }_2^2$ by introducing variable $w'$ that is distributed according to $p$ and couples optimally with the law of $\tilde{w}_t$:
\begin{align}
\rE_{\mathcal{S}|\tilde{w}_t} \lrn{ \nabla \widetilde{f}(\tilde{w}_t,\mathcal{S}) }_2^2
\leq 2 \rE_{\mathcal{S}|\tilde{w}_t} \lrn{ \nabla \widetilde{f}(\tilde{w}_t,\mathcal{S}) - \nabla \widetilde{f}(w',\mathcal{S}) }^2 + 2 \rE_{\mathcal{S}|\tilde{w}_t} \lrn{ \nabla \widetilde{f}(w',\mathcal{S}) }^2.
\label{eq:SG_square_bound}
\end{align}

For function $\widetilde{f}$ being $L_\ell$-Lipschitz smooth,
\begin{equation*}
\widetilde{f}(\tilde{w}_t,\mathcal{S}) \geq \widetilde{f}(w',\mathcal{S}) + \nabla \widetilde{f}(w',\mathcal{S})^\top (w-w') + \frac{1}{2 L_\ell} \|\nabla \widetilde{f}(w',\mathcal{S}) - \nabla \widetilde{f}(\tilde{w}_t,\mathcal{S}) \|_2^2.
\end{equation*}
Taking expectation over the randomness of minibatch assignment $\mathcal{S}$ on both sides leads to the fact that
\begin{equation*}
f(\tilde{w}_t) \geq f(w') + \nabla f(w')^\top (w-w') + \frac{1}{2 L_\ell} \rE_{\mathcal{S}|\tilde{w}_t} \| \nabla \widetilde{f}(w',\mathcal{S}) - \nabla \widetilde{f}(\tilde{w}_t,\mathcal{S}) \|_2^2 .
\end{equation*}
Combining this equation with equation~\eqref{eq:Q_diff_U_diff}, we adapt Lemma~\ref{lem:grad-square-Q-loss} to the stochastic gradient method:
\[
\rE_{ (\tilde{w}_t,w') \sim \mu_t} \lrb{ \rE_{\mathcal{S}|\tilde{w}_t} 
\| \nabla \widetilde{f}(w',\mathcal{S}) - \nabla \widetilde{f}(\tilde{w}_t,\mathcal{S}) \|_2^2 }
\leq 2 L_\ell [Q(\widetilde{p}_t)-Q(p)]  .
\]

Applying this result to equation~\eqref{eq:SG_square_bound} and taking expectation of $ (\tilde{w}_t,w') \sim \mu_t $ on both sides, we obtain:
\[
\rE_{\tilde{w}_t \sim \widetilde{p}_t} \lrb{ \rE_{\mathcal{S}|\tilde{w}_t} \lrn{ \nabla \widetilde{f}(\tilde{w}_t,\mathcal{S}) }_2^2 }
\leq 4 L_\ell [Q(\widetilde{p}_t)-Q(p)] + 2 \rE_{ (\tilde{w}_t,w') \sim \mu_t} \lrb{ \rE_{\mathcal{S}|\tilde{w}_t} \lrn{ \nabla \widetilde{f}(w',\mathcal{S}) }^2 }.
\]
Therefore, 
\begin{align*}
\rE_{ (\tilde{w}_t,w') \sim \mu_t}  \lrb{ \rE_{w_t|\tilde{w}_t}
\lrn{ w_t - w }_2^2 }
&\leq \rE_{ (\tilde{w}_t,w') \sim \mu_t} \lrb{ \lrn{ \tilde{w}_t - w }_2^2 } - 2 \eta_t \lrp{ f(\widetilde{p}_t) - f(p) } \\
&+ \eta_t^2 \lrp{ 4 L_\ell [Q(\widetilde{p}_t)-Q(p)] + 2 \rE_{ (\tilde{w}_t,w') \sim \mu_t} \lrb{ \rE_{\mathcal{S}|\tilde{w}_t} \lrn{ \nabla \widetilde{f}(w',\mathcal{S}) }^2 } },
\end{align*}
leading to the final result that
\begin{align*}
W_2^2(p_t, p)
&\leq \rE_{ (\tilde{w}_t,w') \sim \mu_t}  \lrb{ \rE_{w_t|\tilde{w}_t}
\lrn{ w_t - w }_2^2 } \\
&\leq W_2^2(\widetilde{p}_t, p) - 2 \eta_t \lrp{ f(\widetilde{p}_t) - f(p) } \\
&+ \eta_t^2 \lrp{ 4 L_\ell [Q(\widetilde{p}_t)-Q(p)] + 2 \rE_{ (\tilde{w}_t,w') \sim \mu_t} \lrb{ \rE_{\mathcal{S}|\tilde{w}_t} \lrn{ \nabla \widetilde{f}(w',\mathcal{S}) }^2 } }.
\end{align*}
\end{proof}

\begin{proof}[Proof of Lemma~\ref{lem:smooth-SG-square-bound}]
We have
\begin{align*}
\rE_{\mathcal{S}|\tilde{w}_t} \lrn{ \nabla \widetilde{f}(w',\mathcal{S}) }^2
\leq &
2  \rE_{\mathcal{S}|\tilde{w}_t} \| \nabla \widetilde{f}(w',\mathcal{S})-\nabla \widetilde{f}(w_*,\mathcal{S}) \|_2^2
+ 2 \rE_{\mathcal{S}|\tilde{w}_t} \| \nabla \widetilde{f}(w_*,\mathcal{S}) \|_2^2 \\
\leq & 
4 L_\ell \left[ f(w)- f(w_*) - \nabla f(w_*)^\top (w-w_*)\right] +2 \rE_{\mathcal{S}|\tilde{w}_t} \| \nabla \widetilde{f}(w_*,\mathcal{S}) \|_2^2.
\end{align*}
Taking expectation on both sides, we obtain that 
\begin{align}
\rE_{ (\tilde{w}_t,w') \sim \mu_t} \lrb{ \rE_{\mathcal{S}|\tilde{w}_t} \lrn{ \nabla \widetilde{f}(w',\mathcal{S}) }^2 } 
&\leq 4 L_\ell \rE_{w \sim p} \left[ f(w)- f(w_*) - \nabla f(w_*)^\top (w-w_*)\right] \nonumber\\
&+ 2 \rE_{\mathcal{S}} \lrn{ \nabla \widetilde{f}(w_*,\mathcal{S}) }^2. 
\label{eq:sg_bound}
\end{align}

We now upper bound $\rE_{w \sim p} \left[ f(w)- f(w_*) - \nabla f(w_*)^\top (w-w_*)\right]$.
Let $p_0$ be the normal distribution $N(w_*,(\beta/m) I)$, and define
\begin{align}
\Delta f(w)&= f(w)-f(w_*)-\nabla f(w_*)^\top (w-w_*) , \label{eq:Delta_U_def}\\
\Delta g(w) &= g(w) - g(w_*)- \nabla g(w_*)^\top (w-w_*) ,
\end{align}
then $p$ can be expressed as
\[
p \propto \exp(-\beta^{-1}(\Delta f(w) + \Delta g(w))) ,
\]
which is the solution of
\[
p = \arg\min_p \rE_{w \sim p} \left[ \Delta f(w) + \beta \ln \frac{p(w)}{p_0(w)}\right] .
\]
Therefore 
\begin{align}
\rE_{w \sim p} \Delta f(w) \leq&
\rE_{w \sim p} \left[ \Delta f(w) + \beta \ln \frac{p(w)}{p_0(w)}\right] \nonumber\\
=& -\beta
\ln \rE_{w \sim p_0} \exp(-\beta^{-1} \Delta f(w)) \nonumber\\
\leq& 
-\beta
\ln \rE_{w \sim p_0} \exp(- 0.5 \beta^{-1} (w-w_*)^\top H (w-w_*)) \nonumber\\
=&  0.5 \beta \ln |I + H/m| 
\leq 0.5 \beta \trace(H/m) .
\label{eq:Delta_U_bound}
\end{align}

We then decompose $\rE_{\mathcal{S}} \lrn{ \nabla \widetilde{f}(w_*,\mathcal{S}) }^2$:
\[
\rE_{\mathcal{S}} \lrn{ \nabla \widetilde{f}(w_*,\mathcal{S}) }^2
\leq 2 \lrn{ \nabla f(w_*) }^2 + 2 \rE_{\mathcal{S}} \lrn{ \nabla f(w_*) - \nabla \widetilde{f}(w_*,\mathcal{S}) }^2.
\]
Since $w_*$ is the minimum of $f+g$, $\nabla f(w_*) = - \nabla g(w_*) = - m w_*$.
Hence
\begin{align}
\rE_{\mathcal{S}} \lrn{ \nabla \widetilde{f}(w_*,\mathcal{S}) }^2
\leq 2 m^2 \lrn{w_*}^2 + 2 \rE_{\mathcal{S}} \lrn{ \nabla f(w_*) - \nabla \widetilde{f}(w_*,\mathcal{S}) }^2.
\label{eq:grad_bound}
\end{align}

Plugging inequalities~\eqref{eq:Delta_U_bound} and~\eqref{eq:grad_bound} into inequality~\eqref{eq:sg_bound} proves the desired result that
\begin{align}
\rE_{ (\tilde{w}_t,w') \sim \mu_t} \lrb{ \rE_{\mathcal{S}|\tilde{w}_t} \lrn{ \nabla \widetilde{f}(w',\mathcal{S}) }^2 } 
\leq \frac{2 \beta L_\ell}{m} \trace(H)
+ 4 m^2 \lrn{w_*}^2 + 4 \rE_{\mathcal{S}} \lrn{ \nabla f(w_*) - \nabla \widetilde{f}(w_*,\mathcal{S}) }^2. 
\end{align}
\end{proof}

\section{Conclusion}

This paper investigated the convergence of Langevin algorithms with strongly log-concave posteriors. We assume that the strongly log-concave posterior can be decomposed into two parts, with one part being simple and explicitly integrable with respect to the underlying SDE. This is analogous to the situation of proximal gradient methods in convex optimization. 
Using a new analysis technique which mimics the corresponding analysis of convex optimization, we obtain convergence results for Langenvin algorithms that are independent of dimension, both for Lipschitz and for a large class of smooth convex problems in machine learning. Our result addresses a long-standing puzzle with respect to the convergence of the Langevin  algorithms. 
We note that the current work focused on the standard Langevin algorithm, and the resulting convergence rate in terms of $\epsilon$ dependency is inferior to the best known results leveraging underdamped or even higher order Langevin dynamics such as~\cite{Xiang_underdamped,Dalalyan_underdamped,shen2019randomized,Yian_underdamped,MCMC_higher}, which corresponds to accelerated methods in optimization. It thus remains open to investigate whether dimension independent bounds can be combined with these accelerated methods to improve $\epsilon$ dependence as well as condition number dependence.

%\begin{thebibliography}{10}
%\include{bibliography}
%   %\input{main_merged.bbl}
%\end{thebibliography}

%\include{bibliography}

%\bibliographystyle{plainnat}
%\bibliography{myrefs}

%\input{main_merged.bbl}

\end{document}